%% file: main.tex
\pdfoutput=1
\documentclass[11pt]{article}
\usepackage[margin=1in]{geometry}
\usepackage{times}
\usepackage{amsmath,amssymb,amsthm}
\newtheorem{theorem}{Theorem}
\newtheorem{lemma}{Lemma}
\newtheorem{proposition}{Proposition}
\newtheorem{corollary}{Corollary}

\newtheorem{assumption}{Assumption}
\theoremstyle{definition}

\newtheorem{definition}{Definition}

\theoremstyle{plain}

\newcommand{\BFr}{\mathbf{r}}
\newcommand{\BFm}{\mathbf{m}}
\newcommand{\BFg}{\mathbf{g}}
\newcommand{\BFM}{\mathbf{M}}
\def\halmos{\mbox{\quad$\square$}}
\def\argmax{\mathop{\rm arg\,max}}
\providecommand{\OneAndAHalfSpacedXII}{}

\newenvironment{APPENDICES}{\appendix}{}
\makeatletter
\@for\@bfl:={a,b,c,d,e,f,g,h,i,j,k,l,m,n,o,p,q,r,s,t,u,v,w,x,y,z}\do{%
  \expandafter\xdef\csname BF\@bfl\endcsname{\noexpand\mathbf{\@bfl}}}
\@for\@bfu:={A,B,C,D,E,F,G,H,I,J,K,L,M,N,O,P,Q,R,S,T,U,V,W,X,Y,Z}\do{%
  \expandafter\xdef\csname BF\@bfu\endcsname{\noexpand\mathbf{\@bfu}}}
\makeatother
\usepackage{times}
\RequirePackage{bm}
\RequirePackage{endnotes}
\OneAndAHalfSpacedXII 
\usepackage{algorithm}
\usepackage{algpseudocode}
\usepackage{longtable}
\usepackage{tikz}
\usepackage{personal}

\usepackage{nomencl}
\usepackage{etoolbox}
\makenomenclature

\renewcommand{\nomgroup}[1]{%
\item[\bfseries
\ifstrequal{#1}{P}{Problem Setup}{%
\ifstrequal{#1}{B}{Bayesian / $\alpha$-Posterior}{%
\ifstrequal{#1}{R}{Assumptions and Regularity}{%
\ifstrequal{#1}{A}{Algorithm and Analysis}{}}}}%
]}

\usepackage[sort&compress]{natbib}
 \bibpunct[, ]{[}{]}{,}{n}{}{,}%

\begin{document}


%

\title{Posterior Tempering Explains Variance Inflation in Linear and Generalized Linear Thompson Sampling}
\author{%
Prateek Jaiswal\thanks{Daniels School of Business, Purdue University. Email: \texttt{jaiswalp@purdue.edu}}
\and
Debdeep Pati\thanks{Department of Statistics, University of Wisconsin-Madison. Email: \texttt{dpati2@wisc.edu}}
\and
Anirban Bhattacharya\thanks{Department of Statistics, Texas A\&M University. Email: \texttt{anirbanb@tamu.edu}}
\and
Bani K. Mallick\thanks{Department of Statistics, Texas A\&M University. Email: \texttt{bkmallick@tamu.edu}}
}
\date{}
\maketitle
\begin{abstract}
We study a variant of the Thompson Sampling~(TS) algorithm, called $\alpha$-TS, for solving stochastic generalized linear bandit problems.  Existing analyses of TS require inflating the posterior variance to derive near-optimal regret guarantees. We formalize the idea of variance inflation by introducing $\alpha$-TS that uses a fractional or $\alpha$-posterior instead of the standard posterior. 
Our main contribution is to identify general regularity conditions on the prior and reward distributions that enable a regret analysis of $\alpha$-TS without assuming any tractable approximation of the posterior distribution, unlike previous works. For a specific choice of $\alpha \propto d^{-1}$, our general regret bound yields the best known regret bound of $O(d^{3/2}\sqrt{T}\log T)$ for both the exponential and sub-Gaussian families of reward distributions. 
We further provide an $\alpha$-dependent lower bound showing that the regret constant depends on the product $\alpha d$, and that when $\alpha \propto  d^{-1}$ the regret scales as $\Omega(d^{3/2}\sqrt{T})$, explaining the origin of the $d^{3/2}$ factor in the upper bound. 
Our proof technique  adapts and combines recent advancements in the analysis of linear bandit problems with first- and second-order posterior concentration theory from the Bayesian statistics literature.
\end{abstract}

\noindent\textbf{Keywords:} Thompson sampling, generalized linear models, frequentist regret bounds, linear bandits, finite-sample Bernstein-von Mises, posterior concentration

\input{Intro}

\input{LitRev}
\input{Problem}
\input{Assumptions}
\input{LowerBound}
\input{Analysis}
\input{Conclusion}

\begingroup \parindent 0pt \parskip 4ex
\def\enotesize{\normalsize} 
\theendnotes
\endgroup
\bibliographystyle{plainnat} 
\bibliography{refs} 

 \begin{APPENDICES}
 \input{Nomenclature}

\input{Supp}
 \end{APPENDICES}

\end{document}

%% file: Intro.tex
\section{Introduction}
The bandit framework~\citep{Robins1952} is widely used to model various sequential decision-making problems with many applications in healthcare, advertising, resource allocation, robotics, material discovery, etc. A central challenge in such problems is balancing exploration and exploitation. To solve such sequential decision-making problems, the decision-maker must balance exploration and exploitation.  In particular, the decision-maker must choose between exploring less-understood actions to gain new information and exploiting existing knowledge to select the (statistically) best action. This work considers a stochastic generalized linear bandit problem with compact decision space ($\sA$ $\subset \R^d$) and reward distribution with generalized linear mean.

Thompson Sampling (TS)~\citep{THOMPSON1933} and \textit{optimism-in-the-face-of-uncertainty} (OFU) \citep{lai1985asymptotically} are two broad classes of algorithms to solve various versions of the bandit problem. The OFU algorithms compute a confidence set for the unknown coefficients in the reward model and then select an optimistic estimate of the coefficient and decision that maximizes the estimated outcome. TS instead uses a Bayesian heuristic (posterior distribution) to address the exploration-exploitation dilemma.  TS samples model parameters from the posterior and selects actions that are optimal under the sampled parameters. The theoretical performance of such algorithms is typically measured by computing a bound on a term called regret, which is the total sum of the loss incurred by the algorithm over past trials.  At each round, regret arises from selecting a suboptimal action instead of the oracle's optimal action. In this work, we theoretically analyze a version of TS to solve the generalized linear bandit (LB) problem.

Agrawal and Goyal~\cite{agrawal2014thompsonsamplingcontextualbandits}, in their pioneering work, analyze TS for LB problems and derive a state-of-the-art regret bound of $O(d^{3/2}\sqrt{T}\log T)$, where $T$ is the number of trials. 
Their regret bound is very close to the lower bound of $\Omega(d\sqrt{T})$ for LB derived in~\cite{Dani2008}. The other important work by~\cite{abeille2017linear} computes a similar upper bound by using a novel proof technique.  
Both works consider a TS algorithm for LB with sub-Gaussian errors; however, the sampling distribution coincides with the posterior distribution only when the errors are centered Gaussian and the prior distribution on the unknown coefficients is also Gaussian (conjugate setting). The closed-form expression of the Gaussian sampling distribution enables them to compute anti-concentration bounds; the availability of which is a crucial component for the analysis in both works. However, recall that the Bayesian heuristic in TS is more general as it is applicable for any prior and reward combination~\citep{li2012open,chapelle2011empirical,urteaga2018nonparametric,hong22b}. This work contributes to further generalizing the analysis of TS with a more general reward and prior model  while maintaining sampling from the Bayesian posterior rather than relying on Gaussian approximations. Our regularity conditions on the reward distributions are satisfied by both sub-Gaussian and exponential families. While the sub-Gaussian family has been extensively used in most of the previous works on LB~\citep{Dani2008,abbasi2011improved,agrawal2013thompsonLIN,abeille2017linear}, the use of the exponential family (generalized linear model (GLM)~\citep{Nelder1972}) is novel to the best of our knowledge.

We study a version of the TS algorithm, termed as $\alpha$-TS, that uses fractional or $\alpha$-posterior~\citep{bhattacharya2019bayesian} instead of the standard posterior distribution. To compute the $\alpha$-posterior distribution, the likelihood in the definition of the standard posterior distribution is tempered with a factor $\alpha$.  
For $\alpha$-TS, we compute a regret bound for any $\alpha\in (0,1)$, but it grows exponentially with $d$. However, by optimally choosing $\alpha$ for the exponential and sub-Gaussian families of distribution, we show that the regret bound is of $O(d^{3/2}\sqrt{T}\log T)$ for $\alpha=d^{-1}$. 
Intuitively, setting $\alpha$ to a value in $(0,1)$ inflates the variance of the posterior distribution. A similar variance inflation factor is also considered in the sampling distribution (or posterior distribution) for the analysis in~\cite{agrawal2013thompsonLIN} and~\cite{abeille2017linear}. The $\alpha$-TS formulation provides a principled way to formalize this variance inflation through fractional posterior sampling. Moreover, we remove the dependence on a hyperparameter $\delta\in(0,1)$ from the TS algorithm and derive regret bounds in expectation, as conjectured by~\cite{agrawal2013thompsonLIN}.

Our focus on $\alpha$-TS is primarily motivated by theoretical considerations. In many analyses of TS for linear and contextual bandits~\citep{agrawal2013thompsonLIN,abeille2017linear}, the sampling distribution is effectively inflated relative to the standard posterior in order to ensure a constant probability of optimism. This inflation typically appears as an additional scaling factor in the covariance and is introduced \emph{ad hoc} in the analysis. The $\alpha$-TS formulation provides a principled way to formalize this phenomenon through {fractional posteriors}~\citep{bhattacharya2019bayesian}: the tempering parameter $\alpha \in (0,1)$ induces a posterior distribution with inflated covariance while preserving a coherent Bayesian interpretation. Importantly, $\alpha$-TS does not introduce a new algorithmic procedure—the resulting method coincides with variance-inflated Linear Thompson Sampling (LinTS) as studied in~\cite{agrawal2013thompsonLIN,abeille2017linear}. This perspective unifies existing variance-inflation heuristics within a coherent probabilistic framework. Our contribution is therefore theoretical: we provide a principled statistical interpretation of variance inflation in TS and develop a general regret analysis framework that applies under broad structural conditions on the prior and reward model.

The role of $\alpha$ is particularly important in the regret analysis. Without variance inflation, the posterior concentrates rapidly, making optimistic samples increasingly unlikely as the dimension grows. This leads to regret bounds with an undesirable exponential dependence on $d$. Choosing $\alpha \le d^{-1}$ inflates the sampling covariance sufficiently to maintain a constant probability of optimism, effectively counteracting this concentration and eliminating the exponential dependence. This yields the near-optimal $O(d^{3/2}\sqrt{T}\log T)$ regret rate. We further clarify this phenomenon in the analysis and provide a complementary lower bound showing how the regret depends explicitly on the product $\alpha d$ through the tempering of the posterior.

A closely related work by~\cite{jaiswal2022} also considers $\alpha$-posterior for TS, but they focus on studying the MAB problem. Also, it is unclear whether their analysis can be extended to a linear bandit setting. While the $K$-armed bandit can be viewed as a special case of the linear bandit obtained by restricting the action set to the canonical basis vectors, it is well known that regret guarantees for linear bandits do not automatically recover the sharp bounds in the MAB setting. This is because the analyses rely on fundamentally different structural properties: in MAB, the uncertainty of each arm decreases directly with the number of pulls, whereas in linear bandits the uncertainty is governed by the geometry of the design matrix. As discussed in~\cite{agrawal2013thompsonLIN}, this mismatch prevents linear bandit analyses from recovering MAB-optimal rates. By substituting $\sA$ with a $K$-dim basis (to convert it into a K-arm Bandit problem), our bound does not recover the bounds in~\cite{jaiswal2022} (same as the bounds in~\cite{agrawal2013thompsonLIN} do not recover the state-of-the-art bounds for the MAB setting). Moreover, the bounds in~\cite{jaiswal2022} hold for any $\alpha\in(0,1)$, whereas here, $\alpha$ must scale as $d^{-1}$ to derive optimal dimension-dependent bounds.

Our regret analysis builds on the idea of partitioning actions into saturated and unsaturated sets, as introduced in~\cite{agrawal2013thompsonLIN}. However, we provide a different set of proof arguments to derive regret bounds in expectation rather than in high probability. Computing the regret bounds requires deriving both posterior concentration and anti-concentration properties. Unlike previous works~\cite{agrawal2013thompsonLIN,abeille2017linear}, which derive these concentration and anti-concentration properties using closed-form expressions of the sampling distribution (e.g., using Mill's ratio for Gaussian density), we develop a more general approach. Specifically, we adapt ideas from the Bayesian statistics literature~\cite{bhattacharya2019bayesian,ZG,GGV,Spokoiny2012,Panov2015} to establish these properties under very general conditions on the prior and the reward model. While deriving the first-order concentration results for the $\alpha$-posterior, we redefine the conventional metric (which measures the distance of a model parameter from the true parameter) to align with the metric used to measure regret in the linear bandit analysis. To derive anti-concentration properties for any reward and prior distribution without relying on closed-form expressions, we adapt finite-sample version of the second-order posterior concentration results, popularly referred to as the Bernstein-von Mises theorem, as developed in~\cite{Spokoiny2012,Panov2015}. Notably, our results hold under very general conditions on the prior and reward model, enabling a broad generalization of the regret analysis for $\alpha$-TS.

%% file: LitRev.tex
\section{Literature review}

In addition to the works discussed in the introduction, there are various research directions that recent works have pursued for TS to solve LB problems.~\cite{luo_geometry-aware_2023,hamidi_frequentist_2023} focus on improving the regret bound for TS so that it matches the lower bound of $\Omega(d\sqrt{T})$. In particular,~\cite{luo_geometry-aware_2023} proposes a new algorithm that switches between OFU and TS algorithms based on the history of observations and computes minimax optimal frequentist regret guarantees.
\cite{Zhang2021feel} proposes Feel-Good Thompson Sampling and studies the theoretical properties of Thompson Sampling without imposing structural assumptions on the posterior sampling distribution. In particular, \cite{Zhang2021feel} introduces a modified likelihood in which the negative reward log-likelihood is defined as a weighted squared difference between the true reward and a sampled reward. This modified likelihood favors sampling of optimistic models from the posterior distribution and enables the author to derive regret bounds that match the lower bound for linear bandits. However, the construction in~\cite{Zhang2021feel} relies on a finite model class where the suboptimal action produces identical rewards under all candidate models---repeatedly selecting that action yields no information and the posterior remains unchanged. This degeneracy violates the regularity conditions we impose, which require the existence of informative actions that distinguish nearby parameters and allow posterior concentration. 

 There is also increasing interest in using more complex reward models, such as semi-parametric~\citep{greenewald2017action},~nonparametric~\citep{rigollet2010nonparametric,kim2021multi}, and high-dimensional~\citep{bastani2020online,chakraborty2023thompson} reward models in contextual bandit settings.

Several recent works propose frequentist algorithms that achieve strong regret guarantees for contextual bandits~\citep{hamidi_frequentist_2023}, and~\cite{kim2023double} proposes the DDRTS algorithm for the GLM bandit setting. A key distinction between these works and ours is the structure of the action set. These results are typically developed for the \emph{finite-arm} contextual bandit setting, where regret bounds depend on the number of arms $K$, and the analysis exploits concentration together with union bounds over the finite action set. Our work, by contrast, considers the linear bandit formulation where the action set is \emph{infinite} (e.g., a compact subset $\mathcal{X}\subset\mathbb{R}^d$), requiring uniform control over an uncountable family of arms---a substantially different analytical challenge. Moreover, as emphasized above, $\alpha$-TS does not introduce a new algorithmic procedure: it coincides with variance-inflated LinTS, and our contribution lies in the general theoretical framework it enables for analyzing posterior-sampling algorithms under structural conditions on the prior and reward model.

%% file: Problem.tex
\section{Problem setup}
\nomenclature[P]{$\sA$}{Action (decision) space, $\sA \subset \mathbb{R}^d$}
\nomenclature[P]{$d$}{Dimension of the parameter space}
\nomenclature[P]{$T$}{Time horizon (number of rounds)}
\nomenclature[P]{$[n]$}{Set $\{1,2,\dots,n\}$}
\nomenclature[P]{$a_t$}{Action selected at time $t$}
\nomenclature[P]{$r_t$}{Observed reward at time $t$}
\nomenclature[P]{$\theta_0$}{True (unknown) parameter}
\nomenclature[P]{$\Theta$}{Parameter space}
\nomenclature[P]{$P_{\theta}(\cdot \protect \mid a)$}{Reward distribution given action $a$ under parameter $\theta$}
\nomenclature[P]{$p_{\theta}(\cdot \protect \mid a)$}{Conditional reward density under parameter $\theta$}
\nomenclature[P]{$X_t$}{History of observations $\{a_s,r_s\}_{s=1}^t$}
\nomenclature[P]{$\gF_t$}{Filtration representing information up to time $t$}
\nomenclature[P]{$q(\cdot \protect \mid \gF_t)$}{Randomized action selection policy}
\nomenclature[P]{$g(\cdot)$}{Link function (monotone and Lipschitz)}
\nomenclature[P]{$a_0$}{Oracle optimal action}
\nomenclature[P]{$\mReg(T)$}{Cumulative regret up to time $T$}
We consider a stochastic generalized linear bandit problem. We denote the arbitrary action space as $\sA \subset \R^d$ 
and the time horizon as $T$.
For any $n\in \sN$, we denote $[n]$ to represent $\{1,2,3,\ldots, n \}$. In this problem, at each time step $t\in[T]$, the learner chooses an action $a_t\in \sA$ (according to some rule, possibly randomized) and observes a reward $r_t$ as feedback from an unknown environment in response to the learner's action. 
We model the environment by assuming that the reward at time $t$ is generated from a distribution $P_{\theta_0}(\cdot|a_t)$ with conditional density function denoted as $p_{0}(\cdot|a_t)$. 
Here, $\theta_0 \in \Theta\subseteq \R^d$ is an unknown but fixed parameter. 
While interacting with the environment, the learner collects the history of observations, which we denote as $X_t:= \{a_s,r_s\}_{s=1}^{t}$. We define the filtration generated by $X_t$ as $\gF_t := \gF_0\cup\{\sigma(X_t)\}$, that contains all the information till time $t$ including any prior information $\gF_0$. At time $t+1$, the learner takes an action according to a randomized action selection policy $q(\cdot|\gF_t)$, that is $a_{t+1}\sim q(\cdot|\gF_t)$.
We also assume that the conditional mean reward under distribution $P_{\theta}(\cdot|a_t)$ can be represented using a strictly monotonic and Lipschitz link function $g:\R \mapsto \R$ as $g(a_t^\top \theta)$. 

We evaluate a learner (or algorithm) by comparing its action at time $t$ to the oracle best action $a_0:= \argmax_{a\in \sA} g(a^\top \theta_0)$.  In particular, the learner's objective is to minimize the following cumulative regret on a sample path of observations,
\begin{align}\label{eq:Regret}
    \mReg(T) =  \sum_{t=1}^{T} (g(a_0^\top \theta_0)- g(a_t^\top \theta_0)),
\end{align}
which quantifies the total regret of taking suboptimal action $a_t$ instead of the oracle's best action $a_0$. We measure the algorithm's performance by analyzing the $\E[\mReg(T)]$,  where the expectation is taken with respect to the distribution that generates the sequence of observations $X_{T-1}$. Unless stated otherwise, $\E[\cdot]$ denotes expectation with respect to the data-generating distribution. 

\subsection{$\alpha$-Thompson Sampling}
We study a variant of the standard Thompson sampling (TS) algorithm, where we use $\alpha$-posterior instead of the standard posterior distribution ($\alpha=1$). We call this new algorithm as $\alpha$-TS. Recall that TS was proposed by~\cite{THOMPSON1933} to solve a multiarmed bandit problem, where the exploration-exploitation dilemma is addressed using a posterior distribution. The pseudo-code of $\alpha$-TS is provided in Algorithm~\ref{alg:TS}. In $\alpha$-TS, we posit a prior distribution $\Pi(\cdot)$ on a measurable space $(\Theta,\mathcal{Q})$. For any $t\geq 1$ and $\alpha\in(0,1)$, we define the $\alpha$-posterior distribution $\Pi_{t,\alpha}(\cdot)$ for any measurable set $Q\in \mathcal{Q}$ as
\begin{align}
    \Pi_{t,\alpha}(Q|\gF_t) &= \frac{ \int_Q \prod_{s=1}^{t} [p_{\theta}(r_s|a_s)q(a_s|\gF_{s-1})]^{\alpha} \Pi(d\theta)  } { \int_{\Theta}  \prod_{s=1}^{t} [p_{\theta}(r_s|a_s)q(a_s|\gF_{s-1})]^{\alpha} \Pi(d\theta) }
        = \frac{ \int_Q \prod_{s=1}^{t} [p_{\theta}(r_s|a_s)]^{\alpha} \Pi(d\theta)  } { \int_{\Theta}  \prod_{s=1}^{t} [p_{\theta}(r_s|a_s)]^{\alpha} \Pi(d\theta) },
        \label{eq:AlphaPost}
\end{align}
 where $q(a_s|\gF_{s-1})$ is the likelihood (or the randomized action selection rule) of choosing $a_s\in \sA$ given $\gF_{s-1}$ and $p_{\theta}(r_s|a_s)$ is the likelihood of observing $r_s$ given $a_s$. Note that $a_s$ is adapted to $\gF_{s-1}$. Essentially, step $3$ and $4$ of~Algorithm~\ref{alg:TS}, combines together to sample from the randomized action selection rule $q(a_s|\gF_{s-1})$. 
 In case if the action space $\sA$ is a discrete set, then $q(a_s|\gF_{s-1})$ can be precisely defined as the $\alpha$-posterior probability of choosing $a_s$ as the optimal action,
 that is equals to $\Pi_{t,\alpha}(\left\{a_s^\top\theta = \max_{a\in\sA} a^\top\theta \right\}|\gF_{s-1})$.  
 Observe that the definition of the $\alpha$-posterior  distribution in~\eqref{eq:AlphaPost} is unaffected by the probability of choosing the best action $a_s$. In the rest of the paper, we write $E_\alpha[\cdot|\gF_t]$ for expectation with respect to the $\alpha$-posterior distribution.

\nomenclature[B]{$E_{\alpha}[\cdot \protect \mid \gF_t]$}{Expectation under $\alpha$-posterior given $\gF_t$}
\nomenclature[B]{$P_{\alpha}(\cdot \protect \mid \gF_t)$}{Probability under $\alpha$-posterior given $\gF_t$}
\begin{algorithm}[b]
\caption{$\alpha$-Thompson Sampling ($\alpha$-TS)}\label{alg:TS}
\vskip6pt
\begin{algorithmic}
\State{Set $X_0=\{\}$, $g(\cdot)$, and prior distribution $\Pi$ on $\Theta$}
        \For{time step $t=1,2,\ldots T$}
        \State Update $\alpha$-posterior $\Pi_{t,\alpha}(\cdot|\gF_{t-1})$\,
        \State Generate a sample $\theta_t \sim \Pi_{t,\alpha}(\cdot|\gF_{t-1})$\;
        \State Compute an optimal action $a_t= \arg \max_{a\in \sA} g(a^\top \theta_t)$ \;
        \State Observe $r_t $ from $p_0(\cdot|\gF_{t-1}, a_t)$\;
        \State Update $X_t=X_{t-1} \cup \{ a_t,r_t\}$.\;
    \EndFor
\end{algorithmic}
\end{algorithm}

It is evident from Algorithm~\ref{alg:TS} that the true likelihood of generating data in~$\alpha$-TS is 
\(    p_{0}^{(T)}(X_T) := \prod_{t=1}^{T} \left[p_{0}(r_t|a_t)q_{\alpha}(a_t|\gF_{t-1})\right]\),
and we denote the corresponding data-generating distribution as $P_{0}^{(T)}$ till time $T$.
The expectation of the regret is taken with respect to $P_0^{(T-1)}$. For any $\theta \in \Theta$, we also denote the reward generating distribution given a sequence of actions $a^{(T)}:=\{a_1,a_2,\ldots,a_T\}$ as $P_{\theta,a}^{(T)}$ and the corresponding density as
\(    p_{\theta}^{(T)}(r^{(T)}|a^{(T)}) := \prod_{t=1}^{T} p_{\theta}(r_t|a_t).\)
For any $\beta >0$, we denote the $\beta$-R\'enyi divergence between two reward densities $p_{\theta}^{(t)}(r^{(t)}|a^{(t)})$ and $p_{\vartheta}^{(t)}(r^{(t)}|a^{(t)})$ for any $\theta,\vartheta \in \Theta$ as
       \(     D_{\beta}^{(t)}(\theta,\vartheta) := \frac{1}{\beta-1} \log \int p_{\theta}^{(t)}(r^{(t)}|a^{(t)})^{\beta} p_{\vartheta}^{(t)}(r^{(t)}|a^{(t)})^{1-\beta} dr^{(t)}
            = \sum_{s=1}^{t}\frac{1}{\beta-1} \log \int p_{\theta}(r|a_s)^{\beta} p_{\vartheta}(r|a_s)^{1-\beta} dr
            =: \sum_{s=1}^{t} D_{\beta}^{s}(\theta,\vartheta),\)
where $r^{(t)}:= \{r_1,r_2,\ldots,r_t\}$ and the second equality is due to the additivity property of $\alpha$-R\'enyi divergence~\citep[Theorem 28]{van2014renyi}.

\nomenclature[B]{$\Pi$}{Prior distribution over $\Theta$}
\nomenclature[B]{$(\Theta,\mathcal{Q})$}{Measurable parameter space}
\nomenclature[B]{$\Pi_{t,\alpha}$}{Fractional ($\alpha$-) posterior at time $t$}
\nomenclature[B]{$\alpha$}{Tempering parameter in $(0,1)$ controlling posterior variance}

\nomenclature[A]{$\E[\cdot]$}{Expectation with respect to the data-generating distribution}
\nomenclature[A]{$p_0^{(T)}(X_T)$}{Joint data-generating density under true parameter}
\nomenclature[A]{$P_0^{(T)}$}{Data-generating distribution up to time $T$}
\nomenclature[A]{$a^{(T)}$}{Sequence of actions $\{a_1,\dots,a_T\}$}
\nomenclature[A]{$r^{(T)}$}{Sequence of rewards $\{r_1,\dots,r_T\}$}
\nomenclature[A]{$P_{\theta,a}^{(T)}$}{Reward distribution under parameter $\theta$ and action sequence $a^{(T)}$}
\nomenclature[A]{$p_{\theta}^{(T)}(r^{(T)}|a^{(T)})$}{Joint reward density under $\theta$}
\nomenclature[A]{$D_{\beta}^{(t)}(\theta,\vartheta)$}{Rényi divergence of order $\beta$ between reward models up to time $t$}

%% file: Assumptions.tex
\section{Assumptions}
In this section, we provide the list of regularity conditions that we impose on the action space, prior, and the reward model to derive our general regret bound.
We first assume that the action space is compact, which is a standard assumption in the linear bandit literature~\citep {agrawal2013thompsonLIN,abeille2017linear,Dani2008}. 
\begin{assumption}[Action space]~\label{ass:action}
    We assume the $\sA$ is a closed and bounded subset of $\R^d$. In particular, without loss of generality, we assume that for any $a\in \sA$, $\|a\|\leq 1$, where $\|\cdot\|$ is the Euclidean norm. 
\end{assumption}
We also use $\|\cdot\|$ to denote the operator norm if used around a matrix.
We prove our regret bounds under some mild regularity conditions on the reward and prior distributions. 
The first assumption is on the link function $g(\cdot)$.
\begin{assumption}[Link function]~\label{ass:Link}
    The link function is Lipschitz continuous and strictly monotonic, that is, for any $x,y\in \R$, there exists a constant $C_g>0$, such that
             $   |g(x)-g(y)|\leq C_g|x-y|$,
    and there exists an $m>0$ such that $g'(x)> m$.
\end{assumption}

In many works on LB, it is common to assume that the rewards are generated from the following linear model: $
    r_t = a_t^\top \theta_0 + \eta_t$, where $\eta_t$ is a zero mean sub-Gaussian error. Note that for such models, the link function is identity. Moreover, for exponential family reward distributions (defined in~Definition~\ref{ass:rewardExp} formally), the link function is a standard terminology that defines the mean of the respective distribution in terms of its parameters.

Next, we impose another technical condition on the reward distribution that can be shown to be easily satisfied by both sub-Gaussian (Lemma~\ref{lem:SG}) and exponential (Lemma~\ref{lem:Exp}) family distributions.  
\begin{assumption}[Renyi divergence]~\label{ass:Renyi}
    At any time $t$, given $a^{(t)}$, we assume that for any $\theta,\vartheta \in \Theta$ and for some constant $C>0$, 
     \(   |a_t^\top\theta- a_t^\top \vartheta| \leq C\sqrt{\frac{2}{\alpha} D_{\alpha}^t(\theta,\vartheta)}.\)
\end{assumption}

\paragraph{Regularity conditions for $\alpha$-posterior contraction.}

The following regularity condition is a joint condition on the prior and the reward-generating distribution, which is crucial for obtaining near optimal minimax rate of convergence of the $\alpha$-posterior distribution.
        \begin{assumption}[Prior thickness]~\label{ass:prior}
            We assume that for a fixed $\alpha\in(0,1)$ and a positive sequence $\{\e_t\}_{t\geq0}$, there exist a $t_0\geq 1$, such that for all $t\geq t_0 $, $t\e_t^2\geq 2.4/\alpha$ and 
                \(\Pi(B\left(\theta_0,\e_t \right) ) \geq   e^{- \frac{D \alpha t \e_t^2 }{4} }\),
            where $B\left(\theta_0,\e_t \right) := \left\{  \theta \in \Theta:  D_2^{(t)}(\theta_0,\theta)\leq   \frac{D \alpha t \e_t^2 }{4}\right\} $ is a neighborhood of $\theta_0$ defined for any given $a^{(t)}$ and a positive constant $D$.
       \end{assumption}
The above assumption requires that, for any action sequence, the prior distribution places an exponentially decaying mass to a shrinking neighborhood of the true model parameter $\theta_0$. The neighborhood $B\left(\theta_0,\e_t \right)$ of $\theta_0$ is defined using a 2-R\'enyi divergence measure $D_2^{(t)}(\theta_0,\theta)$. Such type of assumptions are common in the works studying non-asymptotic convergence rate of the Bayesian posteriors~\citep{GGV,ZG,bhattacharya2019bayesian} and are satisfied by a large class of prior-likelihood combination for both parametric and non-parametric problems. Typically, $\e_t \to 0$ as $t\to\infty$ and determines the rate of convergence of the $\alpha$-posterior distribution.  We will later see in~Lemma~\ref{lem:post} that this assumption enables us to construct (near) minimax optimal rate of convergence of the $\alpha$-posterior distribution that is required for computing the bounds on the regret of $\alpha$-TS. In particular, observe that if $\e_t$ satisfies Assumption~\ref{ass:prior}, then any $\e'_t\geq \e_t$ satisfies it, and thus $\e_t$ is optimal. A common recipe that we follow to show this assumption is to locally bound the 2-R\'enyi divergence term by $t\|\theta-\theta_0\|^2$ (assuming $\Theta$ is an Euclidean space) and then appropriately control the rate at which prior mass of the shrinking 2-norm neighborhood decays. We show that the assumption above is satisfied by any bounded prior density with both exponential and sub-Gaussian families in Lemmas~\ref{lem:Exp_prior} and~\ref{lem:SG_prior}, respectively. This assumption is different from a similar assumption in~\cite{jaiswal2022}. Their assumption depends on the instance gap, which does not hold for an arbitrarily small instance gap.
We also impose another regularity condition on the prior density.
\begin{assumption}[sub-Gaussian prior]\label{ass:priorSG}
    \hspace{-1em} We assume that $\int_{\Theta} \Pi(d\theta) e^{\frac{\gamma}{2} \|\theta\|^2}< C_{\pi}$ for some $\gamma \leq 1$.
\end{assumption}
The assumption above can be easily satisfied by any sub-Gaussian prior distribution.

\paragraph{Regularity conditions for the finite sample Bernstein-von Mises for $\alpha$-posterior distribution.}~\label{par:FBvM}
In this section, we specify the regularity conditions developed in~\cite{Spokoiny2012,Panov2015} for finite sample Bernstein-von Mises (BvM) theorem to hold for parametric models. 
Given $a^{(t)}$, we denote the log-likelihood of the observed rewards $r^{(t)}$ as $\sL(\theta) = \sum_{s=1}^{t}\log p_{\theta}(r_s|a_s)$, where the dependence on $r^{(t)}$ is implicit in the notation.

$\nabla \sL(\theta)$ denotes the gradient of $\sL(\cdot)$ evaluated at $\theta$ and $\nabla^2\E_a\sL(\theta)$ stands for the Hessian of the expected log-likelihood, where $\E_a[\cdot]:=\E[\cdot|a^{(t)}]$ is the expectation with respect to $P_{0,a}^{(t)}$, conditioned on the action sequence $a^{(t)}$. Define 
   \( \sD_0^{2}
     := 
    - \nabla^{2} \E_a \sL(\theta_0) .\)
Note that $\sD_0^{2}$ 
is defined akin to the Fisher information matrix of $P_{\theta,a}^{(t)}$ at $\theta_0$. Also define $\sD_0^{2}(\theta)
     :=
    - \nabla^{2} \E_a \sL(\theta) $ and note that $\sD_0^{2}= \sD_0^{2}(\theta_0)$. We also denote by $\sD_0$ the positive definite square root of $\sD_0^2$.
Denote the maximum likelihood estimator as $\hat{\theta}_t := \arg\max_{\theta \in \Theta} \sL(\theta)$ and $\theta_0 = \arg\max_{\theta \in \Theta} \E_a[\sL(\theta)]$. The stochastic part of the log-likelihood is denoted as $\zeta (\theta):= \sL(\theta) - \E[\sL(\theta)]$.
 The regularity conditions required for the finite sample BvM to hold are divided into local and global. The local conditions only describe the properties of \(\sL(\theta)\)
  for \(\theta \in \Theta(\BFr_0) \) with some fixed value \(\BFr_0\), where
   \( \Theta_0(\BFr_0)
    :=
    \big\{\theta \in \Theta \colon \|\sD_0(\theta - \theta_0)\| \le  \BFr_0 \big\}.\)
The global conditions have to be fulfilled on the whole \(\Theta\).  These conditions are constructed to replace the celebrated \textit{local asymptotic normality} (LAN) condition~\citep[Chapter 7]{van2000asymptotic} on the $\sL(\theta)$, which is required for the asymptotic Bernstein-von Mises theorem. Recall that LAN is essentially a local quadratic approximation of $\sL(\theta)$ in the vicinity of $\theta_0$.

\begin{assumption}~\label{ass:Spokoiny}
We start with some exponential moment conditions. 
  \begin{enumerate}
    \item[\({(E\!D_{0})} \)]
      There exists a constant \(\nu_0>0\), 
       and a constant \( \BFg > 0 \) 
      such that
      \(  \sup_{\gamma \in \R^{d}} \log\E_a \exp\left\{
              \BFm \frac{\langle \nabla \zeta(\theta_0),\gamma \rangle}
              {\| \sD_0 \gamma \|}
              \right\}
        \le
        \frac{\nu_0^{2} \BFm^{2}}{2}, ~ \forall \BFm: |\BFm| \le \BFg.\)
    \item[\( {(E\!D_{1})} \)]
      There are constants \( \omega > 0 \) and 
      for each $\BFr>0$ 
      a constant \(\BFg(\BFr) > 0\) such that for
      all \( \theta \in \Theta_0(\BFr) \):
      \(  \sup_{\gamma_1,\gamma_2 \in \R^{d}} \sup_{\theta \in \Theta_0(\BFu)} \log \E_a \exp\left\{
        \frac{\BFm}{\omega} \frac{\gamma_1^{\top}\nabla^2 \zeta(\theta)\gamma_2 }{ \|\sD_0 \gamma_1\|\|\sD_0 \gamma_2\|}
        \right\}
        \le
        \frac{\nu_0^{2} \BFm^{2}}{2}, ~ \forall \BFm: |\BFm| \le \BFg(\BFr).\)
  \end{enumerate}
  
  \begin{enumerate}
    \item[\({(\gL_0)} \)]
      There exists a constant \(\delta(\BFr)\) such that for all $\theta\in\Theta_0(\BFr)$ and for all \(\BFr \le \BFr_0\),
      \( \big\|\sD_0^{-1} \sD_0^2(\theta)\sD_0^{-1} - I_{d} \big\|
        \le
        \delta(\BFr).\)
  \end{enumerate}
 
  The following is the required global condition.
  \begin{enumerate}
    \item[\( {(\gL{\BFr})} \)]
      For any \(\BFr>0\) there exists a value \(\BFb(\BFr) > 0\),
      such that  $\BFr b(\BFr) \to \infty$ as $\BFr \to \infty$ and 
       \(  -\E \sL(\theta,\theta_0)
         \ge
         \BFr^{2} b(\BFr) \quad \text{for all \( \theta \) with } 
         \BFr = \|\sD_0 (\theta - \theta_0)\|.\)
  \end{enumerate}
\end{assumption}
 At a very high level, the conditions $(ED_0)$ and $(ED_1)$ are the exponential moment conditions that essentially require the tails of the reward distribution to be exponentially decaying. Moreover, $(\gL_0)$ imposes a local identifiablity condition, that ensures that $-\E_a[\sL(\theta)-\sL(\theta_0)]$ are bounded above and below by a quadratic function of $\|\theta-\theta_0\|$.  $(\gL\BFr)$ is a global identifiability condition that requires the gap $-\E_a[\sL(\theta)-\sL(\theta_0)]$ grows in a controlled fashion as $\BFr= \|\sD_0(\theta-\theta_0)\|$ increases. More, discussion regarding these assumptions can be found in Section 2 of~\cite{Spokoiny2012} and partly in~\cite{Panov2015}.

 Note that Assumption~\ref{ass:prior}, which is needed to derive first-order concentration properties of the $\alpha$-posterior distribution, effectively requires no regularity condition on the reward distribution.
 However, to conduct a more refined second-order BvM-type analysis, a more comprehensive set of local and global moment and identifiability conditions is required. In Section~\ref{sec:Spokoiny}, we show that the exponential (Definition~\ref{ass:rewardExp}) and sub-Gaussian  (Definition~\ref{ass:reward} with some mild smoothness conditions on the density of the error distribution) families satisfy all the conditions in Assumption~\ref{ass:Spokoiny}. More examples can be found in~\cite{Spokoiny2012,Panov2015}.
 
 To extend the lower bound computed in~\cite[Theorem 4]{Panov2015} to general priors, we need to control the prior density on the set $\Theta_0(\BFr_0)$. In particular, observe that
 \(   \Pi_{t,\alpha}(\Theta_0(\BFr_0)|F_{t})  
    =
    \frac{\int_{\Theta} \exp \big\{ {\alpha} \sL(\theta, \theta_0) \big\} \Pi(d\theta)
            \1\big\{\theta \in \Theta_0(\BFr_0)\big\} }
         {\int_{\Theta} \exp \big\{ {\alpha} \sL(\theta, \theta_0) \big\}
            \big\} \Pi (d\theta) }.\)
Note, that $\Theta_0(\BFr_0)= \big\{\theta \in \Theta \colon \|\sD_0(\theta - \theta_0)\| \le  \BFr_0 \big\} 
$. So for a fixed $\BFr_0$, the prior density ($\pi(\theta):=\Pi(d\theta)$) can easily be lower bounded on $\Theta_0(\BFr_0)$ by fixed number that depends on $\BFr_0$ and $\theta_0$. In fact, if $\BFr_0 =  \frac{c}{\sqrt{t}}$, for some constant $c>0$, then note that  $\min_{\theta\in \Theta_0(\BFr_0) } \pi(\theta)\geq \min_{\theta\in \Theta_0(c) } \pi(\theta)$, because, $\Theta_0(\BFr_0) \subseteq \Theta_0(c)$ for all $t\geq 1$. Therefore,

\begin{assumption}[Prior]
\label{ass:Prior2}
We make following assumption on the prior distribution.
    \begin{enumerate}
        \item The prior density is continuous on $\Theta$.
        \item There exists a positive constant $\BFM$, such that the prior density, $\pi(\theta)\leq \BFM$ for all $\theta \in \Theta$.
        \item For any compact set $K\subset \Theta$, $\pi(\theta)>0$ for all $\theta\in K$.
    \end{enumerate}
\end{assumption}

As our focus is solely on the lower bound outcome described in~\cite{Panov2015}, we observe that, under the stated assumption, the outcome in Theorem~4 of~\cite{Panov2015} readily follows with a factor denoted by $\frac{\min_{\theta\in \Theta_0(c) } \pi(\theta)}{\BFM} := \mathbb{C}$. For the parametric problems of interest in this study, the two conditions (stated in Assumptions~\ref{ass:prior} and~\ref{ass:Prior2}) imposed on the prior distribution are quite lenient, merely necessitating the prior density to be positive, continuous, and bounded. However, Assumption~\ref{ass:priorSG} requires the prior distribution to have sub-Gaussian tails.

\nomenclature[R]{$\mid \mid \cdot\mid 
\mid$}{Euclidean norm (vector) or operator norm (matrix)}
\nomenclature[R]{$C_g$}{Lipschitz constant of the link function}
\nomenclature[R]{$m$}{Lower bound on derivative of the link function}
\nomenclature[R]{$C$}{Constant in the Rényi divergence condition}
\nomenclature[R]{$\e_t$}{Posterior contraction rate at time $t$}
\nomenclature[R]{$B(\theta_0,\e_t)$}{Neighborhood of $\theta_0$ defined via Rényi divergence}
\nomenclature[R]{$D$}{Constant in prior mass condition}
\nomenclature[R]{$C_\pi$}{Constant controlling prior tail decay}
\nomenclature[R]{$\gamma$}{Parameter in sub-Gaussian prior condition}

\nomenclature[R]{$\sL(\theta)$}{Log-likelihood of observed rewards $r^{(t)}$ given actions $a^{(t)}$, with data dependence implicit}
\nomenclature[R]{$\nabla \sL(\theta)$}{Gradient of log-likelihood}
\nomenclature[R]{$\nabla^2 \sL(\theta)$}{Hessian of log-likelihood}
\nomenclature[R]{$\E_a[\cdot]$}{Expectation w.r.t. $P_{0,a}^{(t)}$ (given action sequence)}
\nomenclature[R]{$\sD_0^2$}{Fisher information matrix at $\theta_0$}
\nomenclature[R]{$\sD_0^2(\theta)$}{Expected negative Hessian at $\theta$}
\nomenclature[R]{$\hat{\theta}_t$}{Maximum likelihood estimator}
\nomenclature[R]{$\zeta(\theta)$}{Centered log-likelihood (stochastic component)}

\nomenclature[R]{$\Theta_0(\BFr_0)$}{Local neighborhood around $\theta_0$}
\nomenclature[R]{$\BFr_0$}{Radius of local neighborhood}
\nomenclature[R]{$\BFr$}{Generic radius parameter}
\nomenclature[R]{$\delta(\BFr)$}{Local curvature deviation function}
\nomenclature[R]{$\BFb(\BFr)$}{Growth rate in global identifiability condition}

\nomenclature[R]{$\nu_0$}{Variance proxy in exponential moment condition}
\nomenclature[R]{$\BFg$}{Range parameter for exponential moment condition}
\nomenclature[R]{$\omega$}{Scaling constant in Hessian moment condition}
\nomenclature[R]{$\BFg(\BFr)$}{Local exponential moment bound}

\nomenclature[R]{$\pi(\theta)$}{Prior density}
\nomenclature[R]{$\BFM$}{Upper bound on prior density}
\nomenclature[R]{$\mathbb{C}$}{Constant from prior density bounds}

%% file: LowerBound.tex
\section{Lower Bound  for $\alpha$-TS}We present a lower bound construction illustrating how the regret of $\alpha$-TS depends on the dimension $d$ and the tempering parameter $\alpha$. The construction extends the classical hypercube instance used in linear bandit lower bounds. 

\begin{proposition}~\label{prop:lowerbound}
Consider the linear bandit model with action set 
\(
\sA = [-1,1]^d,
\)
and rewards 
\(
r_t = a_t^\top \theta_0 + \eta_t,\quad \eta_t \sim \mathcal N(0,1).
\)
Let the parameter satisfy \(\theta_0 \in \{\pm \mu\}^d.\) for some $\mu>0$
Then the regret of $\alpha$-TS with standard Gaussian prior satisfies
\[
\mathbb E[R_T]
\;\ge\;
2\mu d\sum_{t=1}^T
\mathbb E\!\left[
\Phi\!\left(
-\frac{|\mu_{t,i}^{(\alpha)}|}
{\sqrt{((B_t^\alpha)^{-1})_{ii}}}
\right)
\right],
\]
where $\mu_t^\alpha$ and $B_t^\alpha$ denote the $\alpha$-posterior mean and precision matrix. Moreover, the signal-to-noise ratio satisfies
\(
\frac{|\mu_{t,i}^{(\alpha)}|}
{\sqrt{((B_t^\alpha)^{-1})_{ii}}}
\;\le\;
\mu\sqrt{1+\alpha t}
+
\mu\sqrt{d}
+
\sqrt{\alpha}|Z_{t,i}|,
\)
where $Z_{t,i}$  is a conditionally centered Gaussian random variable with variance at most $1$.
\end{proposition}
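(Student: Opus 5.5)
Under the Gaussian prior $\mathcal N(0,I_d)$ and Gaussian noise, the $\alpha$-posterior at round $t$ is conjugate. It is $\mathcal N(\mu_t^{(\alpha)},(B_t^\alpha)^{-1})$ with
\[
B_t^\alpha = I_d+\alpha\sum_{s<t}a_sa_s^\top,\qquad
\mu_t^{(\alpha)}=(B_t^\alpha)^{-1}\alpha\sum_{s<t}a_sr_s .
\]
On the hypercube, the greedy step of Algorithm~\ref{alg:TS} selects $a_{t,i}=\mathrm{sign}(\theta_{t,i})$ coordinatewise. The oracle action is $a_0=\mathrm{sign}(\theta_0)$. The per-round regret therefore decomposes exactly as
\[
a_0^\top\theta_0-a_t^\top\theta_0=\sum_{i=1}^d \mu\,|a_{0,i}-a_{t,i}|=2\mu\sum_{i=1}^d \1\{\mathrm{sign}(\theta_{t,i})\neq \mathrm{sign}(\theta_{0,i})\}.
\]
Conditionally on $\gF_{t-1}$, the marginal $\theta_{t,i}$ is $\mathcal N(\mu_{t,i}^{(\alpha)},((B_t^\alpha)^{-1})_{ii})$. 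Hence the probability of a sign error in coordinate $i$ is at least the probability that $\theta_{t,i}$ has sign opposite to its own mean. That probability equals $\Phi\bigl(-|\mu_{t,i}^{(\alpha)}|/\sqrt{((B_t^\alpha)^{-1})_{ii}}\bigr)$ regardless of which side $\theta_{0,i}$ lies on, since either the mean has the wrong sign (error probability $\ge 1/2$) or the right sign (error probability equals this $\Phi$ value).

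I would then sum over $i$ and $t$ and take expectations via the tower property. This gives a bound of $2\mu\sum_t\sum_i\E[\Phi(\cdot)]$. The statement's form $2\mu d\sum_t\E[\Phi(\cdot)]$ follows either by reading it as shorthand for the sum over coordinates with a representative $i$, or by invoking coordinate symmetry of the hypercube instance under a uniform choice of $\theta_0\in\{\pm\mu\}^d$.

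For the signal-to-noise bound, substitute $r_s=a_s^\top\theta_0+\eta_s$. With $V_t=\sum_{s<t}a_sa_s^\top$ this gives
\[
\mu_t^{(\alpha)}=(B_t^\alpha)^{-1}\alpha V_t\theta_0+(B_t^\alpha)^{-1}\alpha\sum_{s<t}a_s\eta_s .
\]
The first term equals $\theta_0-(B_t^\alpha)^{-1}\theta_0$. Dividing its $i$-th coordinate by $\sigma_{t,i}:=\sqrt{((B_t^\alpha)^{-1})_{ii}}$ splits it into two pieces.
\begin{itemize}
\item The piece $\mu/\sigma_{t,i}$: since $(B_t^\alpha)_{ii}\le 1+\alpha t$ (because $|a_{s,i}|\le1$) and $((B^{-1})_{ii})^{-1}\le B_{ii}$ for positive definite $B$, it is at most $\mu\sqrt{1+\alpha t}$.
\item The piece $|e_i^\top (B_t^\alpha)^{-1}\theta_0|/\sigma_{t,i}$: by Cauchy--Schwarz in the $(B_t^\alpha)^{-1}$ inner product it is at most $\sqrt{\theta_0^\top (B_t^\alpha)^{-1}\theta_0}\le\|\theta_0\|=\mu\sqrt d$, using $B_t^\alpha\succeq I$.
\end{itemize}

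The noise term is the main obstacle, because the actions are adaptive. The vector $S_t=\sum_{s<t}a_s\eta_s$ is a martingale, not a Gaussian with fixed covariance. I would define
\[
Z_{t,i}:=\sqrt{\alpha}\,e_i^\top(B_t^\alpha)^{-1}S_t/\sigma_{t,i},
\]
so that the noise contribution is exactly $\sqrt\alpha|Z_{t,i}|$. Conditioning on the action sequence, $Z_{t,i}$ is centered Gaussian with variance
\[
\frac{\alpha\,e_i^\top B^{-1}V_tB^{-1}e_i}{(B^{-1})_{ii}}\le \frac{e_i^\top B^{-1}(B-I)B^{-1}e_i}{(B^{-1})_{ii}}\le 1,
\]
using $\alpha V_t=B-I\preceq B$. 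This is only "conditionally" Gaussian, which I would justify by the same conditioning convention used in the statement rather than through full self-normalized martingale bounds. The triangle inequality then assembles the three terms.
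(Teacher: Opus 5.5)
Your proposal is correct and follows the paper's proof essentially step for step: the conjugate Gaussian $\alpha$-posterior, the coordinatewise sign-error decomposition of the regret with $\Phi(-\mathrm{sign}(\theta_{0,i})x)\ge\Phi(-|x|)$, the identity $\mu_t^\alpha=\theta_0-(B_t^\alpha)^{-1}\theta_0+(B_t^\alpha)^{-1}\alpha\sum_{s<t}a_s\eta_s$, and the same three bounds via $(B_t^\alpha)_{ii}\le 1+\alpha t$, Cauchy--Schwarz with $B_t^\alpha\succeq I$, and $\alpha\sum_{s<t}a_sa_s^\top=B_t^\alpha-I$. The paper asserts the conditional Gaussianity of $Z_{t,i}$ under adaptively chosen actions no more rigorously than you do (it conditions on $\mathcal F_{t-1}$), so your explicit caveat is appropriate; also, your symmetry justification for the factor $d$, which the paper leaves implicit, needs no averaging over $\theta_0$: it holds for every fixed $\theta_0\in\{\pm\mu\}^d$ because the Gaussian prior, the hypercube and the noise are invariant under the signed permutations that fix $\theta_0$.
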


The proof of the above proposition is provided in Appendix~\ref{app:Proof_of_lowerbound}. Proposition~\ref{prop:lowerbound} characterizes the quantity controlling the probability of a coordinate sign error under $\alpha$-TS. In particular, the error probability is governed by the signal-to-noise ratio
\(
\frac{|\mu_{t,i}^{(\alpha)}|}
{\sqrt{((B_t^\alpha)^{-1})_{ii}}},
\)
which determines the probability that the sampled parameter disagrees with the sign of the true coordinate. Using the bounds derived in the proof, this ratio satisfies
\(
\frac{|\mu_{t,i}^{(\alpha)}|}
{\sqrt{((B_t^\alpha)^{-1})_{ii}}}
\;\le\;
\mu\sqrt{1+\alpha t}
+
\mu\sqrt{d}
+
\sqrt{\alpha}\,|Z_{t,i}|,
\)
where $Z_{t,i}$ is a conditionally centered Gaussian random variable with variance at most $1$. This expression reveals the key interaction between the signal magnitude $\mu$, the dimension $d$, and the tempering parameter $\alpha$. 
The dominant deterministic component grows as $\mu\sqrt{1+\alpha t}$, which determines the rate at which the posterior concentrates around the true parameter.

To make this dependence explicit, consider the parameter choice
\(
\mu = c\sqrt{\frac{d}{T}}
\)
for a constant $c>0$.
From the first paragraph in the proof of Proposition~\ref{prop:lowerbound}, recall that the coordinate error probability can be lower bounded as
\(
\mathbb P(a_{t,i}\neq \operatorname{sign}(\theta_{0,i}))
\;\ge\;
\mathbb E\!\left[
\Phi\!\left(
-\frac{|\mu_{t,i}^{(\alpha)}|}{\sqrt{((B_t^\alpha)^{-1})_{ii}}}
\right)
\right].
\)
Let \(E_{t,i} = \{|Z_{t,i}|\le u\}\). Conditioning on this event and using the law of total expectation, we obtain
\begin{align*}
\mathbb P(a_{t,i}\neq \operatorname{sign}(\theta_{0,i}))
&\ge
\mathbb E\!\left[
\Phi\!\left(
-\frac{|\mu_{t,i}^{(\alpha)}|}{\sqrt{((B_t^\alpha)^{-1})_{ii}}}
\right)
\mathbf{1}\{E_{t,i}\}
\right] \\
&=
\mathbb P(E_{t,i})
\,
\mathbb E\!\left[
\Phi\!\left(
-\frac{|\mu_{t,i}^{(\alpha)}|}{\sqrt{((B_t^\alpha)^{-1})_{ii}}}
\right)
\;\middle|\; E_{t,i}
\right].
\end{align*}
On the event \(E_{t,i}\), we have the deterministic upper bound
\(
\frac{|\mu_{t,i}^{(\alpha)}|}{\sqrt{((B_t^\alpha)^{-1})_{ii}}}
\;\le\;
c\sqrt{\frac{d}{T}+\alpha d}
+
c\frac{d}{\sqrt{T}}
+
\sqrt{\alpha}\,u.
\)
Since the Gaussian CDF \(\Phi(-x)\) is decreasing in \(x\), this implies that, on \(E_{t,i}\),
\(
\Phi\!\left(
-\frac{|\mu_{t,i}^{(\alpha)}|}{\sqrt{((B_t^\alpha)^{-1})_{ii}}}
\right)
\;\ge\;
\Phi\!\left(
-
c\sqrt{\frac{d}{T}+\alpha d}
-
c\frac{d}{\sqrt{T}}
-
\sqrt{\alpha}\,u
\right).
\)
Taking expectation conditional on \(E_{t,i}\) yields the same lower bound, and therefore
\[
\mathbb P(a_{t,i}\neq \operatorname{sign}(\theta_{0,i}))
\;\ge\;
\mathbb P(|G|\le u)
\,
\Phi\!\left(
-
c\sqrt{\frac{d}{T}+\alpha d}
-
c\frac{d}{\sqrt{T}}
-
\sqrt{\alpha}\,u
\right),
\]
where \(G\sim \mathcal N(0,1)\).
For sufficiently large $T$, the second term in the argument of $\Phi$ becomes negligible, and the coordinate error probability is therefore bounded below by a constant multiple of $\Phi(-c\sqrt{\alpha d})$. Substituting this bound into the regret decomposition yields
\(
\mathbb E[R_T]
\;\ge\;
d\mu T\,
\mathbb P(|G|\le u)
\,
\Phi\!\left(
-
c\sqrt{\alpha d}
-
c\frac{d}{\sqrt{T}}
-
\sqrt{\alpha}\,u
\right).
\)
Using $\mu=c\sqrt{d/T}$, we obtain
\[
\mathbb E[R_T]
\;\ge\;
c\,d^{3/2}\sqrt{T}\,
\mathbb P(|G|\le u)
\,
\Phi\!\left(
-
c\sqrt{\alpha d}
-
c\frac{d}{\sqrt{T}}
-
\sqrt{\alpha}\,u
\right).
\]
This expression makes the dimension--temperature interaction explicit. 
In particular, when $\alpha d$ remains bounded by a constant and $T$ is sufficiently large, the Gaussian tail term remains bounded away from zero, and the regret scales as
\(
\mathbb E[R_T] = \Omega(d^{3/2}\sqrt{T}).
\)
Thus the $d^{3/2}$ dependence appearing in the upper bound cannot be avoided for this class of posterior-sampling algorithms. We formally write the outcome of the discussion above in the following proposition (without proof).

\begin{proposition}[Dimension--temperature lower bound]
\label{prop:alphad_lower}
Suppose the conditions of Proposition~\ref{prop:lowerbound} hold and consider the parameter choice \(
\mu = c\sqrt{\frac{d}{T}}
\)
for a constant $c>0$. Then for any $u>0$ 
the expected regret satisfies
\[
\mathbb E[R_T]
\;\ge\;
c\,d^{3/2}\sqrt{T}\,
\mathbb P(|G|\le u)
\,
\Phi\!\left(
-
c\sqrt{\frac{d}{T}+\alpha d}
-
c\frac{d}{\sqrt{T}}
-
\sqrt{\alpha}\,u
\right).
\]
In particular, if $\alpha d = O(1)$ and $d=o(\sqrt{T})$, then
\(
\mathbb E[R_T]
=
\Omega(d^{3/2}\sqrt{T}).
\)
\end{proposition}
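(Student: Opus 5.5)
The plan is to derive Proposition~\ref{prop:alphad_lower} directly from Proposition~\ref{prop:lowerbound}, following the heuristic discussion above but controlling each step. First, on the hypercube $\sA=[-1,1]^d$ with $\theta_0\in\{\pm\mu\}^d$, the oracle action is $a_0=\operatorname{sign}(\theta_0)$. The sampled parameter $\theta_t$ produces the action $a_{t,i}=\operatorname{sign}(\theta_{t,i})$. So the per-round regret is
\[
\sum_{i=1}^d 2\mu\,\1\{a_{t,i}\neq\operatorname{sign}(\theta_{0,i})\}.
\]
Taking expectations gives the regret decomposition already stated in Proposition~\ref{prop:lowerbound}. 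From there the remaining work is to lower bound each term
\[
\E\Big[\Phi\big(-|\mu_{t,i}^{(\alpha)}|/\sqrt{((B_t^\alpha)^{-1})_{ii}}\big)\Big]
\]
uniformly in $t\le T$ and $i\le d$.

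Second, I will use the signal-to-noise bound of Proposition~\ref{prop:lowerbound} with $\mu=c\sqrt{d/T}$. For $t\le T$ this gives $\mu\sqrt{1+\alpha t}\le c\sqrt{d/T+\alpha d}$ and $\mu\sqrt d=c\,d/\sqrt T$. Fix $u>0$ and let $E_{t,i}=\{|Z_{t,i}|\le u\}$. On $E_{t,i}$, monotonicity of $x\mapsto\Phi(-x)$ gives a deterministic lower bound on the integrand; call it $\Phi(-\kappa_u)$, where $\kappa_u$ is the argument appearing in the statement. Since the integrand is nonnegative,
\[
\E[\Phi(\cdot)]\ge\Phi(-\kappa_u)\,\mathbb P(E_{t,i}).
\]

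The main obstacle is replacing $\mathbb P(E_{t,i})$ by $\mathbb P(|G|\le u)$. $Z_{t,i}$ is only conditionally centered Gaussian, with conditional variance $\sigma^2\le 1$ given the past. For a centered Gaussian, $\mathbb P(|\sigma G|\le u)=\mathbb P(|G|\le u/\sigma)$, which is decreasing in $\sigma$. Hence $\mathbb P(|Z_{t,i}|\le u\mid\gF_{t-1})\ge\mathbb P(|G|\le u)$ almost surely, and the tower property gives the unconditional bound. I would check in the proof of Proposition~\ref{prop:lowerbound} that the conditioning $\sigma$-field is the one with respect to which $\mu^{(\alpha)}_{t,i}$ is Gaussian. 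Then I sum: $2\mu d\,T\,\mathbb P(|G|\le u)\Phi(-\kappa_u)$, and with $\mu=c\sqrt{d/T}$ this gives the displayed bound, with constant $2c\ge c$.

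For the asymptotic statement, take $u=1$. If $\alpha d\le K$ and $d/\sqrt T\to0$, then $d/T\le d/\sqrt T$ is also bounded. Since $\alpha<1$, $\sqrt\alpha\,u\le1$. So $\kappa_1\le c\sqrt{K+1}+c+1$ for $T$ large, and $\Phi(-\kappa_1)$ is bounded below by a positive constant independent of $d$ and $T$. Together with $\mathbb P(|G|\le1)>0$, this yields $\E[R_T]=\Omega(d^{3/2}\sqrt T)$.
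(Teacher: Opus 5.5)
Your route is the paper's own. Proposition~\ref{prop:alphad_lower} is stated without proof, and the discussion preceding it follows the same steps: the regret decomposition of Proposition~\ref{prop:lowerbound}, the bound $\mu\sqrt{1+\alpha t}\le c\sqrt{d/T+\alpha d}$ for $t\le T$, restriction to $E_{t,i}=\{|Z_{t,i}|\le u\}$, and summation. Your bookkeeping is fine: the factor $2c\ge c$, and the choice $u=1$ with $\kappa_1\le c\sqrt{K+1}+c+1$ for the asymptotic claim. The paper passes silently over the step $\mathbb P(E_{t,i})\ge\mathbb P(|G|\le u)$. You try to justify it, and you were right to flag the conditioning $\sigma$-field, because the justification fails.

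Under $\alpha$-TS, every $\eta_s=r_s-a_s^\top\theta_0$ with $s\le t-1$ is $\mathcal F_{t-1}$-measurable, and hence so are $B_t^\alpha$ and $Z_{t,i}$. So $\mathbb P(|Z_{t,i}|\le u\mid\mathcal F_{t-1})=\mathbf{1}\{|Z_{t,i}|\le u\}$, which is not $\ge\mathbb P(|G|\le u)$ almost surely.

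\begin{itemize}
\item The Gaussianity asserted in the proof of Proposition~\ref{prop:lowerbound} holds only when the actions are fixed in advance.
\item Conditioning on $a_1,\dots,a_{t-1}$ instead does not help. Each $a_s=\operatorname{sign}(\tilde\theta_s)$ depends on $\eta_1,\dots,\eta_{s-1}$, so the noises are not i.i.d.\ $\mathcal N(0,1)$ given the actions.
\item Unconditionally, $Z_{t,i}$ is a martingale sum normalised by the path-dependent matrix $B_t^\alpha$, not a Gaussian.
\end{itemize}

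What your argument does prove is the displayed bound with $\min_{t\le T,\,i\le d}\mathbb P(|Z_{t,i}|\le u)$ in place of $\mathbb P(|G|\le u)$. The $\Omega(d^{3/2}\sqrt T)$ claim then needs a separate proof that this probability stays bounded away from $0$ for some $u$ with $\sqrt\alpha\,u=O(1)$. Generic tools do not give this. With $\bar V=\alpha^{-1}B_t^\alpha$ and $S=\sum_{s<t}a_s\eta_s$, one has $|Z_{t,i}|\le\|S\|_{\bar V^{-1}}$. Self-normalised martingale bounds control this only at order $\sqrt{d\log(1+\alpha t)}$. The resulting bound on $\sqrt\alpha\,|Z_{t,i}|$ is then of order $\sqrt{\alpha d\log(1+\alpha T)}$, and the $\Phi$ factor would decay polynomially in $\alpha T$. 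A coordinate-wise argument for the adaptive design is needed. You did not introduce this gap; you inherited it from the paper.
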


More generally, the bound shows that the regret constant depends on the product $\alpha d$ through the factor $\Phi(-c\sqrt{\alpha d})$, illustrating how the tempering parameter controls the effective signal-to-noise ratio of posterior sampling. Intuitively, smaller values of $\alpha$ inflate the posterior covariance and increase the likelihood of sampling parameters whose signs disagree with the true parameter, while larger values of $\alpha$ lead to faster posterior concentration and reduce the probability of coordinate errors. The above construction therefore highlights the fundamental role of the scaling of $\alpha$ with the dimension $d$ in determining the regret behavior of $\alpha$-TS.

\nomenclature[D]{$\mu_t^\alpha$}{\(\alpha\)-posterior mean at time \(t\) in lower bound construction}
\nomenclature[D]{$B_t^\alpha$}{\(\alpha\)-posterior precision matrix in lower bound construction}

%% file: Analysis.tex
\section{Frequentist Regret Bound}

In this section, we present our main result on bounding the regret of $\alpha$-TS. The general bound on the expected regret of $\alpha$-TS is provided below. 

\begin{theorem}[General regret bound]~\label{thm:GRB}
    Under Assumptions~\ref{ass:Link},~\ref{ass:Renyi},~\ref{ass:prior},~\ref{ass:priorSG},~\ref{ass:Spokoiny}, and~\ref{ass:Prior2}, we show for any $\eta>0$ and $\alpha(1-\alpha)\lambda<1$ that
    \begin{align}
    \nonumber
        \E[\mReg(T)] &\leq {C_g} \left( 2d \log \left(1+\lambda^{-1} {T}\right)\sum_{t=1}^{T} t \e_t^2 \right)^{1/2} \Big[ \left(\bar p_0(d) (1-2e^{-\eta}) \right)^{-1/2} 
       \\
    &+   \left(   (1-\alpha)^{-1}(D+2)C(\alpha,\lambda,\theta_0)  \right)^{1/2} \left(1+ \left( \bar p_0(d) (1-2e^{-\eta})  \right)^{-1/2}\right)
    \Big].
    \end{align}  
    where $\bar p_0(d) = \mathbb{C}\exp\Big(-2\alpha\Delta_t(d,\eta) $ 
$ - 8e^{-\eta}\Big)  \left\{ \frac{C\sqrt{\alpha (d+\eta)}}{1+C^2{\alpha (d+\eta)}}e^{-C^2{\alpha (d+\eta)}/2}  \right\}-e^{-\eta} -C(\alpha,\lambda,\theta_0)2 e^{ - \frac{\alpha}{4}  t_0\e_{t_0}^2 }$ for some  $\Delta_t(d,\eta)$ and $t_0$, and $C(\alpha,\lambda,\theta_0)$ grows with $\alpha(1-\alpha)$ and also depends on $C_{\pi}$. $\Delta_t(d,\eta)$ is a known non-random function of $\eta$,$d$, and $t$ and it decreases as $t$ increases.
\end{theorem}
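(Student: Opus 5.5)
The plan is to follow the saturated/unsaturated decomposition of Agrawal and Goyal, replacing their Gaussian concentration and anti-concentration facts with the first-order contraction of the $\alpha$-posterior (from Assumptions~\ref{ass:prior} and~\ref{ass:priorSG}) and a finite-sample BvM lower bound (from Assumption~\ref{ass:Spokoiny}, Assumption~\ref{ass:Prior2} and Theorem~4 of~\cite{Panov2015}).

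\textbf{Setup.} Let $B_t=\lambda I+\sum_{s\le t}a_sa_s^\top$. Define the confidence width $\ell_t(a)\asymp \|a\|_{B_{t-1}^{-1}}\sqrt{t\e_t^2}$. By Assumption~\ref{ass:Renyi}, additivity of R\'enyi divergence, and Cauchy--Schwarz, $|a^\top(\theta-\vartheta)|\lesssim \|a\|_{B_{t-1}^{-1}}\big(\tfrac{2}{\alpha}D^{(t)}_\alpha(\theta,\vartheta)+\lambda\|\theta-\vartheta\|^2\big)^{1/2}$. The first step is a contraction lemma for the $\alpha$-posterior in the divergence $D^{(t)}_\alpha$, of the form
\[
\E\big[E_\alpha[D^{(t)}_\alpha(\theta,\theta_0)\mid\gF_t]\big]\le \tfrac{D+2}{1-\alpha}\,t\e_t^2 .
\]
It follows the Bhattacharya--Pati--Yang argument: lower-bound the denominator via the prior mass of $B(\theta_0,\e_t)$ and the 2-R\'enyi control, on an event of probability $1-2e^{-\alpha t\e_t^2/4}$ (this is where $t_0$ enters). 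The $\lambda\|\theta\|^2$ term is handled by Assumption~\ref{ass:priorSG} through a Gibbs variational inequality. This produces the constant $C(\alpha,\lambda,\theta_0)$, with the restriction $\alpha(1-\alpha)\lambda<1$ needed for integrability. Together these give $|a^\top(\theta_t-\theta_0)|\le \ell_t(a)$ in expectation and in probability.

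\textbf{Anti-concentration.} Next I lower-bound the probability of optimism,
\[
p_t:=P_\alpha\big(a_0^\top\theta_t> a_0^\top\theta_0\mid\gF_{t-1}\big)\ge \bar p_0(d)(1-2e^{-\eta}).
\]
The finite-sample BvM of Spokoiny/Panov, with tempering $\alpha$, says that on an event of probability at least $1-e^{-\eta}$ the $\alpha$-posterior restricted to $\Theta_0(\BFr_0)$ dominates $\mathbb{C}\exp(-2\alpha\Delta_t(d,\eta)-8e^{-\eta})$ times the Gaussian $N(\hat\theta_t,(\alpha\sD_0^2)^{-1})$. I then project onto the direction $a_0$. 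The MLE bias $a_0^\top(\hat\theta_t-\theta_0)$ is at most of order $\sqrt{d+\eta}$ standard deviations by $(ED_0)$ and Assumption~\ref{ass:Renyi}, so the required Gaussian deviation is $C\sqrt{\alpha(d+\eta)}$ after rescaling. The Mills-ratio lower bound $\tfrac{x}{1+x^2}e^{-x^2/2}$ then gives the bracketed term. Finally I subtract the mass outside $\Theta_0(\BFr_0)$ and the failure events, namely $e^{-\eta}$ and $C(\alpha,\lambda,\theta_0)2e^{-\alpha t_0\e_{t_0}^2/4}$.

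\textbf{Regret decomposition.} Call $a$ saturated if $\Delta(a)>c\,\ell_t(a)$. As in Agrawal--Goyal, on the contraction event the per-round regret satisfies
\[
\E[\Delta(a_t)\mid\gF_{t-1}]\lesssim \big(1+p_t^{-1}\big)\,\E[\ell_t(a_t)\mid\gF_{t-1}]+(\text{contraction failure}),
\]
because an unsaturated action is chosen with probability at least $p_t$ minus the probability that $\theta_t$ is not concentrated. Lipschitzness (Assumption~\ref{ass:Link}) contributes the factor $C_g$. I would write the failure contribution in mean-square form via the expected divergence bound, rather than in high probability, which removes $\delta$. Summing over $t$ and applying Cauchy--Schwarz gives
\[
\sum_t \|a_t\|_{B_{t-1}^{-1}}\sqrt{t\e_t^2}\le \Big(\sum_t\|a_t\|^2_{B_{t-1}^{-1}}\Big)^{1/2}\Big(\sum_t t\e_t^2\Big)^{1/2}.
\]
The elliptical potential lemma bounds the first sum by $2d\log(1+T/\lambda)$, which yields the stated form. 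The $p^{-1/2}$ factors arise because I apply Cauchy--Schwarz to $p^{-1}\ell$ inside the expectation.

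\textbf{Main obstacle.} I expect the hardest step to be the anti-concentration step. The BvM results are stated for fixed designs, whereas here the design $a^{(t)}$ is adaptive. The tempered BvM must also be matched to the geometry of $a_0$ uniformly over data, with all error terms kept explicit. I would first condition on $a^{(t)}$ and apply the theorem pathwise, accepting that $\Delta_t(d,\eta)$ and $\mathbb{C}$ absorb the constants.
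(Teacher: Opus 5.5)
Your overall architecture matches the paper's: the saturated/unsaturated split, $\alpha$-posterior contraction with the $\lambda\|\theta-\theta_0\|^2$ term paid for by the sub-Gaussian prior, the tempered finite-sample BvM projected on $a_0$ plus a Mills-ratio bound, and the elliptical potential lemma. However, the step that is supposed to deliver the stated form of the bound fails as you describe it.

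\textbf{The $p^{-1/2}$ factor.} You use the Agrawal--Goyal first-moment inequality $\ell_t(\bar a_t)\,p_t\le\E[\ell_t(a_t)\mid\gF_{t-1}]$, i.e.\ $\E[\Delta(a_t)\mid\gF_{t-1}]\lesssim(1+p_t^{-1})\E[\ell_t(a_t)\mid\gF_{t-1}]$. After summing, Cauchy--Schwarz applied to $p^{-1}\ell$ still leaves a factor $p^{-1}$. A square root of $p^{-1}$ can only come from an inequality on second moments.

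The paper obtains $(\bar p_0(d)(1-2e^{-\eta}))^{-1/2}$ by working with squared norms. Let $\bar a_t$ be the unsaturated action of smallest $\|\cdot\|_{V_t^{-1}}$, which is determined by $\gF_{t-1}$. The paper uses $E_\alpha[\|a_t\|^2_{V_t^{-1}}\mid\gF_{t-1}]\ge\|\bar a_t\|^2_{V_t^{-1}}\,\Pi_{\alpha,t}(a_t\notin\mathcal C_t\mid\gF_{t-1})$. This gives $\E\sum_t\|\bar a_t\|^2_{V_t^{-1}}\le 2d\log(1+T/\lambda)/(\bar p_0(d)(1-2e^{-\eta}))$, and square roots are taken only at the end.

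\textbf{No contraction event.} The paper also does not work on a contraction event. It splits the per-round regret into three pieces:
\begin{itemize}
\item $g(a_0^\top\theta_0)-g(\bar a_t^\top\theta_0)\le C_g\sqrt t\,\e_t\|\bar a_t\|_{V_t^{-1}}$;
\item $g(\bar a_t^\top\theta_0)-g(a_t^\top\theta_t)\le C_g\|\bar a_t\|_{V_t^{-1}}\|\theta_t-\theta_0\|_{V_t}$, using $g(a_t^\top\theta_t)\ge g(\bar a_t^\top\theta_t)$;
\item $g(a_t^\top\theta_t)-g(a_t^\top\theta_0)\le C_g\|a_t\|_{V_t^{-1}}\|\theta_t-\theta_0\|_{V_t}$.
\end{itemize}
It then controls $\sum_t\E\|\theta_t-\theta_0\|^2_{V_t}$ directly by the in-expectation contraction bound of Lemma~\ref{lem:post} together with Assumption~\ref{ass:Renyi}.

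This is exactly where the three pieces of the statement come from: $p^{-1/2}$, $((1-\alpha)^{-1}(D+2)C(\alpha,\lambda,\theta_0))^{1/2}$, and their product. It is also why no additive failure term appears. Your version would leave one of order $\sum_t C(\alpha,\lambda,\theta_0)e^{-\alpha t\e_t^2/4}$ times the maximal gap. That term is absent from the statement and need not stay bounded in $T$.

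\textbf{What the BvM step actually gives.} The claim $p_t\ge\bar p_0(d)(1-2e^{-\eta})$ is not what the BvM step delivers. Lemmas~\ref{lem:SpokGRB} and~\ref{lem:post} lower-bound $\Pi_{\alpha,t}(a_t\notin\mathcal C_t\mid\gF_{t-1})$ by $\bar p$ only on an event $\mathbf{E}$ with $P_0(\mathbf{E})\ge 1-2e^{-\eta}$. Off $\mathbf{E}$ the random $p_t$ can be arbitrarily small, so a factor $p_t^{-1}$ inside a conditional expectation is uncontrolled.

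The factor $1-2e^{-\eta}$ in the theorem is a $P_0$-probability, not part of a posterior probability. The paper keeps $\bar p$ out of the denominator: it writes $\bar p\,\|\bar a_t\|^2_{V_t^{-1}}\le E_\alpha[\|a_t\|^2_{V_t^{-1}}\mid\gF_{t-1}]$ on $\mathbf{E}$, and only then takes the outer expectation, which is where $P_0(\mathbf{E})$ enters.

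Relatedly, the subtracted term $C(\alpha,\lambda,\theta_0)2e^{-\alpha t_0\e_{t_0}^2/4}$ belongs to the bound $\Pi_{\alpha,t}(a_t\notin\mathcal C_t)\ge\Pi_{\alpha,t}(a_0^\top\theta_t>a_0^\top\theta_0)-\Pi_{\alpha,t}(\|\theta_t-\theta_0\|_{V_t}\ge\sqrt t\,\e_t)$. It is not part of the optimism probability itself.
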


The upper bound on the regret above depends on the time horizon $T$ and the dimension of the action space $d$, among other terms defined in the regularity conditions. In many parametric modeling instances, the term $\e_t^2$ is $O\left(\frac{d}{\alpha}\frac{\log t}{t}\right)$. Observe that $d\sum_{t=1}^{T} t\e_t^2= O(\frac{d^2}{\alpha} \sum_{t=1}^{T} \log t )=O(\frac{d^2}{\alpha}T\log T)$, which could provide the desired upper bound that matches the lower bound in~\cite{Dani2008} for linear bandit problems with respect to both $d$ and $T$. However, note that the term $\bar p_0(d)$ also depends on $d$, and for any $\alpha\in(0,1)$, it will result in a regret bound that grows exponentially with $d$. To balance this, we must choose a $d$-dependent $\alpha$. 
Since, $\Delta_t(d,\eta)$ decreases as $t$ increases, for $\eta=O(d)$, we can choose sufficiently large warm-up iterations $t_1=\Omega(d)$ to ensure that $\Delta_t(d,\eta)\leq 1$. Consequently, for both exponential (Corollary~\ref{corr:GRB_Exp}) and sub-Gaussian (Corollary~\ref{corr:GRB_SG}) families, we choose $\alpha=d^{-1}$ and obtain the state-of-the-art regret bounds of $O(d^{3/2}\sqrt{T}\log T)$. 
For this choice of $\alpha$, other $\alpha$-dependent terms such as $C(\alpha,\lambda,\theta_0)$ and $(1-\alpha)^{-1}$ do not grow with $d$ and are bounded away by $0$. Note that $\alpha=d^{-1}$ is an optimal choice of $\alpha$ because to nullify the exponential effect due to $\alpha(d+\eta)$, we must choose $\alpha=d^{-(1+\varepsilon)}$ for any $\varepsilon\geq 0$ which would results into an upper bound of $O(d^{3/2+\varepsilon/2}\sqrt{T}\log T)$. 

As noted in the introduction, our analysis adapts the idea of constructing saturated and unsaturated sets of action introduced by~\cite{agrawal2013thompsonLIN} and combines it with the first and second-order properties of the $\alpha$-posterior distribution. To present the proof sketch of the theorem above, we state the following two definitions. 
\begin{definition}[Precision matrix]~\label{def:Precision}
    Define $V_t = \lambda I + \sum_{s=1}^{t-1} a_s a_s^\top$ for any $\lambda>0$, and $\|x\|_{A} := \sqrt{x^\top A x}$ is the matrix norm of $x$ for any square matrix $A$.
\end{definition}
This is a standard definition of the precision matrix that is derived while solving a Bayesian linear regression problem with Gaussian prior (with covariance $\lambda I$) and standard Gaussian error. Next, we define the set of saturated actions.

 \begin{definition}[Saturated actions]~\label{def:US}
            Any action $a\in \sA$ is called saturated at time $t$ or $a\in \mathcal{C}_t$, where $\mathcal{C}_t$ is the set of saturated actions, if $g(a_0^\top\theta_0) -   g(a^\top \theta_0) \geq {C_g}\sqrt{t} \e_t \|a\|_{V_t^{-1}}$ and otherwise it is called unsaturated. 
        \end{definition}
This definition is adapted from~\cite{agrawal2013thompsonLIN}. Here, $\e_t$ is the same as defined in Assumption~\ref{ass:prior}, which characterizes the $\alpha$-posterior contraction rate. Note that at any time $t$, the best action $a_0$ is always unsaturated. Next, we provide a proof sketch of our main result, which also highlights the need for computing first and second-order properties of the $\alpha$-posterior. We would also like to remark that to derive regret bounds in expectation, we use a separate set of techniques than those used in~\cite{agrawal2013thompsonLIN} and~\cite{abeille2017linear}. 
\paragraph{Proof sketch of Theorem~\ref{thm:GRB}.}

    Recall the definition $\mReg(T)$ and let $\bar a_t:= \arg\min_{a \notin \mathcal{C}_t} \|a\|_{V_t^{-1}} $. Now observe that 
     \vspace{-0.7em}
    \begin{align}
         g(a_0^\top\theta_0) -   g(a_t^\top\theta_0) = \underbrace{ g(a_0^\top\theta_0) -    g(\bar a_t^\top\theta_0) }_{\textsc{(I)}} + \underbrace{ g(\bar a_t^\top\theta_0) -   g(a_t^\top \theta_t) }_{\textsc{(II)}} + \underbrace{g(a_t^\top\theta_t) -   g(a_t^\top \theta_0 )}_{\textsc{(III)}}.
        \label{eq:GBps0}
    \end{align}
    \vspace{-0.1em}
    Using Definition~\ref{def:US}, observe that the expected cumulative sum of \textsc{(I)} can be bounded above by
         \({C_g} \left(\sum_{t=1}^{T} t \e_t^2 \right)^{1/2} \left(\sum_{t=1}^{T} \E[\|\bar a_t\|_{V_t^{-1}}^2] \right)^{1/2}.\)
    The expected cumulative sum of \textsc{(II)} can be bounded by \(C_g \left( \E \left[ \sum_{t=1}^T \|\bar a_t\|_{V_t^{-1}}^2\right]  \right)^{1/2}\left(   \sum_{t=1}^T  \E \left[\|\theta_t -\theta_0 \|_{V_t}^2 \right] \right)^{1/2}\) using Assumption~\ref{ass:Link} and the fact that $g(a_t^\top \theta_t) > g(\bar a_t^\top \theta_t)$.
   Similarly, the expected cumulative sum of \textsc{(III)} can be bounded above by \(C_g\left( \E \left[ \sum_{t=1}^T \|a_t\|_{V_t^{-1}}^2\right]  \right)^{1/2}\) ~ \( \left(   \sum_{t=1}^T  \E \left[\|\theta_t -\theta_0 \|_{V_t}^2 \right] \right)^{1/2}.\)

To further simplify the bound observe that the term $\left( \E \left[ \sum_{t=1}^T \|a_t\|_{V_t^{-1}}^2\right]  \right)^{1/2}$ can be bounded by $\sqrt{ 2d \log \left(1+\lambda^{-1} {T}\right)}$ using elliptical potential lemma from~\cite[Lemma 3.1]{Auer2002} and~\cite[Proposition 1]{carpentier_elliptical_2020}. Observe that  quantifying a bound on~$\left(   \sum_{t=1}^T  \E \left[\|\theta_t -\theta_0 \|_{V_t}^2 \right] \right)^{1/2}$ requires understanding the first order concentration properties of the $\alpha$-posterior, that we derive in~Lemma~\ref{lem:post}.
Now, the only term that remains to be analyzed is $\E \left[ \|\bar a_t\|_{V_t^{-1}}^2\right]$. Observe that 
   \( \E \left[ \|a_t\|_{V_t^{-1}}^2|\gF_{t-1}\right] \geq \E \left[ \|a_t\|_{V_t^{-1}}^2|\gF_{t-1}, a_t \notin \mathcal{C}_t\right] \Pi_{\alpha,t}(a_t \notin \mathcal{C}_t| \gF_{t-1} )
    \geq \|\bar a_t\|_{V_t^{-1}}^2 \Pi_{\alpha,t}(a_t \notin \mathcal{C}_t| \gF_{t-1} ),\)
where the second inequality uses the definition of $\bar a_t$ and the fact that it is completely determined by $\gF_{t-1}$. The remaining steps establish a lower bound on  $\Pi_{\alpha,t}(a_t \notin \mathcal{C}_t| \gF_{t-1} )$ to upper bound $\E \left[ \|\bar a_t\|_{V_t^{-1}}^2\right]$ by $\E \left[ \|a_t\|_{V_t^{-1}}^2\right]$ upto some time-dependent term and then use elliptical potential lemma. 
If $g(a_0^\top \theta_t) > g(a_j^\top \theta_t)$ for all saturated actions, i.e.$ \forall j\in \mathcal{C}_t$, then one of the unsaturated actions must be played because $a_0$ is always unsaturated. Using this fact, we establish a lower bound on the probability of selecting an unsaturated action as
\begin{align}
    \nonumber
    \Pi_{\alpha,t}&(a_t \notin \mathcal{C}_t| \gF_{t-1} ) \geq \Pi_{\alpha,t}( g(a_0^\top \theta_t) > g(a_j^\top \theta_t), \forall j\in \mathcal{C}_t | \gF_{t-1} ) 
    \\
     & \geq \Pi_{\alpha,t} (a_0^\top \theta_t > a_0^\top \theta_0  | \gF_{t-1} )  -  \Pi_{\alpha,t}(  \|\theta_t -\theta_0 \|_{V_t}\geq \sqrt{t} \e_t | \gF_{t-1} ).
    ~\label{eq:eqpsLB2}
    \end{align}
To establish a lower bound on $\Pi_{\alpha,t}(a_t \notin \mathcal{C}_t| \gF_{t-1} )$, we upper bound $ \Pi_{\alpha,t}(  \|\theta_t -\theta_0 \|_{V_t}\geq \sqrt{t} \e_t | \gF_{t-1} )$ by using $\alpha$-posterior contraction result in~Lemma~\ref{lem:post} and lower bound  $\Pi_{\alpha,t} (a_0^\top \theta_t > a_0^\top \theta_0  | \gF_{t-1} )$ using the lower bound of the finite sample BvM stated in~Lemma~\ref{lem:FBVM}. The detailed proof of the theorem and the associated lemmas are provided in the Appendix~\ref{app:Proof}.

\subsection{First and second-order properties of the $\alpha$-posterior}

In this section, we present two important results that characterize the first and second-order properties of the $\alpha$-posterior distribution using the regularity conditions on the prior and reward distribution in the previous section. 

\begin{lemma}[$\alpha$-posterior contraction]\label{lem:post}
        Fix $\alpha\in(0,1)$. Under Assumptions~\ref{ass:prior} and~\ref{ass:priorSG} for any given sequence of actions $a^{(t)}$ that is adapted to $\gF_t$ and $t>0$, we have $\forall j>0$ and $\alpha(1-\alpha)\lambda<1$ that,
        \begin{align}
            P_{0} \left(\Pi_{t,\alpha}\left(  \frac{\lambda \alpha}{2}\|\theta-\theta_0\|^2 + D_{\alpha}^{(t)}(\theta,\theta_0) \geq \frac{(D+j)\alpha}{(1-\alpha)2} t\e_t^2 |\gF_{t} \right)\right)  \leq 2 C(\alpha,\lambda,\theta_0) e^{ - \frac{j\alpha}{4}  t\e_t^2 }, \text{and}
            \\
            \E \left[E_{\alpha}\left[ \frac{\lambda \alpha}{2}\|\theta-\theta_0\|^2 +  D_{\alpha}^{(t)}(\theta,\theta_0) | \gF_{t} \right] \right]
            \leq  \frac{(D+2)\alpha}{(1-\alpha)2} t\e_t^2 + \frac{4C(\alpha,\lambda,\theta_0) e^{ - \frac{\alpha}{4}  t \e_t^2 } }{(1-\alpha)}.
        \end{align}
        
    \end{lemma}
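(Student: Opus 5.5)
The plan is to follow the variational (Donsker--Varadhan) route to $\alpha$-posterior contraction of \cite{bhattacharya2019bayesian}, adapted to adaptively chosen actions. Write the $\alpha$-posterior as the minimizer of $\rho\mapsto -\alpha\int \ell_t(\theta,\theta_0)\,\rho(d\theta) + \mathrm{KL}(\rho\|\Pi)$, where $\ell_t(\theta,\theta_0)=\sum_{s\le t}\log \frac{p_\theta(r_s|a_s)}{p_0(r_s|a_s)}$. The key identity is that, for fixed $\theta$, the process $\exp\{\alpha\ell_t(\theta,\theta_0)+(1-\alpha)D^{(t)}_\alpha(\theta,\theta_0)\}$ is a nonnegative $P_0$-martingale with unit mean. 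This holds because $a_s$ is $\gF_{s-1}$-measurable, so conditionally on $\gF_{s-1}$ one has $E_0[(p_\theta/p_0)^\alpha(r_s|a_s)]=\exp\{-(1-\alpha)D^s_\alpha(\theta,\theta_0)\}$, and the factors telescope. To also control $\|\theta-\theta_0\|^2$, I will multiply by $e^{\lambda\alpha(1-\alpha)\|\theta-\theta_0\|^2/2}$ and integrate against $\Pi$. Assumption~\ref{ass:priorSG} together with $\alpha(1-\alpha)\lambda<1\le$ (something compatible with $\gamma$) keeps $\int e^{\frac{\lambda\alpha(1-\alpha)}{2}\|\theta-\theta_0\|^2}\Pi(d\theta)$ finite. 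I expect to bound it by $C(\alpha,\lambda,\theta_0)$ using $\|\theta-\theta_0\|^2\le 2\|\theta\|^2+2\|\theta_0\|^2$, which is where the dependence on $C_\pi$ and $\theta_0$ enters. By Fubini,
\[
\E\Big[\int \exp\Big\{\alpha\ell_t+(1-\alpha)\big(D^{(t)}_\alpha+\tfrac{\lambda\alpha}{2}\|\theta-\theta_0\|^2\big)\Big\}\Pi(d\theta)\Big]\le C(\alpha,\lambda,\theta_0).
\]

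Next, apply Donsker--Varadhan with $\rho=\Pi_{t,\alpha}$ and $f=(1-\alpha)(D_\alpha^{(t)}+\frac{\lambda\alpha}{2}\|\theta-\theta_0\|^2)$. This gives $(1-\alpha)E_\alpha[\frac{\lambda\alpha}{2}\|\theta-\theta_0\|^2+D^{(t)}_\alpha\mid\gF_t]\le \log\int e^{\alpha\ell_t+f}d\Pi + [-\alpha E_\alpha\ell_t+\mathrm{KL}(\Pi_{t,\alpha}\|\Pi)]$. By the variational optimality of $\Pi_{t,\alpha}$, the bracket is at most the same functional evaluated at $\rho=\Pi(\cdot\mid B(\theta_0,\e_t))$, namely $-\alpha\int\ell_t\,d\rho+\log\{1/\Pi(B)\}$. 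Assumption~\ref{ass:prior} bounds the second term by $D\alpha t\e_t^2/4$. For the first term, I will use Chebyshev on the $2$-R\'enyi neighborhood: $\int -\ell_t\,d\rho$ has mean at most the KL divergence, which is at most $D_2^{(t)}\le D\alpha t\e_t^2/4$, and its variance is controlled by $e^{D_2^{(t)}}$-type bounds as in \cite{bhattacharya2019bayesian}. So with probability $\ge 1-1/((D-1)\ldots)$, or more cleanly via an exponential Markov bound, $-\alpha\int\ell_t\,d\rho\le D\alpha t\e_t^2/2$ up to an $e^{-j\alpha t\e_t^2/4}$ failure probability. In parallel, Markov's inequality gives $P_0(\log\int e^{\alpha\ell_t+f}d\Pi> j\alpha t\e_t^2/4+\log C)\le e^{-j\alpha t\e_t^2/4}$.

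Combining the two events and dividing by $(1-\alpha)$ yields the bound $E_\alpha[\cdots]\le \frac{(D+j)\alpha}{2(1-\alpha)}t\e_t^2$ outside an event of probability $2C(\alpha,\lambda,\theta_0)e^{-j\alpha t\e_t^2/4}$. The condition $t\e_t^2\ge 2.4/\alpha$ absorbs $\log C$ and constants. Applying Markov once more under $\Pi_{t,\alpha}$, or directly the stronger set-wise version obtained by replacing $f$ with $f\cdot\mathbf 1_{\text{set}}$ (the standard trick: on the set $\{\cdot\ge M\}$, $\Pi_{t,\alpha}(\text{set})\,(1-\alpha)M\le\ldots$), gives the first display. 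For the expectation bound, I will integrate the tail: $\E[X]\le \frac{(D+2)\alpha}{2(1-\alpha)}t\e_t^2+\int_{2}^\infty P(X> \frac{(D+j)\alpha}{2(1-\alpha)}t\e_t^2)\,\frac{\alpha t\e_t^2}{2(1-\alpha)}dj$. The integral equals $\frac{2C\alpha t\e_t^2}{2(1-\alpha)}\cdot\frac{4}{\alpha t\e_t^2}e^{-\alpha t\e_t^2/2}\le \frac{4C}{1-\alpha}e^{-\alpha t\e_t^2/4}$.

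The main obstacle is the adaptivity of $a^{(t)}$. The neighborhood $B(\theta_0,\e_t)$ and the divergences depend on the random actions, so the localized measure $\rho$ is itself $\gF_{t-1}$-random. The lower bound on $-\int\ell_t\,d\rho$ must therefore be done conditionally, again via the supermartingale structure of $\exp\{-\beta\ell_t-\ldots D_{1+\beta}\}$, rather than by an i.i.d. variance computation. I would first try to handle it with the exponential Markov bound at order $2$ (which is why $D_2$ appears in Assumption~\ref{ass:prior}), checking that Assumption~\ref{ass:prior} holding uniformly over action sequences makes the conditioning harmless.
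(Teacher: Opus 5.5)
There is a genuine gap at the first display. Your Donsker--Varadhan step, combined with the variational optimality of $\Pi_{t,\alpha}$, reduces to Jensen's inequality $\int f\,d\Pi_{t,\alpha}\le \log\int e^{f}\,d\Pi_{t,\alpha}$ with $f=(1-\alpha)Y$ and $Y:=\frac{\lambda\alpha}{2}\|\theta-\theta_0\|^2+D^{(t)}_\alpha(\theta,\theta_0)$. So it only gives a high-probability bound on the posterior \emph{mean} $E_\alpha[Y\mid\gF_t]$.

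The first display is a different statement. It says the expected posterior \emph{mass} of $\{Y\ge M_j\}$, with $M_j:=\frac{(D+j)\alpha}{2(1-\alpha)}t\e_t^2$, is exponentially small. Markov's inequality under $\Pi_{t,\alpha}$, or your variant with $f\mathbf{1}_{\{Y\ge M_j\}}$ in place of $f$, only yields $\Pi_{t,\alpha}(Y\ge M_j\mid\gF_t)\lesssim E_\alpha[Y\mid\gF_t]/M_j$. On your good event this is merely $\le 1$, and at larger thresholds it decays only polynomially. Once Jensen has been applied, no choice of test function recovers the rate $e^{-j\alpha t\e_t^2/4}$.

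The missing idea is to keep the exponential moment. In your sign convention for $\ell_t$, on $\{Y\ge M_j\}$ we have $e^{\alpha\ell_t}\le e^{-(1-\alpha)M_j}e^{\alpha\ell_t+f}$, hence
\[
\Pi_{t,\alpha}(Y\ge M_j\mid\gF_t)\le e^{-\frac{(D+j)\alpha}{2}t\e_t^2}\,\frac{\int e^{\alpha\ell_t+f}\,d\Pi}{\int e^{\alpha\ell_t}\,d\Pi}.
\]
Then bound the numerator in expectation with your martingale identity. Bound the denominator from below, off an event of probability $e^{-j\alpha t\e_t^2/4}$, using prior thickness plus Markov--Jensen on $[\int e^{\alpha\ell_t}d\rho]^{-1/\alpha}$. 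This is exactly the paper's ratio argument: the event $A_{t,\e_t}$ and the split $\E[\Pi_{t,\alpha}(F_{t,\e_t}\mid\gF_t)]\le P_0^{(t)}(A_{t,\e_t})+\E[\Pi_{t,\alpha}(F_{t,\e_t}\mid\gF_t)\mathbf{1}(A^c_{t,\e_t})]$.

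The second display, by contrast, does follow from your route once the bookkeeping is repaired. It is a genuinely different path: you integrate the $P_0$-tail of the posterior mean, whereas the paper integrates the posterior tail and then applies the first display via Fubini. Two fixes are needed.
\begin{itemize}
\item The bound on $-\alpha\int\ell_t\,d\rho$ cannot have a $j$-free threshold $D\alpha t\e_t^2/2$ together with failure probability $e^{-j\alpha t\e_t^2/4}$. Exponential Markov, using $e^{-\int\ell_t d\rho}\le\int e^{-\ell_t}d\rho$, gives threshold $\frac{(D+j)\alpha}{4}t\e_t^2$ instead.
\item $\log C(\alpha,\lambda,\theta_0)$ is not absorbed by $t\e_t^2\ge 2.4/\alpha$. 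Put it into the failure probability instead: $P_0(\int e^{\alpha\ell_t+f}d\Pi>e^{j\alpha t\e_t^2/4})\le Ce^{-j\alpha t\e_t^2/4}$.
\end{itemize}
With these, $\frac{j}{4}+\frac{D}{4}+\frac{D+j}{4}=\frac{D+j}{2}$. The failure probability is $(C+1)e^{-j\alpha t\e_t^2/4}\le 2Ce^{-j\alpha t\e_t^2/4}$, since $C\ge 1$. Your layer-cake computation then gives the stated bound.

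Your martingale treatment of the numerator is in fact more careful than the paper's conditioning on $a^{(t)}$ under adaptive actions. The remaining issue you flag is shared with the paper: $\rho=\Pi(\cdot\mid B(\theta_0,\e_t))$ is random, so $\E\int e^{-\ell_t}d\rho\le e^{D\alpha t\e_t^2/4}$ does not follow from Fubini. Assumption~\ref{ass:prior} alone does not resolve this. You need an action-independent neighborhood, e.g.\ a uniform bound $D_2^{(t)}(\theta_0,\theta)\le tC_g\|\theta-\theta_0\|^2$ as in the examples.
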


The first result computes a bound on the posterior probability of $\theta$ being away from $\theta_0$. This bound computes the rate at which the $\alpha$-posterior shrinks. The second result quantifies a concentration bound in expectation. Intuitively, since, $D_{\alpha}^{(t)}(\theta,\theta_0)$ scales with $t$, $\e_t$ determines the rate at which the $\alpha$-posterior distribution concentrates to the truth.  The proof of both results is adapted from the posterior concentration analysis in the Bayesian statistics literature. 

\textit{Remark:}
Typical analysis of the $\alpha$-posterior concentration bounds measure deviation using only $D_{\alpha}^{(t)}(\theta,\theta_0)$ term. However, in the analysis of the regret, we needed to quantify a bound on the expectation of $\|\theta-\theta_0\|_{V_t}$, where $V_t$ is the precision matrix defined in~Definition~\ref{def:Precision}, therefore, we modify the analysis to incorporate this additional term $\lambda\|\theta-\theta_0\|$ due to $V_t$-norm. Also, it is only due to this modified definition of the discrepancy measure; we need an additional assumption (Assumption~\ref{ass:priorSG}) on the prior that requires its tails to be sub-Gaussian. Recall that $C_{\pi}$ is the bound on the exponential moment on the prior as defined in~Assumption~\ref{ass:priorSG}. 

Next, we present below the relevant part of the~\cite[Theorem 4]{Panov2015}.

\begin{lemma}[Finite BvM]~\label{lem:FBVM}
    Under Assumptions~$(E\!D_0),(E\!D_1),(\gL_0),(\gL\BFr)$ on the reward distribution, and Assumption~\ref{ass:Prior2} on the prior density, for any measurable set $A\subseteq \R^d$ and $\alpha\in(0,1)$,
    \begin{align}
        \Pi_{t,\alpha}(\sD_0(\theta- \bar \theta_t)\in A|\gF_{t}) \geq \mathbb{C}\exp\left(-2\alpha \Delta_t(d,\eta)  - 8e^{-\eta}\right) \left\{P(\phi_d/\sqrt{\alpha} \in A)  \right\} -e^{-\eta}, 
    \end{align}
    with $P_{0,a}^{(t)}$ probability of at least $1-e^{-\eta}$, where $\phi_d$ is a standard Gaussian random variable  in $\R^d$, $\bar \theta_t = \theta_0 + \sD_0^{-2} \nabla\sL(\theta_0)$, $\Delta_t(d,\eta)$ is a known non-random function of $\eta$,$d$, and $t$ and it decreases as $t$ increases. Moreover, $\|\sD_0(\theta_0-\bar \theta_t)\|\leq {{C^\dagger}\sqrt{{d+\eta}}}$ with $P_{0,a}^{(t)}$-probability of at least $1-e^{-\eta}$ for some positive constant ${C^\dagger }$.
\end{lemma}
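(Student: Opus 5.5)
The plan is to adapt the proof of Theorem~4 of \cite{Panov2015} to the tempered likelihood $\alpha\sL(\theta)$ and to a general prior, keeping only the lower-bound half. Fix $a^{(t)}$ and work under $P_{0,a}^{(t)}$. Write the $\alpha$-posterior as a ratio of integrals of $\exp\{\alpha\sL(\theta,\theta_0)\}\pi(\theta)$, with $\sL(\theta,\theta_0):=\sL(\theta)-\sL(\theta_0)$. Split the integration domain into the local set $\Theta_0(\BFr_0)$ and its complement.

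\textbf{Local quadratic bracketing.} On $\Theta_0(\BFr_0)$, conditions $(E\!D_1)$ and $(\gL_0)$ are used through Spokoiny's bracketing device. With probability at least $1-e^{-\eta}$ they give, uniformly on $\Theta_0(\BFr_0)$,
\[
\sL(\theta,\theta_0)\ \ge\ \xi^\top\sD_0(\theta-\theta_0)-\tfrac12\|\sD_0(\theta-\theta_0)\|^2-\Delta_t(d,\eta),
\]
where $\xi:=\sD_0^{-1}\nabla\sL(\theta_0)$. A matching upper bound holds with $+\Delta_t(d,\eta)$. Here $\Delta_t(d,\eta)$ collects $\delta(\BFr_0)\BFr_0^2$ and the sup of the centred Hessian process, and it decreases in $t$ because $\sD_0^2$ grows with $t$. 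Completing the square at $\bar\theta_t$ gives $\xi^\top u-\tfrac12\|u\|^2=\tfrac12\|\xi\|^2-\tfrac12\|u-\xi\|^2$ with $u=\sD_0(\theta-\theta_0)$, and $u-\xi=\sD_0(\theta-\bar\theta_t)$. Multiplying by $\alpha$ puts the local numerator within a factor $e^{\pm\alpha\Delta_t}$ of a Gaussian integral $\exp\{-\tfrac{\alpha}{2}\|\sD_0(\theta-\bar\theta_t)\|^2\}$. After the change of variables $v=\sD_0(\theta-\bar\theta_t)$, this Gaussian is exactly the law of $\phi_d/\sqrt\alpha$.

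\textbf{Prior and ratio.} On the local set, bound $\pi$ below by $\min_{\Theta_0(c)}\pi$; this uses the positivity and continuity in Assumption~\ref{ass:Prior2}. In the denominator, bound $\pi\le\BFM$. Together these produce the factor $\mathbb{C}$. The upper bracket for the denominator's local part costs $e^{\alpha\Delta_t}$ and the lower bracket for the numerator costs $e^{-\alpha\Delta_t}$, which gives the $e^{-2\alpha\Delta_t}$ factor.

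\textbf{Tails.} Two tail pieces remain. The Gaussian mass of $A$ outside the image of $\Theta_0(\BFr_0)$ is at most $e^{-\eta}$, provided $\BFr_0\gtrsim\sqrt{d+\eta}$ and $\|\xi\|$ is controlled. The non-local part of the denominator is handled by $(\gL\BFr)$ together with a uniform deviation bound on $\zeta$ from $(E\!D_0)$. This shows $\int_{\Theta_0(\BFr_0)^c}$ is an $O(e^{-\eta})$ fraction of the local integral, which yields the factor $e^{-8e^{-\eta}}$ via $1/(1+x)\ge e^{-x}$. The additive $-e^{-\eta}$ absorbs the truncation of $A$.

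\textbf{The claim on $\bar\theta_t$.} Since $\sD_0(\bar\theta_t-\theta_0)=\xi$, we need $\|\xi\|\le C^\dagger\sqrt{d+\eta}$. This follows from $(E\!D_0)$ and the standard deviation bound for quadratic forms of sub-Gaussian vectors (Spokoiny's Theorem on $\|\xi\|^2$), holding with probability at least $1-e^{-\eta}$. Intersect this event with the bracketing event, adjusting $\eta$ by a constant.

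\textbf{Main obstacle.} The hardest step is the global tail control with tempering. Because $\alpha<1$ flattens the likelihood, the separation from $(\gL\BFr)$ is multiplied by $\alpha$, so $\BFr_0$ must be chosen of order $\sqrt{(d+\eta)/\alpha}$. The radius for $\phi_d/\sqrt\alpha$ also scales this way. I would check that $\Delta_t$ still decreases in $t$ for this choice by rescaling $\sD_0\to\sqrt\alpha\,\sD_0$ and then applying the original Panov–Spokoiny argument verbatim to the likelihood $\alpha\sL$.
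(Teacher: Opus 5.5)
Your proposal is correct at the paper's level of rigor and follows the same route. The paper's proof is simply an appeal to the lower half of Theorem~4 of \cite{Panov2015}, with the prior entering only through the ratio $\mathbb{C}=\min_{\Theta_0(c)}\pi/\BFM$, plus display (33) of their Theorem~9 for $\|\sD_0(\bar\theta_t-\theta_0)\|$; your bracketing, prior-ratio and tail sketch for $\alpha\sL$ under $\sD_0\to\sqrt{\alpha}\,\sD_0$ is exactly that argument unpacked.

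Two remarks. First, your point that $\BFr_0$ must be of order $\sqrt{(d+\eta)/\alpha}$, so that $\Delta_t$ is evaluated at an $\alpha$-dependent radius, is a genuine refinement over the paper, which takes $\BFr_0^2=C(d+\eta)$ in the examples. Second, controlling $\zeta(\theta)-\zeta(\theta_0)$ off $\Theta_0(\BFr_0)$ requires a global exponential-moment condition on $\nabla\zeta(\theta)$, as in \cite{Spokoiny2012}, or $(E\!D_1)$ applied on all radii; $(E\!D_0)$ alone does not give it.
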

The proof of the result above is a direct consequence of~\cite[Theorem 4]{Panov2015} and the result in the display (33) of~\cite[Theorem 9]{Panov2015}. Analogous to the classical BvM result~\cite[Section 10.2]{van2000asymptotic}, observe that $\sD_0(\theta- \bar \theta_t)$ perturbs parameter $\theta$ by $\bar \theta_t$, which is the first order approximation of the maximum likelihood estimate and scale it by $\sD_0$ which accounts for the Fisher information matrix. Intuitively, as $t\to\infty$, the result above recovers (lower limit of) the classical Bernstein-von Mises theorem. The factor $\alpha$ accounts for the effect of tempering the likelihood in the definition of the $\alpha$-posterior distribution. In particular, it can be observed from the lower bound above that $\alpha\in (0,1)$ inflates the posterior variance by a factor $\alpha^{-1}$. Recall, $\Delta_t(d,\eta)$ appears in the expression for $\bar p_0(d)$ in Theorem~\ref{thm:GRB}. We will show in the examples that $\Delta_t(d,\eta)\leq 1 
$, for any $t\geq t_1$. 

\section{Examples}~\label{sec:Spokoiny}
We consider two broad classes of reward distributions: the sub-Gaussian and the exponential families. Below, we clearly state the definitions and assumptions for these families. 

\subsection{Exponential family}

First, we specify the conditions for the exponential family reward models. 

    \begin{definition}[Exponential]~\label{ass:rewardExp}
        We assume for $\Theta \subseteq \R^d$ that 
        \begin{itemize}
            \item[i.] $p_{\theta}(r|a_t) = e^{r a_t^\top \theta  -  A( a_t^\top \theta) + C(r) }$, with conditional mean as $ A'(a_t^\top \theta)$ and conditional variance as $A''(a_t^\top \theta)$, where $A(\cdot)$ is the log-partition function and $h(\cdot)$ is some known mapping. We also define the link function $g(\cdot):=A'(\cdot)$.  
            \item[ii.] the link function and its derivative are Lipschitz continuous, that is, for any $x,y\in \Omega$, (a) there exists a constant $C_g>0$, such that
             $   |g(x)-g(y)|\leq C_g|x-y|$
             and (b) there exists a constant $L>0$, such that$ |g'(x)-g'(y)|\leq L|x-y|$.
            \item[iii.] the log-partition function is strongly convex with parameter $m$, that is for any $\alpha\in (0,1)$, $x,y\in \Omega$, there exists an $m>0$, such that
                \(\alpha A(x) + (1-\alpha)A(y ) - A(\alpha x + (1-\alpha)y ) \geq \frac{1}{2} \alpha (1-\alpha) m |x-y|^2\).
        \end{itemize}
        
    \end{definition} 

    Typically, if $\max_{x\in \R}|g'(x)|<\infty$, then the condition \textit{ii}(a) above is satisfied for $C_g= \max_{x\in \R}|g'(x)|$. 
    Moreover, when $|A''(\cdot)|\geq m >0$, then $A(\cdot)$ is strongly convex with parameter $m$. Note that conditions (\textit{ii}(a)) and (\textit{iii}) above follow when $|A''(\cdot)|\in [m,C_g]$. This condition is restrictive for exponential family distribution as it requires the variance of many exponential family distributions (such as Poisson or Exponential) to lie in a compact space.  
We show in the Appendix~\ref{app:Exp} that the above exponential family of distributions satisfies all the conditions required for the first and second-order properties of $\alpha$-posterior. Consequently, we have the following corollary of Theorem~\ref{thm:GRB}. 

\begin{corollary}~\label{corr:GRB_Exp}
    For the exponential family reward distributions (Definition~\ref{ass:rewardExp}), under Assumptions~\ref{ass:prior},~\ref{ass:priorSG}, and~\ref{ass:Prior2} on the prior distribution and for some prior dependent positive constant $C_{\phi}$, we show for $\eta= O(d)$ and $\alpha(1-\alpha)\lambda<1$ that $\e_t^2= \frac{4d C_g\log(t)}{D\alpha  C_{\phi}t}$, $\Delta_t(d,\eta) \leq 1$ for sufficiently large $t\geq t_1$, and for $\alpha=d^{-1}$,  $\E[\mReg(T)]= O(d^{3/2}\sqrt{T}\log(T))$. 
\end{corollary}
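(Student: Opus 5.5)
The plan is to verify, for the exponential family of Definition~\ref{ass:rewardExp}, each hypothesis of Theorem~\ref{thm:GRB}, and then substitute the resulting rate $\e_t$ and the choice $\alpha=d^{-1}$ into the general bound.

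\emph{Step 1: the link and R\'enyi conditions.} Assumption~\ref{ass:Link} follows from \emph{ii}(a) together with $g'=A''\ge m$, which comes from \emph{iii}. For Assumption~\ref{ass:Renyi} I compute the R\'enyi divergence in closed form. Writing $x=a_s^\top\theta$ and $y=a_s^\top\vartheta$,
\[
\int p_\theta^\alpha p_\vartheta^{1-\alpha}\,dr=\exp\{A(\alpha x+(1-\alpha)y)-\alpha A(x)-(1-\alpha)A(y)\}.
\]
Hence
\[
D_\alpha^s(\theta,\vartheta)=\frac{1}{1-\alpha}\bigl[\alpha A(x)+(1-\alpha)A(y)-A(\alpha x+(1-\alpha)y)\bigr]\ge \frac{\alpha m}{2}|x-y|^2
\]
by strong convexity. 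This gives $|x-y|\le C\sqrt{2D_\alpha^s/\alpha}$ with $C=m^{-1/2}$.

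\emph{Step 2: prior thickness.} For $\beta=2$ the same identity gives
\[
D_2^s(\theta_0,\theta)=A(2x_0-x)-2A(x_0)+A(x)\le C_g|x-x_0|^2,
\]
using $A''\le C_g$. Together with $\|a_s\|\le1$, this yields $D_2^{(t)}(\theta_0,\theta)\le C_g t\|\theta-\theta_0\|^2$. The neighbourhood $B(\theta_0,\e_t)$ therefore contains a Euclidean ball of radius $\rho_t=\sqrt{D\alpha\e_t^2/(4C_g)}$.

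By Assumption~\ref{ass:Prior2}, the prior density is bounded below near $\theta_0$ by some $C_\phi'>0$. So the prior mass of that ball is at least $C_\phi'\,v_d\,\rho_t^d$, where $v_d$ is the volume of the unit ball. Taking $\e_t^2=\frac{4dC_g\log t}{D\alpha C_\phi t}$ gives $\rho_t^2=d\log t/(C_\phi t)$, and $\log$ of the mass is of order $-\tfrac d2\log(t/d)$. This is at least $-D\alpha t\e_t^2/4=-dC_g\log t/C_\phi$ once $C_\phi$ is chosen small enough relative to the prior constants. Handling the $v_d$ and $d^{d/2}$ factors is the fiddly part, and I would absorb them into $C_\phi$ and $t_0$. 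The condition $t\e_t^2\ge 2.4/\alpha$ is immediate for $t\ge 3$.

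\emph{Step 3: the Spokoiny conditions.} The score is
\[
\nabla\zeta(\theta_0)=\sum_s (r_s-g(a_s^\top\theta_0))\,a_s ,
\]
and $\sD_0^2=\sum_s A''(a_s^\top\theta_0)a_sa_s^\top$. $(ED_0)$ follows from the exponential-family cumulant bound: the log-mgf of $r-g(x)$ at $u$ is $A(x+u)-A(x)-uA'(x)\le C_gu^2/2$, valid since $\Omega$ is all of $\R$ or at least contains a neighbourhood. $(ED_1)$ is trivial because $\nabla^2\zeta\equiv 0$: the Hessian of the log-likelihood is nonrandom. $(\gL_0)$ follows from the Lipschitz property of $g'=A''$ in \emph{ii}(b). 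It gives
\[
\|\sD_0^{-1}\sD_0^2(\theta)\sD_0^{-1}-I\|\le \frac{L}{m}\max_s|a_s^\top(\theta-\theta_0)|\le \frac{L\BFr}{m\sqrt{\lambda_{\min}(\sD_0^2)}}=:\delta(\BFr).
\]
$(\gL\BFr)$ follows from strong convexity: $-\E\sL(\theta,\theta_0)\ge \frac{m}{2C_g}\BFr^2$, so $b(\BFr)$ is a constant and $\BFr b(\BFr)\to\infty$.

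Then $\Delta_t(d,\eta)$ from \cite{Panov2015} is of order $\delta(\BFr_0)\BFr_0^2$ with $\BFr_0\asymp\sqrt{d+\eta}$. With $\eta=O(d)$, this scales like $(d+\eta)^{3/2}/\sqrt{\lambda_{\min}(\sD_0^2)}$, so it is at most $1$ once $\lambda_{\min}$ is large enough. I expect this step to be the main obstacle: along the $\alpha$-TS trajectory, $\lambda_{\min}(\sD_0^2)$ is random. I would first try a warm-up phase $t\ge t_1$ of exploratory rounds to guarantee growth of $\lambda_{\min}$, and charge its regret, which is $O(t_1)$ with $t_1$ polynomial in $d$, to the lower-order term.

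\emph{Step 4: plug in.} We have $\sum_t t\e_t^2=O(\frac{d}{\alpha}T\log T)=O(d^2T\log T)$ at $\alpha=d^{-1}$, and the prefactor is $2d\log(1+T/\lambda)$. In $\bar p_0(d)$ one has $\alpha(d+\eta)=O(1)$, so the Gaussian anti-concentration factor is a $d$-free constant. The factor $\exp(-2\alpha\Delta_t-8e^{-\eta})$ is also constant, and the subtracted terms $e^{-\eta}$ and $2C e^{-\alpha t_0\e_{t_0}^2/4}$ are small for $\eta\asymp d$ and $t_0$ large. Hence $\bar p_0(d)$ is bounded below by a constant. Moreover, $C(\alpha,\lambda,\theta_0)$ and $(1-\alpha)^{-1}$ are bounded for $\alpha\le 1/2$. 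The bound of Theorem~\ref{thm:GRB} then becomes
\[
O\bigl(\sqrt{d\log T\cdot d^2T\log T}\bigr)=O(d^{3/2}\sqrt T\log T).
\]
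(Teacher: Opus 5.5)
Your route is the paper's. You get Assumption~\ref{ass:Renyi} from the closed-form R\'enyi divergence plus strong convexity (Lemma~\ref{lem:Exp}, same constant $C=m^{-1/2}$). You get prior thickness from $D_2^{(t)}(\theta_0,\theta)\le C_g t\|\theta-\theta_0\|^2$ (Lemma~\ref{lem:Exp_prior}). You verify the Spokoiny--Panov conditions using $\nabla^2\zeta\equiv 0$ (Lemma~\ref{lem:SpokExp}), and then substitute $\e_t$ and $\alpha=d^{-1}$ into Theorem~\ref{thm:GRB}.

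Your local variations are sound and in places cleaner than the paper's. Bounding the prior mass of a Euclidean ball by a multivariate density bound avoids the paper's factorization into marginals $\Pi^i$, which silently assumes a product prior. The $v_d$ and $d^{d/2}$ factors essentially cancel, leaving a condition of the form $C_\phi<2C_g$ and $t\ge t_0$, compared with the paper's $C_\phi\le C_g$. Your constants $\nu_0^2=C_g/m$ in $(E\!D_0)$ and $b(\BFr)\equiv m/(2C_g)$ in $(\gL\BFr)$ are explicit, where the paper only asserts existence.

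The real divergence, and the gap, is the step $\Delta_t(d,\eta)\le 1$. The paper claims $\sT^{-1/2}=O(t^{-1/2})$ for every action sequence on the sphere, via a ``$t$-independent'' lower bound on $\min_s|a_s^\top\gamma^*|$. That argument does not hold up: $\gamma^*$ depends on $t$, and for $d\ge 2$ the infimum of $|a^\top\gamma^*|$ over unit vectors with $a^\top\gamma^*\neq 0$ is $0$. Moreover $\sT^{-1}=\max_s A''(a_s^\top\theta_0)\,a_s^\top\sD_0^{-2}a_s$ is a max-leverage quantity that nothing in the paper controls. So reducing to $\lambda_{\min}(\sD_0^2)$ and worrying about its randomness is the right instinct.

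Your fix, however, does not prove the corollary as stated, for three reasons:
\begin{itemize}
\item Forced exploration changes Algorithm~\ref{alg:TS}.
\item It needs $\sA$ to contain a well-conditioned spanning set.
\item It costs far more than the paper's $t_1=\Omega(d)$. Your own $\delta(\BFr)$ requires $\lambda_{\min}(\sD_0^2)\gtrsim (L/m)^2(d+\eta)^3$, while a round-robin warm-up over a basis only gives $\lambda_{\min}(\sD_0^2)\ge m t_1/d$. So you need $t_1\gtrsim L^2 d^4/m^3$.
\end{itemize}
What you would actually obtain is $O(d^{3/2}\sqrt{T}\log T+d^4)$ for a modified algorithm. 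This matches the claim only for $T\gtrsim d^5$, up to logarithmic factors.

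To get the statement for unmodified $\alpha$-TS, you need a separate argument. It must show that $\lambda_{\min}\bigl(\sum_{s\le t}a_sa_s^\top\bigr)$ grows along the $\alpha$-TS trajectory with high probability, fast enough to force $\Delta_t\le 1$ beyond some threshold, with the failure event charged to the regret. Neither your proposal nor the paper supplies this.
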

The expression for $\e_t$ is derived in Lemma~\ref{lem:Exp_prior} and  $\Delta_t(d,\eta)$ is computed in Lemma~\ref{lem:SpokExp}, where we satisfy all the conditions required for the second-order analysis of $\alpha$-posterior for exponential family. 

\subsection{Sub-Gaussian Family}

We begin with the definition of the sub-Gaussian family and state the required conditions.
\begin{definition}[Sub-Gaussian]~\label{ass:reward}
   For any $\Theta\subseteq \R^d$, we assume that 
    \begin{itemize}
        \item[i.] the reward distribution is conditionally sub-Gaussian with bounded sub-Gaussian parameter $\sigma_{\theta}$, that is for any $t\geq 0$, and given  $a_t$, $\E [e^{s r_t}|a_t] \leq e^{s a_t^\top \theta  + \sigma_{\theta}s^2/2}$ for all $s\in \R$,   
        
        \item[ii.] $\sigma_{\theta}$ is bounded by $1$, which implies $\E [e^{s r_t}|a_t] \leq e^{s a_t^\top \theta  + s^2/2}$ for all $s\in \R$, and
        
        \item[iii.] the true mean reward $a^\top \theta_0 $ lies in $[0,1]$ for any $a\in \sA$.
        \end{itemize}
    \end{definition}

We show in the Appendix~\ref{app:SG} that the above sub-Gaussian family of distributions satisfies all the conditions required for the first and second-order properties of $\alpha$-posterior. Consequently, we have the following corollary of Theorem~\ref{thm:GRB} for the sub-Gaussian family. 

\begin{corollary}~\label{corr:GRB_SG}
    For the sub-Gaussian family of reward distributions  (Definition~\ref{ass:reward}) with some additional regularity conditions on the error density, under Assumptions~\ref{ass:prior},~\ref{ass:priorSG}, and~\ref{ass:Prior2} on the prior distribution and for some prior dependent positive constant $C_{\phi}$ and a positive constant $\tilde{C}$, we show for any $\eta=O(d)$ and $\alpha(1-\alpha)\lambda<1$ that $\e_t^2= \frac{4d \tilde{C}\log(t)}{D\alpha  C_{\phi}t}$, $\Delta_t(d,\eta) \leq 1$ for sufficiently large $t\geq t_1$, and for $\alpha=d^{-1}$,  $\E[\mReg(T)]= O(d^{3/2}\sqrt{T}\log(T))$. 
\end{corollary}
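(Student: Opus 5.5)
The plan is to instantiate Theorem~\ref{thm:GRB} for the sub-Gaussian family, in the same way as Corollary~\ref{corr:GRB_Exp} is obtained for the exponential family. Three ingredients must be verified: Assumptions~\ref{ass:Link} and~\ref{ass:Renyi}, the prior-thickness Assumption~\ref{ass:prior} with the stated $\e_t$, and Assumption~\ref{ass:Spokoiny} together with the bound $\Delta_t(d,\eta)\le 1$ for $t\ge t_1$. After that, $\alpha=d^{-1}$ and $\eta=O(d)$ are substituted into the general bound.

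\textbf{Link and R\'enyi conditions.} Since $g$ is the identity, Assumption~\ref{ass:Link} holds with $C_g=m=1$. For Assumption~\ref{ass:Renyi}, I would invoke Lemma~\ref{lem:SG}. It uses the additional smoothness assumptions on the error density to lower bound the single-step R\'enyi divergence $D_\alpha^t(\theta,\vartheta)$ by a multiple of $\alpha (a_t^\top(\theta-\vartheta))^2$. In the Gaussian case this holds with equality: $D_\alpha = \alpha(a^\top(\theta-\vartheta))^2/2$.

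\textbf{Prior thickness.} Lemma~\ref{lem:SG_prior} locally bounds
\[
D_2^{(t)}(\theta_0,\theta)\le \tilde C\sum_{s\le t}(a_s^\top(\theta-\theta_0))^2 \le \tilde C t\|\theta-\theta_0\|^2,
\]
using $\|a_s\|\le 1$. Hence $B(\theta_0,\e_t)$ contains a Euclidean ball of radius $\rho_t=\sqrt{D\alpha\e_t^2/(4\tilde C)}$. Its prior mass is at least $C_\phi\rho_t^d$, because the density is continuous and positive near $\theta_0$ (Assumption~\ref{ass:Prior2}). Requiring $C_\phi \rho_t^d\ge e^{-D\alpha t\e_t^2/4}$ and solving gives $\e_t^2\asymp d\log t/(\alpha t)$, which is the stated form $\e_t^2= 4d\tilde C\log t/(D\alpha C_\phi t)$. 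The condition $t\e_t^2\ge 2.4/\alpha$ holds once $t\ge t_0$.

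\textbf{BvM conditions.} Here $\sL(\theta)$ depends on $\theta$ through $a_s^\top\theta$, and $\sD_0^2=\sum_s I_s a_s a_s^\top$, where $I_s$ is a Fisher-information constant for the error density. Under the smoothness assumption (bounded score with sub-Gaussian tails, and a Lipschitz second derivative of $\log$ density):
\begin{itemize}
\item[] $(E\!D_0)$ follows from sub-Gaussianity of the score.
\item[] $(E\!D_1)$ follows from boundedness of the second derivative.
\item[] $(\gL_0)$ holds with $\delta(\BFr)\lesssim \BFr/\sqrt{\lambda_{\min}(\sD_0^2)}$ by Lipschitz continuity of the Hessian.
\item[] $(\gL\BFr)$ follows from the quadratic lower bound on the expected log-likelihood ratio.
\end{itemize}
Then $\Delta_t(d,\eta)$ from~\cite{Panov2015} scales like $(d+\eta)^{3/2}/\sqrt{\lambda_{\min}(\sD_0^2)}$. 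This is the main obstacle: $\lambda_{\min}(\sD_0^2)$ depends on the adaptively chosen actions. I would try to guarantee its growth via forced exploration in a warm-up of $t_1=\Omega(d)$ rounds (or an appropriate lower bound on $V_t$), so that with $\eta=O(d)$ we get $\Delta_t\le 1$ for $t\ge t_1$. The regret of the warm-up rounds is $O(d)$, which is negligible.

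\textbf{Substitution.} With $\alpha=1/d$ and $\eta=c d$:
\begin{itemize}
\item[] $C^2\alpha(d+\eta)=O(1)$, so the Gaussian anti-concentration factor in $\bar p_0(d)$ is a constant.
\item[] $\exp(-2\alpha\Delta_t)\ge e^{-2}$.
\item[] $e^{-\eta}$ and the term $2C(\alpha,\lambda,\theta_0)e^{-\alpha t_0\e_{t_0}^2/4}$ can be made small by choosing $c$ and $t_0$ large.
\end{itemize}
Thus $\bar p_0(d)$ is bounded below by a constant, and $C(\alpha,\lambda,\theta_0)$ and $(1-\alpha)^{-1}$ are $O(1)$. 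Finally,
\[
\sum_{t\le T} t\e_t^2=O\!\left(\tfrac{d}{\alpha}T\log T\right)=O(d^2T\log T),
\]
and combined with the factor $2d\log(1+T/\lambda)$ this gives a prefactor of $\sqrt{d^3 T\log^2 T}$. Hence $\E[\mReg(T)]=O(d^{3/2}\sqrt T\log T)$.
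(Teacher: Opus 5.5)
Your plan follows the paper's route: Lemma~\ref{lem:SG} for Assumption~\ref{ass:Renyi}, Lemma~\ref{lem:SG_prior} for prior thickness, Lemma~\ref{lem:SpokSG} with Panov--Spokoiny for the BvM conditions, and then $\alpha=d^{-1}$, $\eta=O(d)$ in Theorem~\ref{thm:GRB}. Note that Lemma~\ref{lem:SG} uses only the sub-Gaussian m.g.f.\ bounds via H\"older's inequality, giving $C\le 1$; smoothness of the error density plays no role there.

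Your Euclidean-ball prior-mass bound is fine, and slightly more general than the paper's coordinatewise product argument. You must keep the volume of the unit $d$-ball, but with $\rho_t^2\asymp d\log t/t$ it only changes constants. The final arithmetic matches. One minor point: enlarging $c$ in $\eta=cd$ also shrinks the anti-concentration factor, whose argument is $C^\dagger\sqrt{1+c}$. So fix $c$ and let $e^{-cd}$ be small for large $d$ instead.

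The genuine gap is the step $\Delta_t(d,\eta)\le 1$, which you correctly single out but do not close. By your own scaling, $\Delta_t\lesssim (d+\eta)^{3/2}\max_{s\le t}\|\sD_0^{-1}a_s\|\le (d+\eta)^{3/2}/\sqrt{\lambda_{\min}(\sD_0^2)}$. With $\eta=O(d)$ you therefore need $\lambda_{\min}(\sD_0^2)\gtrsim d^3$.

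\begin{itemize}
\item \textbf{Forced exploration is insufficient.} A warm-up of $t_1$ rounds (say, cycling through an orthonormal basis) gives only $\lambda_{\min}(\sD_0^2)\approx \BFh^2 t_1/d$. Nothing in your argument makes it grow during the later TS rounds. So $t_1=\Theta(d)$ does not suffice; you need $t_1\gtrsim d^4$. That costs $\Theta(d^4)$ regret---not $O(d)$, and not $O(d^{3/2}\sqrt{T}\log T)$ unless $T$ is of order $d^5$ or larger. The procedure is also no longer $\alpha$-TS as in Algorithm~\ref{alg:TS}.
\item \textbf{A lower bound on $V_t$ does not help.} The $\lambda I$ in $V_t$ is an analysis device and does not enter $\sD_0^2$. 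Moreover, $\lambda\gtrsim d^3$ would violate $\alpha(1-\alpha)\lambda<1$ at $\alpha=d^{-1}$.
\item \textbf{Linear growth along the TS trajectory is impossible.} For the unit-ball action set with identity link, $\lambda_{\min}\big(\sum_{s\le t}a_sa_s^\top\big)\le\frac{1}{d-1}\sum_{s\le t}\big(\|a_s\|^2-(a_s^\top a_0)^2\big)\le\frac{2\,\mReg(t)}{(d-1)\|\theta_0\|}$. So under the claimed regret bound, $\lambda_{\min}$ can grow at most like $\sqrt{t}$ (in expectation, up to $d$- and log-factors).
\end{itemize}

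The paper disposes of this step by asserting $\sT^{-1/2}=O(t^{-1/2})$ for every action sequence on the unit sphere (Appendix~\ref{app:Exp}, reused for the sub-Gaussian case). This makes $\Delta_t$ a deterministic decreasing function of $t$. To finish, you would need a bound of this type on $\max_{s\le t}\|\sD_0^{-1}a_s\|$ for the actual $\alpha$-TS iterates. Be aware that the paper's one-line justification relies on a positive, $t$-independent lower bound for the nonzero values of $|a^\top\gamma^*|$ over the sphere, which does not exist. So this step needs genuinely new input rather than a citation.
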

We derive the expression for $\e_t$ in Lemma~\ref{lem:SG_prior} and for $\Delta_t(d,\eta)$ in Lemma~\ref{lem:SpokSG}, where we show that sub-Gaussian family satisfies all the conditions required for the second-order analysis of $\alpha$-posterior. A similar regret bound can be computed when the mean of the reward distribution is generalized to $g(a_t^\top \theta)$ by assuming that $g$ satisfies Assumption~\ref{ass:Link}.   

\nomenclature[A]{$\Delta_t(d,\eta)$}{Non-random quantity appearing in the finite-sample BvM bound}
\nomenclature[A]{$\bar a_t$}{Unsaturated action with minimum $V_t^{-1}$-norm}
\nomenclature[A]{$V_t$}{Design matrix (or precision matrix surrogate) at time $t$}
\nomenclature[A]{$\protect \mid\mid x \protect \mid \mid_A$}{Norm induced by a positive semidefinite matrix $A$}
\nomenclature[A]{$\mathcal{C}_t$}{Set of saturated actions at time $t$}

\nomenclature[B]{$\theta_t$}{Parameter sampled from the $\alpha$-posterior at time $t$}
\nomenclature[B]{$\bar \theta_t$}{First-order approximation to the maximum likelihood estimator in the finite-sample BvM result}
\nomenclature[B]{$\phi_d$}{Standard Gaussian random vector in $\mathbb{R}^d$}

\nomenclature[R]{$\eta$}{Positive tuning parameter used in high-probability bounds}
\nomenclature[R]{$C(\alpha,\lambda,\theta_0)$}{Constant appearing in posterior concentration bounds}
\nomenclature[R]{$C^\dagger$}{Constant in the finite-sample BvM bound}
\nomenclature[R]{$C_\phi$}{Prior-dependent constant in the example corollaries}
\nomenclature[R]{$\protect \tilde{C}$}{Constant appearing in the sub-Gaussian corollary}
\nomenclature[R]{$L$}{Lipschitz constant of the derivative of the link function}
\nomenclature[R]{$A(\cdot)$}{Log-partition function of the exponential family}
\nomenclature[R]{$h(\cdot)$}{Base measure term in the exponential family density}
\nomenclature[R]{$\Omega$}{Domain on which smoothness conditions for the link function are imposed}
\nomenclature[R]{$\sigma_\theta$}{Sub-Gaussian parameter of the reward distribution under parameter $\theta$}

%% file: Conclusion.tex
\begingroup
\section{Conclusion}\label{sec:Conclusion}

We studied $\alpha$-Thompson Sampling ($\alpha$-TS), a posterior-sampling algorithm for stochastic generalized linear bandit problems that uses a fractional or $\alpha$-posterior instead of the standard posterior distribution. Our main contribution is a general frequentist regret analysis framework built on structural regularity conditions on the prior and reward model, without requiring any tractable closed-form representation of the posterior distribution. Importantly, $\alpha$-TS does not introduce a new algorithmic procedure, but rather provides a principled interpretation of the variance inflation commonly used in existing analyses of Thompson Sampling. For the optimal choice $\alpha = d^{-1}$, our framework recovers the best known regret bound of $O(d^{3/2}\sqrt{T}\log T)$ for both exponential-family and sub-Gaussian reward distributions, matching the guarantees of state-of-the-art Thompson Sampling analyses. The proof technique combines ideas from the linear bandit literature with first- and second-order posterior concentration theory (including a finite-sample Bernstein-von Mises theorem), and delivers regret bounds in expectation rather than in high probability. We also  provide an $\alpha$-dependent lower bound construction illustrating how the regret scales with the tempering parameter $\alpha$ and the dimension $d$, showing that the $d^{3/2}$ dependence in the upper bound is unavoidable for this class of posterior-sampling algorithms.

\textbf{Future directions.} Several directions remain open. First, whether the $\sqrt{d}$ gap between the $O(d^{3/2}\sqrt{T})$ upper bound and the $\Omega(d\sqrt{T})$ minimax lower bound can be closed for posterior-sampling algorithms without additional structural assumptions is an important open problem. Second, extending the framework to misspecified models, non-stationary environments, or infinite-dimensional parameter spaces (e.g., Gaussian process bandits) would broaden its applicability. Third, developing computationally efficient methods for sampling from the $\alpha$-posterior in non-conjugate settings is a natural next step.
\endgroup

%% file: Nomenclature.tex
\begingroup
\section{Table of Notation}\label{app:notation}

\printnomenclature

\endgroup

%% file: Supp.tex
\begingroup
\section{Proof of Proposition~\ref{prop:lowerbound}}~\label{app:Proof_of_lowerbound}
\proof{Proof}
For $\theta_0 \in \{\pm \mu\}^d$, the optimal action is
\(
a_0 = \operatorname{sign}(\theta_0)
\)
(here sign function is applied coordinatewise) and since \(\sA=[-1,1]^d\), the action selected by \(\alpha\)-TS satisfies
\(
a_t=\arg\max_{a\in\sA} a^\top \tilde\theta_t=\operatorname{sign}(\tilde\theta_t)
\)
coordinatewise, up to the zero-probability event \(\tilde\theta_{t,i}=0\).
Thus the instantaneous regret satisfies
\(
\langle \theta_0, a_0 - a_t \rangle
=
2\mu \sum_{i=1}^d
\mathbf{1}(a_{t,i} \neq \operatorname{sign}(\theta_{0,i})).
\)
Taking expectations yields
\(
\mathbb E[R_T]
=
2\mu \sum_{t=1}^T \sum_{i=1}^d
\mathbb P(a_{t,i} \neq \operatorname{sign}(\theta_{0,i})).
\)
Under $\alpha$-TS, the sampled parameter satisfies
\(
\tilde\theta_t \sim \mathcal N(\mu_t^\alpha,(B_t^\alpha)^{-1}),
\) where precision
\(
B_t^\alpha=I+\alpha\sum_{s=1}^{t-1}a_sa_s^\top
\)
and mean
\(
\mu_t^\alpha=(B_t^\alpha)^{-1}\alpha\sum_{s=1}^{t-1}a_s r_s.
\)
Therefore
\(
\tilde\theta_{t,i}
\sim
\mathcal N(\mu_{t,i}^\alpha,((B_t^\alpha)^{-1})_{ii}).
\)
Hence,
\(
\mathbb P(a_{t,i} \neq \operatorname{sign}(\theta_{0,i}) \mid \mathcal F_{t-1})
=
\Phi\!\left(
-\frac{\operatorname{sign}(\theta_{0,i}) \mu_{t,i}^\alpha}
{\sqrt{((B_t^\alpha)^{-1})_{ii}}}
\right).
\)
Using monotonicity of $\Phi$, we obtain
\(
\Phi(-\operatorname{sign}(\theta_{0,i})x)
\ge
\Phi(-|x|).
\)
Thus,
\(
\mathbb P(a_{t,i} \neq \operatorname{sign}(\theta_{0,i}))
\ge
\mathbb E\!\left[
\Phi\!\left(
-\frac{|\mu_{t,i}^{(\alpha)}|}
{\sqrt{((B_t^\alpha)^{-1})_{ii}}}
\right)
\right].
\)

Now substituting \(r_s=a_s^\top\theta_0+\eta_s\) and using
\(
\alpha\sum_{s=1}^{t-1}a_sa_s^\top=B_t^\alpha- I
\)
gives
\(
\mu_t^\alpha
=
\theta_0
-
 (B_t^\alpha)^{-1} \theta_0
+
(B_t^\alpha)^{-1}
\alpha
\sum_{s=1}^{t-1}
a_s \eta_s.
\)
Taking the $i$-th coordinate and applying the triangle inequality gives
\[
|\mu_{t,i}^\alpha|
\le
|\theta_{0,i}|
+
\left|e_i^\top (B_t^\alpha)^{-1}\theta_0\right|
+
\left|
e_i^\top (B_t^\alpha)^{-1}
\alpha\sum_{s=1}^{t-1} a_s \eta_s
\right|.
\]
Since $\theta_0\in\{\pm\mu\}^d$, we have $|\theta_{0,i}|=\mu$. Moreover,
\(
(B_t^\alpha)_{ii}
=
1+\alpha\sum_{s=1}^{t-1} a_{s,i}^2
\le
1+\alpha t,
\)
because $|a_{s,i}|\le 1$. Since $B_t^\alpha$ is positive definite,
\(
((B_t^\alpha)^{-1})_{ii}\ge \frac{1}{(B_t^\alpha)_{ii}},
\)
and hence
\(
\frac{|\theta_{0,i}|}{\sqrt{((B_t^\alpha)^{-1})_{ii}}}
\le
\mu\sqrt{1+\alpha t}.
\)
For the second term, Cauchy--Schwarz yields
\(
\left|e_i^\top (B_t^\alpha)^{-1}\theta_0\right|
\le
\|\theta_0\|_2\sqrt{((B_t^\alpha)^{-1})_{ii}}
=
\mu\sqrt{d}\,\sqrt{((B_t^\alpha)^{-1})_{ii}}.
\) For the noise term, conditional on $\mathcal F_{t-1}$, the quantity
\[
e_i^\top (B_t^\alpha)^{-1}\sum_{s=1}^{t-1} a_s \eta_s
\]
is Gaussian with mean zero, since it is a linear combination of the independent Gaussian noises $\{\eta_s\}_{s=1}^{t-1}$. Its conditional variance is
\(
e_i^\top (B_t^\alpha)^{-1}
\Big(\sum_{s=1}^{t-1} a_s a_s^\top\Big)
(B_t^\alpha)^{-1}e_i.
\)
Using
\(
B_t^\alpha = I+\alpha\sum_{s=1}^{t-1} a_s a_s^\top,
\)
we obtain
\(
\sum_{s=1}^{t-1} a_s a_s^\top
=
\alpha^{-1}(B_t^\alpha-I),
\)
and therefore the conditional variance is
\(
\alpha^{-1}
e_i^\top (B_t^\alpha)^{-1}(B_t^\alpha-I)(B_t^\alpha)^{-1}e_i
\le
\alpha^{-1} e_i^\top (B_t^\alpha)^{-1} e_i
=
\frac{((B_t^\alpha)^{-1})_{ii}}{\alpha}.
\)
Hence there exists a conditionally centered Gaussian random variable $Z_{t,i}$ with variance at most $1$ such that
\(
e_i^\top (B_t^\alpha)^{-1}\sum_{s=1}^{t-1} a_s \eta_s
=
\sqrt{\frac{((B_t^\alpha)^{-1})_{ii}}{\alpha}}\; Z_{t,i}.
\)
Multiplying by $\alpha$ yields
\[
\left|
e_i^\top (B_t^\alpha)^{-1}
\alpha\sum_{s=1}^{t-1} a_s \eta_s
\right|
\le
\sqrt{((B_t^\alpha)^{-1})_{ii}}\,
\sqrt{\alpha}\,|Z_{t,i}|.
\]
Combining these bounds yields
\[
\frac{|\mu_{t,i}^{(\alpha)}|}
{\sqrt{((B_t^\alpha)^{-1})_{ii}}}
\le
\mu\sqrt{1+\alpha t}
+
\mu\sqrt{d}
+
\sqrt{\alpha}|Z_{t,i}|.
\] \qed
\endproof
\endgroup

\section{Proof of Theorem~\ref{thm:GRB}}~\label{app:Proof}

\proof{Proof of Lemma~\ref{lem:post}:}
    Let us first define a set for any $D>0$, $j>0$, and $a^{(t)}$ adapted to $\gF_t$ as
 \(   F_{t,\e_t} : = \left\{\theta \in \Theta: \frac{\alpha\lambda}{2} \|\theta -\theta_0\|^2 + D_{\alpha}^{(t)}(\theta,\theta_0) \geq \frac{(D+j)\alpha}{(1-\alpha)2} t\e_t^2  \right\}.\)
Note that the $F_{t,\e_t}$ is adapted to $\gF_t$ because the definition of $D_{\alpha}^{(t)}(\theta,\theta_0)$ requires knowledge of $a^{(t)}$. The $\alpha$-posterior measure of the set $F_{t,\e_t}$ can be defined as
\begin{align}
\Pi_{t,\alpha}(F_{t,\e_t}|\gF_t)&= \frac{ \int_{F_{t,\e_t}}  \prod_{s=1}^{t} [p_{\theta}(r_s|a_s)]^{\alpha}\Pi(d\theta) } { \int  \prod_{s=1}^{t} [p_{\theta}(r_s|a_s)]^{\alpha} \Pi(d\theta)  }
    = \frac{\int_{F_{t,\e_t}} \Pi(d\theta) e^{-\alpha \ell_{t}(\theta,\theta_0)} } {\int \Pi(d\theta) e^{-\alpha \ell_{t}(\theta,\theta_0)} },
    \label{eq:MABDE3}
\end{align}
where $\ell_{t}(\theta,\theta_0)= \log \frac{\prod_{s=1}^{t} p_0(r_s|\gF_{s-1},a_s)}{\prod_{s=1}^{t} p_{\theta}(r_s|a_s) }$.
Now define a set 
\(A_{t,\e_t} = \left\{ X_{t}: \int_{\Theta} \tilde \Pi(d\theta) e^{-\alpha \ell_{t}(\theta,\theta_0)} \leq   e^{-\frac{(D+j)\alpha}{4} t\e_t^2}  \right\},\) where for any $B\subseteq \Theta$, $\tilde \Pi(B) = \Pi(B\cap \Theta)/\Pi(B\left(\theta_0,\e_t \right))$ and $B\left(\theta_0,\e_t \right)$ is as defined in Assumption~\ref{ass:prior}. (Note: Assumption~\ref{ass:prior} is a stronger statement as it needs to be satisfied for any $a^{(t)}$; however, for this lemma, we need this assumption only for $a^{(t)}$ that are adapted to $\gF_t$.)
Observe that
\begin{align}
\E [\Pi_{t,\alpha}(F_{t,\e_t}|\gF_{t}) ] =  P_0^{(t)} (A_{t,\e_t}  ) + \E \left[  \frac{\int_{F_{t,\e_t}} \Pi(d\theta) e^{-\alpha \ell_{t}(\theta,\theta_0)} } {\int \Pi(d\theta) e^{-\alpha \ell_{t}(\theta,\theta_0)} } \1(A_{t,\e_t}^\C)   \right].
\label{eq:MABDE4}
\end{align}

First, let us analyze the second term in~\eqref{eq:MABDE4}. Note that on the set $A_{t,\e_t}^\C$,
\(\int \Pi(d\theta) e^{-\alpha \ell_{t}(\theta,\theta_0)} \geq \Pi(B\left(\theta_0,\e_t \right))  e^{-\frac{(D+j)\alpha}{4} t \e_t^2} .\) Therefore, using  Assumption~\ref{ass:prior}, it follows that
\begin{align}
\nonumber
\E \left[  \frac{\int_{F_{t,\e_t}} \Pi(d\theta) e^{-\alpha \ell_{t}(\theta,\theta_0)} } {\int \Pi(d\theta) e^{-\alpha \ell_{t}(\theta,\theta_0)}  } \1(A_{t}^c)   \right] &\leq e^{\frac{(D+j)\alpha}{4} t \e_t^2}
 \E \left[  \frac{{\int_{F_{t,\e_t}} \Pi(d\theta) e^{-\alpha \ell_{t}(\theta,\theta_0)}}}{{\Pi(B\left(\theta_0,\e_t \right)) }}     \right] 
 \\
 &\leq e^{(\frac{j\alpha}{4} + \frac{D\alpha}{2}) t \e_t^2  }
 \E \left[  {\int_{F_{t,\e_t}} \Pi(d\theta) e^{-\alpha \ell_{t}(\theta,\theta_0)}}    \right].
\label{eq:MABDE5}
\end{align}
Now it follows from the definition of $\alpha$-R\'enyi divergence that
\(\E\left[e^{-\alpha \ell_{t}(\theta,\theta_0)}|a^{(t)}\right]= 
e^{-(1-\alpha) D_{\alpha}^{(t)}(\theta,\theta_0)}.\)
Hence, using Fubini's theorem and the observation above, it follows that
\begin{align}
\nonumber
\E\left[\E\left[\int_{F_{t,\e_t}} \Pi(d\theta) e^{-\alpha \ell_{t}(\theta,\theta_0)} \right]|a^{(t)}\right]
&= 
    \E\left[ \int_{F_{t,\e_t}} \Pi(d\theta) e^{-(1-\alpha)  D_{\alpha}^{(t)}(\theta,\theta_0)}\right]
\\
\nonumber
&\leq \E\left[ \int_{F_{t,\e_t}} \Pi(d\theta) e^{-(1-\alpha)  \left(\frac{(D+j)\alpha}{(1-\alpha)2} t\e_t^2  - \frac{\alpha \lambda}{2} \|\theta-\theta_0\|^2 \right)}\right]
\\
&\leq 
     e^{- \frac{(D+j) \alpha}{2}  t\e_t^2 } \E\left[ \int_{F_{t,\e_t}} \Pi(d\theta) e^{\frac{\alpha (1-\alpha)  \lambda}{2} \|\theta-\theta_0\|^2}\right],
     \label{eq:MABDE6}
\end{align}
where the penultimate inequality is due to the definition of  $F_{t,\e_t}$.
Substituting~\eqref{eq:MABDE6} into~\eqref{eq:MABDE5} yields,
\begin{align}
\E \left[  \frac{\int_{F_{t,\e_t}} \Pi(d\theta) e^{-\alpha \ell_{t}(\theta,\theta_0)}} {\int \Pi(d\theta) e^{-\alpha \ell_{t}(\theta,\theta_0)} } \1(A_{t}^c)   \right] 
&\leq e^{ -\frac{j \alpha }{4} t\e_t^2 } \int \Pi(d\theta) e^{\frac{\alpha (1-\alpha)  \lambda}{2} \|\theta-\theta_0\|^2}.
\label{eq:MABDE7}
\end{align}
Next, we analyze the first term in~\eqref{eq:MABDE4}.  It follows from the Markov inequality that
\begin{align}
\nonumber
P_0^{(t)}& \left( \left[ \int \tilde\Pi(d\theta) e^{-\alpha \ell_{t}(\theta,\theta_0)}\right]^{-1/\alpha} \geq e^{ \frac{D+j}{4} t\e_t^2 } \right) 
\\
\nonumber
& \leq  {e^{ - \frac{D+j}{4} t\e_t^2 } }\E \left( \left[ \int \tilde \Pi(d\theta) e^{-\alpha \ell_{t}(\theta,\theta_0)} \right]^{-1/\alpha}   \right)
\\
\nonumber
&\leq  {e^{ -  \frac{D+j}{4}  t\e_t^2 } } \E\left[ \int \tilde \Pi(d\theta) \E \left( e^{\ell_{t}(\theta,\theta_0)}    |a^{(t)} \right) \right]
\\
\nonumber
& =   {e^{ -  \frac{D+j}{4}  t\e_t^2 } } \E\left[ \int \tilde \Pi(d\theta) e^{ D_2^{(t)} \left( \theta_0, \theta \right)} \right]
\\
&\leq   {e^{ - \frac{(D+j)\alpha }{4}  t\e_t^2 } e^{ \frac{D \alpha }{4} t\e_t^2 } }
=  e^{ - \frac{j\alpha}{4}  t\e_t^2 },
\label{eq:MABDE8}
\end{align}
where the second inequality is due to Jensen's and Fubini's theorems, and the penultimate inequality uses the definition of the set $B\left(\theta_0,\e_t \right)$ and the fact that $\alpha\in(0,1)$.
Combining~\eqref{eq:MABDE7} and~\eqref{eq:MABDE8}, it follows from~\eqref{eq:MABDE4}  that for all $j>0$ and $\alpha\in (0,1)$,
\begin{align}
\nonumber
\E &\left[ E_{\alpha}\left[\1\{ \frac{\lambda \alpha}{2}\|\theta-\theta_0\|^2 + D_{\alpha}^{(t)}(\theta,\theta_0) \geq \frac{(D+j)\alpha}{(1-\alpha)2} t\e_t^2 \} | \gF_{t} \right] \right]  
\\
&= \E [\Pi_{t,\alpha}(F_{t,\e_t}|\gF_{t})  ]
\leq 2 e^{ - \frac{j\alpha}{4}  t\e_t^2 }  \int \Pi(d\theta) e^{\frac{\alpha (1-\alpha)  \lambda}{2} \|\theta-\theta_0\|^2}.
\label{eq:MABDE9}
\end{align}
This proves the first assertion of the lemma.
Now for the second assertion, note that the RHS above is non-increasing in $j$; therefore it follows from the inequality above that for all $s\geq 1$,
\begin{align}
\nonumber
\E &\left[ E_{\alpha}\left[\1\{\frac{\lambda \alpha}{2}\|\theta-\theta_0\|^2 + D_{\alpha}^{(t)}(\theta,\theta_0) \geq \frac{(D+s)\alpha}{(1-\alpha)2} t\e_t^2 \} | \gF_{t} \right] \right]  
\\
&\leq  2 e^{ - \frac{(s-1)\alpha}{4}  t\e_t^2 } \int \Pi(d\theta) e^{\frac{\alpha (1-\alpha)  \lambda}{2} \|\theta-\theta_0\|^2}.
\label{eq:MABDE10}
\end{align}
Since $\frac{\lambda \alpha}{2}\|\theta-\theta_0\|^2 + D_{\alpha}^{(t)}(\theta,\theta_0)>0$, it is straightforward to see that
\begin{align}
\nonumber
E\left[\frac{\lambda \alpha}{2}\|\theta-\theta_0\|^2 + D_{\alpha}^{(t)}(\theta,\theta_0) | \gF_{t} \right] &= \int_{0}^{\infty} E_{\alpha}\left[\1\{\frac{\lambda \alpha}{2}\|\theta-\theta_0\|^2 + D_{\alpha}^{(t)}(\theta,\theta_0) \geq u \} | \gF_{t} \right] du
\\
\nonumber
\leq \frac{(D+2)\alpha}{(1-\alpha)2} t\e_t^2 + \int_{\frac{(D+2)\alpha}{(1-\alpha)2} t\e_t^2}^{\infty} &E_{\alpha}\left[\1\{\frac{\lambda \alpha}{2}\|\theta-\theta_0\|^2 + D_{\alpha}^{(t)}(\theta,\theta_0) \geq u \} | \gF_{t} \right]  du.
\end{align}
Now using Fubini's theorem, we have
\begin{align}
\nonumber
\E&\left[ E_{\alpha}\left[\frac{\lambda \alpha}{2}\|\theta-\theta_0\|^2 + D_{\alpha}^{(t)}(\theta,\theta_0) | \gF_{t} \right]\right]
\\
\nonumber
\leq&
\frac{(D+2)\alpha t\e_t^2}{(1-\alpha)2}  + \int_{\frac{(D+2)\alpha}{(1-\alpha)2} t\e_t^2}^{\infty} \E \left[ E_{\alpha}\left[\1\{ \frac{\lambda \alpha}{2}\|\theta-\theta_0\|^2 + D_{\alpha}^{(t)}(\theta,\theta_0) \geq u \} | \gF_{t} \right]\right]  du
\\
\nonumber
=&\frac{(D+2)\alpha t\e_t^2}{(1-\alpha)2}  + \int_{2}^{\infty} \E  E_{\alpha}\left[\1\{\frac{\lambda \alpha \|\theta-\theta_0\|^2}{2} + D_{\alpha}^{(t)}(\theta,\theta_0) \geq \frac{(D+s)\alpha t\e_t^2}{(1-\alpha)2}  \} | \gF_{t} \right] \frac{\alpha t\e_t^2 ds}{(1-\alpha)2}  
\\
\nonumber
\leq& \frac{(D+2)\alpha}{(1-\alpha)2} t\e_t^2 + \frac{\alpha t\e_t^2}{(1-\alpha)}   e^{ - \frac{\alpha}{4}  t \e_t^2 } \int \Pi(d\theta) e^{\frac{\alpha (1-\alpha)  \lambda}{2} \|\theta-\theta_0\|^2} \int_{2}^{\infty}  e^{ - \frac{(s-2)\alpha}{4}  t \e_t^2 } ds 
\\
\leq& \frac{(D+2)\alpha}{(1-\alpha)2} t\e_t^2 + \frac{4e^{ - \frac{\alpha}{4}  t \e_t^2 } }{(1-\alpha)} \int \Pi(d\theta) e^{\frac{\alpha (1-\alpha)  \lambda}{2} \|\theta-\theta_0\|^2}, 
\end{align}
where the penultimate inequality is due to~\eqref{eq:MABDE10} and the last inequality holds for all $\alpha t\e_t^2/4 \geq 0.6$. Now the result follows by using the observation that $\int \Pi(d\theta) e^{\frac{\alpha (1-\alpha)  \lambda}{2} \|\theta-\theta_0\|^2} \leq C_{\pi} f e^{\frac{\alpha (1-\alpha)  \lambda}{2} \|\theta_0\|^2} := C(\alpha,\lambda,\theta_0)$ for some increasing mapping $f$ of $\alpha (1-\alpha)\lambda\|\theta_0\|$.\halmos{} 
\endproof

The next result computes a lower bound on the probability of sampling from the posterior along the best action sequence. 
\begin{lemma}~\label{lem:SpokGRB}
    Under the conditions of Lemma~\ref{lem:FBVM} for $\bar \theta_t=  \theta_0 +  \sD_0^{-2}\nabla\sL( \theta_0)$, we have 
    \begin{align}
        \nonumber
      \Pi_{\alpha,t}&(a_0^\top  \theta_t > a_0^\top\theta_0)|\gF_{t-1}) = \Pi_{\alpha,t}(a_0^\top \sD_0^{-1} \sD_0\left( \theta_t-\bar \theta_t\right) > -a_0^\top \sD_0^{-2}\nabla\sL( \theta_0)|\gF_{t-1}) 
        \\
        &\geq \mathbb{C}\exp\left(-2\alpha\Delta(d,\eta)  - 8e^{-\eta}\right)  \left\{ \frac{{C^\dagger}\sqrt{\alpha (d+\eta)}}{1+{C^\dagger}^2{\alpha (d+\eta)}}e^{-{C^\dagger}^2{\alpha (d+\eta)}/2}  \right\}-e^{-\eta}
    \end{align}
    with $P_{0,a}^{(t)}$ probability of at least $1-2e^{-\eta}$ for any $\eta>0$.
\end{lemma}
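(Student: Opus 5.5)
The plan is to apply Lemma~\ref{lem:FBVM} to a half-space $A$ in $\sD_0$-coordinates, and then bound the probability that the rescaled Gaussian $\phi_d/\sqrt{\alpha}$ lands in that half-space. The two high-probability events of Lemma~\ref{lem:FBVM} are combined by a union bound, which gives the stated probability $1-2e^{-\eta}$.

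First I verify the displayed identity. Since $\bar\theta_t=\theta_0+\sD_0^{-2}\nabla\sL(\theta_0)$, we have $a_0^\top\theta_t-a_0^\top\theta_0 = a_0^\top(\theta_t-\bar\theta_t)+a_0^\top\sD_0^{-2}\nabla\sL(\theta_0)$. Inserting $\sD_0^{-1}\sD_0$ then gives the equality. Set $v:=\sD_0^{-1}a_0$ and $b:=-a_0^\top\sD_0^{-2}\nabla\sL(\theta_0)=v^\top\sD_0(\theta_0-\bar\theta_t)$. The event becomes $\sD_0(\theta_t-\bar\theta_t)\in A:=\{x\in\R^d: v^\top x>b\}$, which is measurable. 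Here $b$ is $\gF_{t-1}$-measurable, so $A$ is a fixed set for the posterior. Lemma~\ref{lem:FBVM} then gives, on an event $\Omega_1$ of probability at least $1-e^{-\eta}$,
\[
\Pi_{\alpha,t}(\cdot)\ge \mathbb{C}\exp\left(-2\alpha\Delta_t(d,\eta)-8e^{-\eta}\right)P\left(v^\top\phi_d>\sqrt{\alpha}\,b\right)-e^{-\eta}.
\]

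Next I evaluate the Gaussian probability. Since $v^\top\phi_d\sim\mathcal N(0,\|v\|^2)$, we get $P(v^\top\phi_d>\sqrt\alpha b)=1-\Phi(\sqrt\alpha b/\|v\|)\ge 1-\Phi(\sqrt\alpha|b|/\|v\|)$. Cauchy--Schwarz gives $|b|\le\|v\|\,\|\sD_0(\theta_0-\bar\theta_t)\|$. On the second event $\Omega_2$ of Lemma~\ref{lem:FBVM}, also of probability at least $1-e^{-\eta}$, this norm is at most $C^\dagger\sqrt{d+\eta}$. So on $\Omega_1\cap\Omega_2$ the probability is at least $1-\Phi(x)$ with $x:=C^\dagger\sqrt{\alpha(d+\eta)}$. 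The factor $\|v\|$ cancels, so no control of $\sD_0$ is needed.

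Finally I apply the standard Mills-ratio lower bound $1-\Phi(x)\ge \frac{x}{1+x^2}\varphi(x)$ for $x>0$. It follows by differentiating $\frac{x}{1+x^2}\varphi(x)-(1-\Phi(x))$ and noting that it tends to $0$ at infinity. This yields $\frac{x}{1+x^2}e^{-x^2/2}/\sqrt{2\pi}$. The displayed bound omits $1/\sqrt{2\pi}$, which must be absorbed into $\mathbb{C}$ or treated as a harmless constant; this bookkeeping is the only delicate point.

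The main obstacle is conceptual rather than technical: Lemma~\ref{lem:FBVM} is stated for fixed action sequences. Here the set $A$ depends on the data through $\nabla\sL(\theta_0)$, so I need the lower bound of Lemma~\ref{lem:FBVM} to hold uniformly over measurable $A$. That statement ("for any measurable set $A$") is as given, and on $\Omega_1$ it can be applied with the data-dependent $A$. The posterior at time $t$ is conditioned on $\gF_{t-1}$, consistent with using the likelihood up to $t-1$.
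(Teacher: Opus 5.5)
Your argument is correct and uses the same ingredients as the paper: Lemma~\ref{lem:FBVM} on a half-space, Cauchy--Schwarz, the bound $\|\sD_0(\theta_0-\bar\theta_t)\|\le C^\dagger\sqrt{d+\eta}$, a union bound, and the Mills-ratio lower bound. The difference is the order of the steps, and that order is exactly what removes the obstacle you flagged.

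The paper applies Cauchy--Schwarz and the $C^\dagger$ bound to the \emph{posterior event} first. On the second high-probability event, $-a_0^\top\sD_0^{-2}\nabla\sL(\theta_0)\le C^\dagger\|\sD_0^{-1}a_0\|\sqrt{d+\eta}$. By monotonicity of $\Pi_{\alpha,t}$, it then suffices to lower-bound the posterior mass of $\sD_0(\theta_t-\bar\theta_t)\in A$, where $A=\{x: a_0^\top\sD_0^{-1}x\ge C^\dagger\|\sD_0^{-1}a_0\|\sqrt{d+\eta}\}$. This half-space is non-random given $a^{(t)}$. Lemma~\ref{lem:FBVM} is therefore needed only for one fixed set, so the quantifier order in its statement does not matter.

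You instead apply Lemma~\ref{lem:FBVM} to a threshold that depends on $\nabla\sL(\theta_0)$, and only afterwards take the worst case inside the Gaussian probability. This is legitimate, because the Panov--Spokoiny bound holds on a single random event simultaneously for all measurable $A$. However, what you should invoke is that uniformity of the source result, not the wording ``for any measurable set $A$'' of Lemma~\ref{lem:FBVM}. Your route keeps a slightly sharper intermediate bound, $1-\Phi(\sqrt{\alpha}\,b/\|v\|)$, but ends at the same place.

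Two bookkeeping remarks. First, you are right about $1/\sqrt{2\pi}$. The paper's final step also drops it, and $P(\phi\ge x)\ge\frac{x}{1+x^2}e^{-x^2/2}$ fails as written (e.g.\ at $x=1$). So the displayed bound genuinely needs that factor; it cannot be ``absorbed'' into $\mathbb{C}$, which is a fixed prior ratio. Second, to match the statement you still need the one-line step $\Delta_t(d,\eta)\le\Delta_1(d,\eta)=:\Delta(d,\eta)$, which the paper uses at the end.
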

\proof{Proof of Lemma~\ref{lem:SpokGRB}}
   Using Lemma~\ref{lem:FBVM}, we have
\begin{align}
    \nonumber
    \Pi_{\alpha,t}&(a_0^\top \sD_0^{-1} \sD_0\left( \theta_t-\bar \theta_t\right) > -a_0^\top \sD_0^{-2}\nabla\sL( \theta_0)|\gF_{t-1})
    \\ 
    \nonumber
    &\geq \Pi_{\alpha,t}(a_0^\top \sD_0^{-1} \sD_0\left( \theta_t-\bar \theta_t\right) > \|a_0^\top \sD_0^{-1}\|\|\sD_0^{-1} \nabla\sL( \theta_0)\||\gF_{t-1})
    \\
    \nonumber
    &\geq \Pi_{\alpha,t}(a_0^\top \sD_0^{-1} \sD_0\left( \theta_t-\bar \theta_t\right) > C \|a_0^\top \sD_0^{-1}\|\sqrt{d+\eta}|\gF_{t-1})
   \\
   \nonumber
    & \geq \mathbb{C}\exp\left(-2\alpha\Delta_t(d,\eta)  - 8e^{-\eta}\right) \left\{P(\phi_d/\sqrt{\alpha} \in A)  \right\} -e^{-\eta}
    \\
    &=  \mathbb{C}\exp\left(-2\alpha\Delta_t(d,\eta)  - 8e^{-\eta}\right) \left\{P(\phi  \geq {C^\dagger} \sqrt{\alpha(d+\eta)})  \right\} -e^{-\eta}
    \\
    &\geq \mathbb{C}\exp\left(-2\alpha\Delta_t(d,\eta)  - 8e^{-\eta}\right)  \left\{ \frac{{C^\dagger}\sqrt{\alpha (d+\eta)}}{1+{C^\dagger}^2{\alpha (d+\eta)}}e^{-{C^\dagger}^2{\alpha (d+\eta)}/2}  \right\}-e^{-\eta},
\end{align}
with $P_{0,a}^{(t)}$ probability of at least $1-2e^{-\eta}$ for any $\eta>0$, where $A:= \{x\in \R^d:  a_0^\top \sD_0^{-1} x \geq {C^\dagger} \|a_0^\top \sD_0^{-1}\|\sqrt{d+\eta} \}$. Since, $\Delta_t(d,\eta)$ is a decreasing function of $t$, the result follows for $\Delta(d,\eta):=\Delta_1(d,\eta)\geq \Delta_t(d,\eta)$.
\halmos{} \endproof

\proof{Proof of Theorem~\ref{thm:GRB}:}
    Let $\bar a_t:= \arg\min_{a \notin \mathcal{C}_t} \|a\|_{V_t^{-1}} $ and observe that 
    \begin{align}
    \nonumber
        &\left( g(a_0^\top\theta_0) -   g(a_t^\top\theta_0) \right) 
        \\
        &= \left( g(a_0^\top\theta_0) -    g(\bar a_t^\top\theta_0) \right) + \left( g(\bar a_t^\top\theta_0) -   g(a_t^\top \theta_t) \right) + \left( g(a_t^\top\theta_t) -   g(a_t^\top \theta_0 )\right) .
        \label{eq:GB0}
    \end{align}
    Using the definition of the unsaturated actions~\ref{def:US} observe that the first term in~\eqref{eq:GB4} is bounded as
    \begin{align}
        \nonumber
        \sum_{t=1}^{T} \E\left[\left( g(a_0^\top\theta_0) -   g(\bar a_t^\top \theta_0) \right)\right] \leq {C_g} \sum_{t=1}^{T} \sqrt{t} \e_t &\E[\|\bar a_t\|_{V_t^{-1}}] \leq \sum_{t=1}^{T} \sqrt{t} \e_t \E[\|\bar a_t\|_{V_t^{-1}}^2]^{1/2} 
        \\
         \leq& {C_g} \left(\sum_{t=1}^{T} t \e_t^2 \right)^{1/2} \left(\sum_{t=1}^{T} \E[\|\bar a_t\|_{V_t^{-1}}^2] \right)^{1/2},
         \label{eq:GB1}
    \end{align} 
    where the last inequality uses Cauchy-Schwarz (CS) inequality.
    Consider the third term in~\eqref{eq:GB4} and note that 
    \begin{align}
       \nonumber
       \sum_{t=1}^T \E \left[   (g(a_t^\top\theta_t) -   g(a_t^\top \theta_0 ) )  \right] 
      & \leq C_g \sum_{t=1}^T \E \left[  \left| (a_t^\top\theta_t -   a_t^\top \theta_0 )\right|  \right] 
       \\
       \nonumber
       & \leq C_g\sum_{t=1}^T \E \left[ \|a_t\|_{V_t^{-1}} \|\theta_t -\theta_0 \|_{V_t} \right] 
       \\
       \nonumber
       &\leq  C_g\E \left[\left( \sum_{t=1}^T \|a_t\|_{V_t^{-1}}^2 \right)^{1/2} \left(  \sum_{t=1}^T \|\theta_t -\theta_0 \|_{V_t}^2 \right)^{1/2}\right] 
       \\
       \nonumber
       &\leq  C_g\left( \E \left[ \sum_{t=1}^T \|a_t\|_{V_t^{-1}}^2\right]  \right)^{1/2} \left(   \sum_{t=1}^T  \E \left[\|\theta_t -\theta_0 \|_{V_t}^2 \right] \right)^{1/2}
       \\
       &\leq C_g \sqrt{ 2d \log \left(1+\lambda^{-1} {T}\right)} \left(   \sum_{t=1}^T  \E \left[\|\theta_t -\theta_0 \|_{V_t}^2 \right] \right)^{1/2},
       \label{eq:GB2}
    \end{align}
where the first inequality is due to Assumption~\ref{ass:Link}, the second inequality uses CS with 
    $V_t = \lambda I + \sum_{s=1}^{t-1} a_s a_s^\top$ for any $\lambda>0$, and $\|x\|_{A} = \sqrt{x^TAx}$, the third and fourth inequality are also due to CS. The bound on $\left( \E \left[ \sum_{t=1}^T \|a_t\|_{V_t^{-1}}^2\right]  \right)^{1/2}$ uses elliptical potential lemma from~\cite[Lemma 3.1]{Auer2002} and~\cite[Proposition 1]{carpentier_elliptical_2020}.

    Combined with the fact that $g(a_t^\top \theta_t) > g(\bar a_t^\top \theta_t)$, the second term in~\eqref{eq:GB4} can be bounded in a similar way to obtain 
    \begin{align}
        \sum_{t=1}^{T} \E\left[ \left( g(\bar a_t^\top\theta_0) -   g(a_t^\top \theta_t) \right) \right] \leq C_g \left( \E \left[ \sum_{t=1}^T \|\bar a_t\|_{V_t^{-1}}^2\right]  \right)^{1/2} \left(   \sum_{t=1}^T  \E \left[\|\theta_t -\theta_0 \|_{V_t}^2 \right] \right)^{1/2}
        \label{eq:GB3}
    \end{align}
Now observe that, using the second result in Lemma~\ref{lem:post}, we have 
\begin{align}
    \nonumber
    \sum_{t=1}^T  \E \left[\|\theta_t -\theta_0 \|_{V_t}^2 \right] &= \sum_{t=1}^T  \E \left[(\theta_t -\theta_0)^{\top}  {V_t}(\theta_t -\theta_0) \right]
    \\
    \nonumber
    &= \sum_{t=1}^T  \E \left[\lambda \|\theta_t -\theta_0\|^2 + \sum_{s=1}^{t-1} \|(\theta_t -\theta_0)^{\top} a_s\|^2  \right]
    \\
    \nonumber
    &\leq \sum_{t=1}^T  \E \left[\lambda \|\theta_t -\theta_0\|^2 + \frac{2}{\alpha} D_{\alpha}^{t-1}(\theta_t,\theta_0)  \right]
    \\
    \nonumber
    &\leq \frac{2}{\alpha} \sum_{t=1}^T  \left[\frac{(D+2)\alpha}{(1-\alpha)2} t\e_t^2 + \frac{4e^{ - \frac{\alpha}{4}  t \e_t^2 } }{(1-\alpha)} \int \Pi(d\theta) e^{\frac{\alpha (1-\alpha)  \lambda}{2} \|\theta-\theta_0\|^2} \right]
    \\
    &\leq  \frac{(D+2)C(\alpha,\lambda,\theta_0)}{(1-\alpha)}  \sum_{t=1}^T  t\e_t^2
    \label{eq:GB4},
\end{align}
where the first inequality uses Assumption~\ref{ass:Renyi}. 
(Note: We omit factor $C$ while using Assumption~\ref{ass:Renyi} just for ease of exposition. The arguments can be easily updated to incorporate this factor.) The only term that remains to be analyzed is $\E \left[ \|\bar a_t\|_{V_t^{-1}}^2\right]$. Observe that 
\begin{align}
    \nonumber
    E_{\alpha} \left[ \|a_t\|_{V_t^{-1}}^2|\gF_{t-1}\right] &= E_{\alpha} \left[ \|a_t\|_{V_t^{-1}}^2|\gF_{t-1}, a_t \notin \mathcal{C}_t\right] \Pi_{\alpha,t}(a_t \notin \mathcal{C}_t| \gF_{t-1} )
    \\
    &\geq \|\bar a_t\|_{V_t^{-1}}^2 \Pi_{\alpha,t}(a_t \notin \mathcal{C}_t| \gF_{t-1} ),
    \label{eq:GB5}
\end{align}
where the second inequality uses the definition of $\bar a_t$ and the fact that it is completely determined by $\gF_{t-1}$.
Note that if we can establish a lower bound on  $\Pi_{\alpha,t}(a_t \notin \mathcal{C}_t| \gF_{t-1} )$ then we can upper bound $E_{\alpha} \left[ \|\bar a_t\|_{V_t^{-1}}^2\right]$ by $E_{\alpha} \left[ \|a_t\|_{V_t^{-1}}^2\right]$ upto some time dependent term. 

Since $a_0$ is always unsaturated, therefore if $g(a_0^\top \theta_t) > g(a_j^\top \theta_t)$ for all saturated actions, i.e.$ \forall j\in \mathcal{C}_t$, then one of the unsaturated actions must be played. Consequently,
\begin{align}
    \nonumber
    &\Pi_{\alpha,t}(a_t \notin \mathcal{C}_t| \gF_{t-1} ) \geq P( g(a_0^\top \theta_t) > g(a_j^\top \theta_t), \forall j\in \mathcal{C}_t | \gF_{t-1} ) 
    \\
    \nonumber
    &\geq  \Pi_{\alpha,t} \{ \forall j\in \mathcal{C}_t: g(a_0^\top \theta_t) > g(a_j^\top \theta_t) \} , \{\forall j: g(a_j^\top\theta_0) \geq g(a_j^\top\theta_t) - C_g \sqrt{t} \e_t \|a_j\|_{V_t^{-1}}\}| \gF_{t-1} ) 
    \\
    \nonumber
    &\geq  \Pi_{\alpha,t} \{  g(a_0^\top \theta_t) > g({a_0^\top} \theta_0)   \}  \{\forall j: g(a_j^\top\theta_0) \geq g(a_j^\top\theta_t) - C_g \sqrt{t} \e_t \|a_j\|_{V_t^{-1}}\}| \gF_{t-1} ) 
    \\
    \nonumber
    &\geq  \Pi_{\alpha,t} a_0^\top \theta_t > a_0^\top \theta_0  , \|\theta_t -\theta_0 \|_{V_t}\leq \sqrt{t} \e_t | \gF_{t-1} ) 
    \\
    \nonumber
    & \geq \Pi_{\alpha,t}  (a_0^\top \theta_t > a_0^\top \theta_0  | \gF_{t-1} )  -  \Pi_{\alpha,t}  \|\theta_t -\theta_0 \|_{V_t}\geq \sqrt{t} \e_t | \gF_{t-1} )
    \\
    \nonumber
    &\geq 
    p - \Pi_{\alpha,t} \left( \lambda  \frac{\alpha}{2} \|\theta_t -\theta_0\|^2 +  D_{\alpha}^{t-1}(\theta_t,\theta_0) \geq \frac{\alpha}{2}\sqrt{t} \e_t | \gF_{t-1} \right)
    \\
    &\geq p - 2 e^{ - \frac{\alpha}{4}  t\e_t^2 }  \int \Pi(d\theta) e^{\frac{\alpha (1-\alpha)  \lambda}{2} \|\theta-\theta_0\|^2} \geq p - C(\alpha,\lambda,\theta_0)2 e^{ - \frac{\alpha}{4}  t\e_t^2 } ~\label{eq:eqLB2}
    \end{align}
with $P_0-$ probability of at least $1-2e^{-\eta}$ for $p= \mathbb{C}\exp\left(-2\alpha\Delta(d,\eta)  - 8e^{-\eta}\right)  \left\{ \frac{{C^\dagger}\sqrt{\alpha (d+\eta)}}{1+{C^\dagger}^2{\alpha (d+\eta)}}e^{-{C^\dagger}^2{\alpha (d+\eta)}/2}  \right\}-e^{-\eta}$. {The third inequality follows the definition of saturated arms to imply $ g(a_0^\top \theta_0) \geq g(a_j^\top\theta_0) + C_g \sqrt{t} \e_t \|a_j\|_{V_t^{-1}} \geq g(a_j^\top\theta_t) $.} 
The 
penultimate inequality uses the definition of the saturated actions and the result in~Lemmas~\ref{lem:SpokGRB} and~\ref{lem:post}. So, combining~\eqref{eq:GB5} and~\eqref{eq:eqLB2} and denoting $\bar p = p - C(\alpha,\lambda,\theta_0)2 e^{ - \frac{\alpha}{4}  t\e_t^2 } $ for brevity, we have 
\begin{align}
    E_{\alpha} \left[ \|a_t\|_{V_t^{-1}}^2|\gF_{t-1}\right] &= E_{\alpha} \left[ \|a_t\|_{V_t^{-1}}^2|\gF_{t-1}, a_t \notin \mathcal{C}_t\right] \Pi_{\alpha,t}(a_t \notin \mathcal{C}_t| \gF_{t-1} )
    \geq \|\bar a_t\|_{V_t^{-1}}^2  \bar p
\end{align}
with $P_0-$ probability of at least $1-2e^{-\eta}$. Since, $t\e_t^2$ increases with $t$, for sufficiently large $t$ (say $t_1$), $\bar p$ will be positive. Denoting the above high probability event as $\mathbf{E}$ observe that
\[\E\left[E_{\alpha} \left[ \|a_t\|_{V_t^{-1}}^2|\gF_{t-1}\right]  \right] \geq \E\left[E_{\alpha} \left[ \|a_t\|_{V_t^{-1}}^2|\gF_{t-1}\right] |\mathbf{E} \right] P_0(\mathbf{E}) \geq \E\left[\|\bar a_t\|_{V_t^{-1}}^2\right] \bar p P_0(\mathbf{E}). \]
Therefore, we have 
\begin{align}
\E \left[\sum_{t=1}^{T} \|\bar a_t\|_{V_t^{-1}}^2\right] \leq \frac{1}{\bar p_0 (1-2e^{-\eta}) } \E \left[ \sum_{t=1}^{T} \|a_t\|_{V_t^{-1}}^2\right] \leq \frac{2d \log \left(1+\lambda^{-1} {T}\right)}{\bar p_0 (1-2e^{-\eta}) },
\label{eq:LB}
\end{align}
where 
$\bar p_0 = \mathbb{C}\exp\Big(-2\alpha\Delta(d,\eta) $ 
$ - 8e^{-\eta}\Big)  \left\{ \frac{{C^\dagger}\sqrt{\alpha (d+\eta)}}{1+{C^\dagger}^2{\alpha (d+\eta)}}e^{-{C^\dagger}^2{\alpha (d+\eta)}/2}  \right\}-e^{-\eta} -C(\alpha,\lambda,\theta_0)2 e^{ - \frac{\alpha}{4}  t_0\e_{t_0}^2 }$ for a sufficiently small $t_0 (\geq t_1)$ and the second inequality uses \textit{elliptical potential lemma}. We can choose such $t_0$ because $\bar p$ increases as $t$ increases (since $t \e_t^2$ increases as $t$ increases). 
Now, combining equations~\ref{eq:GB0},~\ref{eq:GB1},~\ref{eq:GB2},~\ref{eq:GB3},~\ref{eq:GB4},~and~\ref{eq:GB5}, we have
\begin{align}
\nonumber
    \E[\mReg(T)] &\leq {C_g} \left(\sum_{t=1}^{T} t \e_t^2 \right)^{1/2} \left(\frac{2d \log \left(1+\lambda^{-1} {T}\right)}{\bar p_0 (1-2e^{-\eta}) } \right)^{1/2} 
    \\
    \nonumber
    &+ C_g \sqrt{ 2d \log \left(1+\lambda^{-1} {T}\right)} \left(   \frac{(D+2)C(\alpha,\lambda,\theta_0)}{(1-\alpha)}  \sum_{t=1}^T  t\e_t^2 \right)^{1/2}
    \\
    &+ C_g\left( \frac{2d \log \left(1+\lambda^{-1} {T}\right)}{\bar p_0 (1-2e^{-\eta}) } \right)^{1/2} \left(   \frac{(D+2)C(\alpha,\lambda,\theta_0)}{(1-\alpha)}  \sum_{t=1}^T  t\e_t^2 \right)^{1/2}.\halmos{} 
\end{align}
\endproof

\input{Exp.tex}

\input{SG.tex}

%% file: Exp.tex
\section{Verifying assumptions for the Exponential family}~\label{app:Exp}

We first provide the expression for the $\alpha$-R\'enyi divergence for exponential family distributions. Observe that for any $\alpha\in(0,1)$,
\(D_{\alpha}^t(\theta,\vartheta) = \frac{1}{1-\alpha}\left[\alpha A(a_t^\top\theta) + (1-\alpha)A(a_t^\top\vartheta ) - A(\alpha a_t^\top\theta + (1-\alpha)a_t^\top\vartheta ) \right].\)
Using this definition, we show that the exponential family satisfies Assumption~\ref{ass:Renyi} {for $C=\sqrt{\frac{1}{m}}$}.

\begin{lemma}~\label{lem:Exp}
    Fix $\alpha\in(0,1)$. For distribution lying in exponential family distribution satisfying conditions in Definition~\ref{ass:rewardExp}, we have,
     \(   |a_t^\top\theta -a_t^\top\vartheta| \leq \sqrt{\frac{1}{m}} \sqrt{\frac{2}{\alpha } D_{\alpha}^{t}(\theta,\vartheta)}. \)
\end{lemma}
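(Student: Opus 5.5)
We will work directly from the closed-form expression for the $\alpha$-R\'enyi divergence of the exponential family displayed just before the statement. Writing $x := a_t^\top\theta$ and $y := a_t^\top\vartheta$, that expression reads
\[
D_{\alpha}^t(\theta,\vartheta) = \frac{1}{1-\alpha}\left[\alpha A(x) + (1-\alpha)A(y) - A(\alpha x + (1-\alpha)y)\right].
\]
The bracket is exactly the Jensen gap of the log-partition function at weights $(\alpha,1-\alpha)$. Condition (iii) of Definition~\ref{ass:rewardExp} (strong convexity of $A$ with parameter $m$) lower bounds this gap by $\frac{1}{2}\alpha(1-\alpha)m|x-y|^2$.

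The plan is then a short chain of inequalities. First we substitute the strong convexity bound into the divergence formula. The factor $(1-\alpha)$ cancels against the prefactor $\frac{1}{1-\alpha}$, giving
\[
D_{\alpha}^t(\theta,\vartheta) \geq \frac{\alpha m}{2}|x-y|^2.
\]
Next we rearrange to $|x-y|^2 \leq \frac{1}{m}\cdot\frac{2}{\alpha}D_{\alpha}^t(\theta,\vartheta)$. Taking square roots, which is legitimate since both sides are nonnegative, gives the claim with $C = \sqrt{1/m}$. This verifies Assumption~\ref{ass:Renyi}.

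The only point needing care is justifying the closed form of $D_\alpha^t$ itself, in case one wants it derived rather than quoted. For this we compute $\int p_\theta(r|a_t)^{\alpha}p_\vartheta(r|a_t)^{1-\alpha}\,dr$. The exponent becomes $r(\alpha x+(1-\alpha)y) - \alpha A(x)-(1-\alpha)A(y)+C(r)$. The integral over $r$ of $e^{r(\alpha x+(1-\alpha)y)+C(r)}$ equals $e^{A(\alpha x+(1-\alpha)y)}$, by normalization of the density at natural parameter $\alpha x+(1-\alpha)y$. This requires that point to lie in the natural parameter domain $\Omega$, which holds if $\Omega$ is convex, as natural parameter spaces are. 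Taking $\frac{1}{\alpha-1}\log$ of the result then yields the displayed formula. I expect no real obstacle; the mild one is just noting this convexity and domain requirement so that both the normalization step and condition (iii) apply at $x,y\in\Omega$.
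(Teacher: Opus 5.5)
Your proposal is correct and follows the paper's route: the paper simply invokes condition (iii) of Definition~\ref{ass:rewardExp} on the closed-form expression for $D_{\alpha}^t(\theta,\vartheta)$ displayed just before the lemma, which is exactly your substitution in which the factor $(1-\alpha)$ cancels, giving $D_{\alpha}^t(\theta,\vartheta)\geq \frac{\alpha m}{2}|a_t^\top\theta-a_t^\top\vartheta|^2$. Your derivation of that closed form by normalizing at the natural parameter $\alpha x+(1-\alpha)y$, and your remark that this point must lie in the convex domain $\Omega$, fill in details the paper leaves implicit.
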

\proof{Proof:}
    The proof of this lemma is a direct consequence of the condition~(iii) in Definition~\ref{ass:rewardExp}. \halmos{} 
\endproof

Our next result shows that the exponential family satisfies Assumption~\ref{ass:prior}.
\begin{lemma}~\label{lem:Exp_prior}
    Under Assumption~\ref{ass:action}, the exponential family of distribution as defined in~Definition~\ref{ass:rewardExp}, satisfies Assumption~\ref{ass:prior} for $\e_t^2= \frac{4d C_g\log(t)}{D\alpha  C_{\phi}t} $, where $C_g$ is the Lipschitz constant of the link function and $C_{\phi} = \min_{i\in[d], x\in B(\theta_{0}^i,1)}\phi_i (x)$,  $B(\theta_{0}^i,u)\subset \R$ is a ball of radius $u$ centered at $\theta_{0}^i\in \Theta$ and $\phi_i(\cdot)$ is the density of $\Pi^i_t$.
\end{lemma}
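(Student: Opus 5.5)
We need to show that $\Pi(B(\theta_0,\e_t))\ge e^{-D\alpha t\e_t^2/4}$ with $\e_t^2=\frac{4dC_g\log t}{D\alpha C_\phi t}$. We also need $t\e_t^2\ge 2.4/\alpha$ for $t\ge t_0$; this is immediate because $t\e_t^2$ grows like $\log t$. The plan has two steps. First, replace the R\'enyi neighbourhood by a Euclidean ball. Then lower bound the prior mass of that ball coordinatewise, using the product structure of $\Pi$ implicit in the definition of $C_\phi$.

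\textbf{Step 1: a closed form for the 2-R\'enyi divergence.} For the exponential family in Definition~\ref{ass:rewardExp}, a direct computation gives
\[
D_2^{s}(\theta_0,\theta)=A(2a_s^\top\theta_0-a_s^\top\theta)-2A(a_s^\top\theta_0)+A(a_s^\top\theta).
\]
This assumes $2a_s^\top\theta_0-a_s^\top\theta$ lies in the natural parameter domain $\Omega$, which holds for $\theta$ near $\theta_0$. A second-order Taylor expansion with $A''=g'\le C_g$ (condition ii(a)) then gives $D_2^s(\theta_0,\theta)\le C_g (a_s^\top(\theta-\theta_0))^2\le C_g\|\theta-\theta_0\|^2$, using Assumption~\ref{ass:action}. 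Summing over $s\le t$ yields $D_2^{(t)}(\theta_0,\theta)\le C_g t\|\theta-\theta_0\|^2$, uniformly over action sequences. Consequently
\[
B(\theta_0,\e_t)\supseteq\Bigl\{\theta:\|\theta-\theta_0\|^2\le \tfrac{D\alpha\e_t^2}{4C_g}\Bigr\}\supseteq\prod_{i=1}^d\Bigl\{|\theta^i-\theta_0^i|\le r_t\Bigr\},\qquad r_t:=\sqrt{\tfrac{D\alpha\e_t^2}{4C_g d}} .
\]

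\textbf{Step 2: prior mass of the cube.} Treat the prior as a product of one-dimensional densities $\phi_i$, which is how $C_\phi$ is defined. For $t$ large enough that $r_t\le1$, each factor satisfies $\Pi^i([\theta_0^i-r_t,\theta_0^i+r_t])\ge 2r_tC_\phi\ge r_tC_\phi$. Hence
\[
\log\Pi(B(\theta_0,\e_t))\ge d\log(C_\phi r_t)=\tfrac d2\log\Bigl(\tfrac{C_\phi^2 D\alpha\e_t^2}{4C_g d}\Bigr).
\]
Plugging in $\e_t^2$ gives $\frac{D\alpha\e_t^2}{4C_gd}=\frac{\log t}{C_\phi t}$. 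The argument of the logarithm is therefore $C_\phi\log t/t\ge t^{-1}\cdot C_\phi\log t$, so the right side is at least $-\frac d2\log t-\frac d2\log\frac{1}{C_\phi\log t}$. The target is $-D\alpha t\e_t^2/4=-\frac{dC_g}{C_\phi}\log t$. It therefore suffices that $\frac{C_g}{C_\phi}\log t\ge \frac12\log t+\frac12\log^+\frac{1}{C_\phi\log t}$ for $t\ge t_0$. For fixed constants this holds eventually whenever $C_g/C_\phi>1/2$. If instead $C_g\le C_\phi/2$, we can replace $C_\phi$ by a smaller admissible constant, since a lower bound on a density may always be decreased. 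This also absorbs the factor of 2 we dropped.

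\textbf{Main obstacle.} The delicate step is Step 1. The Taylor bound requires $2a_s^\top\theta_0-a_s^\top\theta\in\Omega$, and requires $A''\le C_g$ on the segment between the relevant points. Both are guaranteed only locally, for $\|\theta-\theta_0\|\le1$, since $\|a_s\|\le1$. This is why $C_\phi$ is a minimum over the unit balls $B(\theta_0^i,1)$, and why we require $r_t\le1$. I would first check that the cube of side $r_t\le1$ lies inside the region where $\Omega$ and the Lipschitz bound on $g$ apply, possibly by assuming $\Omega=\R$ as in Assumption~\ref{ass:Link}. Everything after that is bookkeeping of constants to pick $t_0$.
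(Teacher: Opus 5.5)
Your argument is essentially the paper's. It uses a second-difference Taylor bound with $A''\le C_g$ to get $D_2^{(t)}\le tC_g\|\theta-\theta_0\|^2$, places a coordinate cube of half-width $r_t=\sqrt{\log t/(C_\phi t)}$ inside the resulting Euclidean ball, bounds each marginal mass below by $2C_\phi r_t$, and compares with $e^{-D\alpha t\epsilon_t^2/4}=t^{-dC_g/C_\phi}$. You use the orientation $D_2^{(t)}(\theta_0,\theta)$ that Assumption~\ref{ass:prior} actually requires, and you flag the domain issue; the paper glosses over both.

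The step that fails is the last one. When $C_g<C_\phi/2$ you cannot ``replace $C_\phi$ by a smaller admissible constant'', because the stated rate has $\epsilon_t^2\propto 1/C_\phi$. Shrinking $C_\phi$ enlarges $\epsilon_t$. Since Assumption~\ref{ass:prior} is monotone in $\epsilon_t$, you would only have verified it for a larger rate than the one claimed.

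No sharper estimate rescues this case, because the statement itself needs a side condition. Take Gaussian rewards with $A(x)=\sigma^2x^2/2$ (so $C_g=\sigma^2$), $d=1$, $a_s\equiv 1$, and the uniform prior on $[\theta_0-1,\theta_0+1]$ (so $C_\phi=1/2$). Then $B(\theta_0,\epsilon_t)=\{(\theta-\theta_0)^2\le 2\log t/t\}$ has prior mass $\sqrt{2\log t/t}$. The required lower bound is $t^{-2\sigma^2}$, which is larger for all large $t$ once $\sigma^2<1/4$.

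The paper's proof hits the same obstruction and concludes only ``for any prior for which $C_\phi\le C_g$''. You keep the square root instead of using $\sqrt{\log t/t}\ge 1/t$, so your computation gives the weaker requirement $C_\phi\le 2C_g$ (for $t\ge e^{1/C_\phi}$). The fix is to state that condition explicitly rather than claim the general case. Alternatively, read $C_g$ as any admissible upper bound on $A''$ and choose it at least $C_\phi/2$, accepting the correspondingly larger $\epsilon_t$.
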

\proof{Proof of Lemma~\ref{lem:Exp_prior}:}
        Observe that 
    \begin{align}
        \nonumber
        D^{(t)}_{2}(\theta, \theta_0) &=  \log \int p_{\theta}^{(t)}(r^{(t)}|a^{(t)})^{2} p_{0}^{(t)}(r^{(t)}|a^{(t)})^{-1} d\mu^{(t)}
        \\
        \nonumber
        &= \sum_{s=1}^{t}  \log \int p_{\theta}(r_s| \gF_{t-1},a_s)^{2} p_{0}(r_s| \gF_{s-1},a_s)^{-1} d\mu
        \\
        \nonumber
        &=  \sum_{s=1}^{t} \left[ A( a_s^\top(2\theta - \theta_0)) - 2 A(a_s^\top\theta ) + A( a_s^\top\theta_0)   \right]
        \\
        \nonumber
        &\leq C_g \sum_{s=1}^{T} \left| (a_s^\top(\theta-\theta_0) \right|^2
        \\
        &\leq t  C_g  \|(\theta-\theta_0)\|^2.
        \label{eq:GLM1}
    \end{align}
    where the third equality uses the definition of $2$-R\'enyi divergence for an exponential family of distributions, the first inequality follows due to condition~(ii) in Definition~\ref{ass:rewardExp} (since $|A''(x)|\leq C_g$, the result follows by second-order mean value theorem), the last inequality follows from Cauchy-Schwartz inequality and the Assumption~\ref{ass:action} that $\sum_{t=1}^{T} \|a_t\|\leq T$.
    Using~\eqref{eq:GLM1} observe that 
    \begin{align}
    \nonumber
    \Pi \left(D^{(t)}_{2}(\theta, \theta_0) \leq \frac{D\alpha}{4}t\e_t^2  \right) &\geq \Pi \left( t C_g  \|\theta-\theta_0)\|^2 \leq \frac{D\alpha}{4}t\e_t^2  \right)
    \\
    \nonumber
    &= \Pi \left( (\theta-\theta_0)^\top (\theta-\theta_0) \leq \frac{D\alpha }{4 C_g }\e_t^2  \right)
    \\
    &\geq \prod_{i=1}^{d} \Pi^i \left( (\theta^i-\theta_0^i)^2 \leq \frac{D \alpha}{4d C_g }\e_t^2  \right).
    \label{eq:exLB0}
    \end{align}
    
Fix $\e_t^2 = \frac{\log(t)}{C^2 t C_{\phi}}$ for $C^2=\frac{D \alpha}{4d C_g }$ and observe that $C\e_t = \sqrt{\frac{\log(t)}{ t C_{\phi} }}\leq 1$ for all $t\geq 1$. Now it follows that $\min_{i\in[d], x\in B(\theta_{0}^i,C\e_T)}\phi (x) \geq \min_{i\in[d], x\in B(\theta_{0}^i,1)}\phi_i (x)  = C_{\phi}$ for all $t\geq t_0$ and any $i\in[d]$, where $B(\theta_{0}^i,u)\subset \R$ is a ball of radius $u$ centered at $\theta_{0}^i\in \Theta$ and $\phi_i(\cdot)$ is the density of $\Pi^i_t$. Note that $C_{\phi}>0$ since the prior density is strictly positive everywhere. Consequently, for all $t\geq t_0$, it follows from our choice of $\e_t$ that
    \begin{align}
            \Pi^i\left( |\theta^i-\theta_{0}^i|^2 \leq \frac{D \alpha}{4d C_g } \e_t^2 \right) \geq 2 \min_{x\in B(\theta_{0}^i,C\e_t)}\phi (x) C\e_t
            & \geq  C_{\phi} \sqrt{\frac{4\log(t)}{ t C_{\phi}^2}} \geq \sqrt{\frac{\log (t)}{t}}.
            \label{eq:exLB1}
        \end{align}
        Now the assertion of the lemma follows using the fact that $\sqrt{\frac{\log n}{n}}\geq \frac{1}{n} =  e^{- {C^2  C_{\phi}} n \e_{n}^2 } =  e^{- \frac{D \alpha C_{\phi}}{4d C_g }  n \e_{n}^2 } $ for any $n\geq 2$. In particular, we have from~\eqref{eq:exLB0},~\eqref{eq:exLB1} and the arguments above that
    \begin{align}
        \Pi\left(D^{(t)}_{2}(\theta, \theta_0)  \leq \frac{D\alpha}{4} t\e_t^2 \right) \geq  \prod_{i=1}^{d} \Pi^i\left( (\theta^i-\theta_0^i)^2 \leq \frac{D \alpha}{4d C_g }\e_T^2  \right)\geq  e^{- \frac{D\alpha  C_{\phi} }{4 C_g} t \e_{t}^2 },
    \end{align}
    and the result follows for any prior for which $C_{\phi}\leq C_g $ (otherwise, this term will appear in the form of a constant in the main result).\halmos{} 
    \endproof

To show that the exponential family satisfies Assumption~\ref{ass:Spokoiny}, the conditions required for the finite BvM to hold, we first write down various expressions used in their definition for exponential family models.
First recall the definition of the conditional log-likelihood of generating $X_t|a^{(t)}$, that is $\sL(\theta) = \log \prod_{s=1}^{t} \left[p_{\theta}(r_s|a_s)\right] $.
Using this definition, the gradient of the log-likelihood can be derived as
\(   \nabla \sL(\theta) = \sum_{s=1}^{t}\nabla \log p_{\theta}(r_s|a_s) = \sum_{s=1}^{t} \left({r_s a_s^\top   -  A'( a_s^\top \theta)a_t^\top }\right).\)
Similarly, $\nabla^2 \sL(\theta) = -\sum_{s=1}^{t} A''( a_t^\top \theta) a_ta_t^\top$.
Recall that $\E_a[]=\E[|a^{(t)}]$.
Now observe that $$\E_a[\nabla \sL(\theta)] = \sum_{s=1}^{t}\E_a[\nabla \sL(\theta)] = \sum_{s=1}^{t}\left(A'( a_s^\top \theta_0)a_s^\top-A'( a_s^\top \theta)a_s^\top\right).$$
and $\sD_0^{2}$ and $\sD_0^{2}(\theta)
     = 
    - \nabla^{2} \E_a[ \sL(\theta)]  = \sum_{s=1}^{t} A''( a_s^\top \theta) a_s a_s^\top$.
The stochastic part of the conditional log-likelihood is denoted as $\zeta (\theta):= \sL(\theta) - \E_a[\sL(\theta)]$ and $\nabla \zeta (\theta) = \nabla \sL(\theta) - \E_a[\nabla \sL(\theta)]= \sum_{s=1}^{t} \left({r_s a_s^\top   -  A'( a_s^\top \theta)a_s^\top }\right) - \E_a[ \sum_{s=1}^{t} \left({r_s a_s^\top   -  A'( a_s^\top \theta)a_s^\top }\right)] = \sum_{s=1}^{t} \left(r_s a_s^\top - \E_a[r_s a_s^\top]\right) = \sum_{s=1}^{t} \left(r_s a_s^\top - A'( a_s^\top \theta_0)a_s^\top\right) $ and therefore $\E_a[\nabla \zeta(\theta_0)]=0$. Note that $\nabla \zeta(\theta)$ is independent of $\theta$, therefore $\nabla^2 \zeta(\theta)=0$. 
First, we present a technical lemma that is satisfied by exponential families.
\begin{lemma}~\label{lem:Aux1}
        There exists some constant $v_0>0$ and $\BFg_1>0$, for every $s$ a constant  $\sigma_s^2=A''(a_s^\top \theta_0)$ (given $\gF_{s-1}$ and $a_s$) such that $\E[(r_s - A'(a_s^\top\theta_0))^2/\sigma_i^2|\gF_{s-1},a_s]\leq 1$ and 
        \(    \log \E [ \exp(\lambda (r_s - A'(a_s^\top\theta_0))/\sigma_s) | \gF_{s-1},a_s ] \leq v_0^2 \lambda^2/2 , ~~|\lambda| < \BFg_1.\)
    \end{lemma}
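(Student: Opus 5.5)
The plan is to read Lemma~\ref{lem:Aux1} as a standard sub-exponential (Bernstein-type) bound for the normalized residual $\epsilon_s := (r_s - A'(a_s^\top\theta_0))/\sigma_s$. Everything will come from the cumulant generating function of the exponential family in Definition~\ref{ass:rewardExp}. Conditional on $\gF_{s-1}$ and $a_s$, the reward $r_s$ has density $e^{r x_0 - A(x_0)+C(r)}$ with natural parameter $x_0 := a_s^\top\theta_0$. Hence for any $u$ with $x_0+u$ in the natural parameter domain $\Omega$,
\[
\log \E[e^{u r_s}|\gF_{s-1},a_s] = A(x_0+u)-A(x_0).
\]

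\textbf{Step 1: second moment.} We have $\E[r_s|\gF_{s-1},a_s]=A'(x_0)$ and $\mathrm{Var}(r_s|\gF_{s-1},a_s)=A''(x_0)=\sigma_s^2$. Here $\sigma_s^2\ge m>0$ by the strong convexity in condition (iii), so dividing by $\sigma_s$ is legitimate. This gives $\E[\epsilon_s^2|\gF_{s-1},a_s]=1\le 1$ immediately.

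\textbf{Step 2: exponential moment.} Put $u=\lambda/\sigma_s$. Then
\[
\log\E[e^{\lambda\epsilon_s}|\gF_{s-1},a_s] = A(x_0+u)-A(x_0)-uA'(x_0).
\]
By Taylor's theorem with Lagrange remainder this equals $\tfrac{u^2}{2}A''(\xi)$ for some $\xi$ between $x_0$ and $x_0+u$. Using $A''=g'\le C_g$ (condition (ii)(a)) and $\sigma_s^2\ge m$, we get
\[
\log\E[e^{\lambda\epsilon_s}|\gF_{s-1},a_s]\le \frac{\lambda^2}{2}\cdot\frac{A''(\xi)}{A''(x_0)}\le \frac{C_g}{m}\cdot\frac{\lambda^2}{2}.
\]
So we take $v_0^2=C_g/m$. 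As a sharper alternative, condition (ii)(b) gives $A''(\xi)\le A''(x_0)+L|u|$, which yields $v_0^2=1+L\BFg_1/\sqrt m$ uniformly for $|\lambda|<\BFg_1$.

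\textbf{Step 3: choice of $\BFg_1$.} The constant $\BFg_1$ must ensure $x_0+\lambda/\sigma_s\in\Omega$ whenever $|\lambda|<\BFg_1$, so that the cumulant generating function is finite. By Assumption~\ref{ass:action}, $|x_0|\le\|\theta_0\|$, so $x_0$ ranges in a compact set. If $\Omega$ is open and contains $[-\|\theta_0\|,\|\theta_0\|]$, there is a uniform margin $\rho>0$. Since $\sigma_s\ge\sqrt m$, we can set $\BFg_1=\rho\sqrt m$. If $\Omega=\R$, as implicitly assumed when $A''\in[m,C_g]$ everywhere, then $\BFg_1$ can be arbitrary.

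\textbf{Main obstacle.} The delicate point is Step 3: the constants $v_0$ and $\BFg_1$ must be uniform over $s$ and over the adaptively chosen $a_s$. This uniformity rests on $A''$ being bounded above and below on the relevant range, and on the natural parameter staying a fixed distance inside $\Omega$. I would state this domain requirement explicitly, taking $\Omega=\R$ or a neighborhood of $\{a^\top\theta_0:a\in\sA\}$, and the rest is routine.
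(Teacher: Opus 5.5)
Your proof is correct, but it does not follow the paper. The paper gives no computation here. It simply invokes Lemma~2.14 in the supplement of \cite{Spokoiny2012}, which establishes this kind of exponential-moment condition for exponential families by a local Taylor expansion of the log-partition function around an interior natural parameter.

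You instead derive the bound directly from the cumulant identity $\log \mathbb{E}[e^{u r_s}\mid \mathcal{F}_{s-1},a_s]=A(x_0+u)-A(x_0)$ together with the paper's own conditions (ii)(a) and (iii) in Definition~\ref{ass:rewardExp}. This gives the explicit choice $v_0^2=C_g/m$. It also makes the uniformity over $s$, and over adaptively chosen $a_s$, transparent, since $\sigma_s^2=A''(a_s^\top\theta_0)\in[m,C_g]$ whatever the history.

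That uniformity is exactly what the proof of $(E\!D_0)$ in Lemma~\ref{lem:SpokExp} needs, because it applies one pair $(v_0,\BFg_1)$ at every $s$. The cited lemma is stated pointwise in the natural parameter, and the paper never explains why its constants can be taken uniform in $s$. Your Step~3 remark, that the parameters $a_s^\top\theta_0$ stay in a compact interval and need a fixed margin inside $\Omega$ (which the paper leaves unspecified), fills that gap.

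What the citation buys is brevity, plus the qualitative fact that $v_0$ can be taken arbitrarily close to $1$ by shrinking $\BFg_1$, using only continuity of $A''$. Your alternative via (ii)(b) recovers this too, but check the arithmetic there. Since $L|u|/A''(x_0)=L|\lambda|/\sigma_s^{3}$, the constant is $v_0^2=1+L\BFg_1/m^{3/2}$, not $1+L\BFg_1/\sqrt{m}$. This slip does not affect your main argument.
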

    \proof{Proof:}
        The proof is a direct consequence of Lemma 2.14~\cite{Spokoiny2012}(in their supplement).
    \halmos{} \endproof

Next, we have the main result that verifies all the conditions in Assumption~\ref{ass:Spokoiny}.
\begin{lemma}~\label{lem:SpokExp}
    The exponential family of distribution (Definition~\ref{ass:rewardExp}) satisfies Assumption~\ref{ass:Spokoiny}. The condition $(ED_0)$ is satisfied with $\BFg:= \BFg_1 \sT^{1/2}$, where $\sT^{-1/2}:= \max_{s\in[t]} \sup_{\gamma \in \R^d} \frac{A''(a_s^\top \theta_0)^{1/2} |a_s^\top \gamma|}{\|\sD_0 \gamma\|} $ and condition $(ED_1)$ can be satisfied for any $\omega>0$ and $\BFg>0$. Condition $\gL_0$ follows for $\delta(\BFr)= L \sT_2^{-1/2} \BFr$, where $\sT^{-1/2}:= \max_{s\in[t]} \sup_{\gamma \in \R^d} \frac{A''(a_s^\top \theta_0)^{1/2} |a_s^\top \gamma|}{\|\sD_0 \gamma\|} $. 
\end{lemma}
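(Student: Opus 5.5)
The plan is to verify the four conditions of Assumption~\ref{ass:Spokoiny} one at a time, using the explicit formulas just derived for the exponential family. These are
\[
\nabla\zeta(\theta)=\sum_{s=1}^t (r_s-A'(a_s^\top\theta_0))a_s ,
\qquad
\nabla^2\zeta\equiv 0 ,
\qquad
\sD_0^2(\theta)=\sum_{s=1}^t A''(a_s^\top\theta)a_sa_s^\top .
\]

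\textbf{Condition $(E\!D_0)$.} Fix $\gamma$ and write $\varepsilon_s=(r_s-A'(a_s^\top\theta_0))/\sigma_s$ with $\sigma_s^2=A''(a_s^\top\theta_0)$. Then
\[
\BFm\frac{\langle\nabla\zeta(\theta_0),\gamma\rangle}{\|\sD_0\gamma\|}=\sum_s \lambda_s\varepsilon_s ,
\qquad
\lambda_s=\BFm\,\frac{\sigma_s\, a_s^\top\gamma}{\|\sD_0\gamma\|}.
\]
Conditioning on the action sequence and peeling off one term at a time, from $s=t$ down to $1$ (tower property), Lemma~\ref{lem:Aux1} gives the bound $\sum_s v_0^2\lambda_s^2/2$. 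This is valid provided $|\lambda_s|<\BFg_1$ for every $s$. By the definition of $\sT$ we have $|\lambda_s|\le |\BFm|\,\sT^{-1/2}$, so the requirement holds whenever $|\BFm|\le \BFg_1\sT^{1/2}$. Moreover,
\[
\sum_s\lambda_s^2=\BFm^2\,\frac{\gamma^\top\sD_0^2\gamma}{\|\sD_0\gamma\|^2}=\BFm^2 .
\]
This yields $(E\!D_0)$ with $\nu_0=v_0$ and $\BFg=\BFg_1\sT^{1/2}$.

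\textbf{Condition $(E\!D_1)$.} This is trivial: $\nabla^2\zeta\equiv 0$, so the log-moment generating function is $0$. The condition therefore holds for any $\omega$ and any $\BFg(\BFr)$.

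\textbf{Condition $(\gL_0)$.} For $\theta\in\Theta_0(\BFr)$, write
\[
\sD_0^{-1}\sD_0^2(\theta)\sD_0^{-1}-I=\sD_0^{-1}\Big(\sum_s [A''(a_s^\top\theta)-A''(a_s^\top\theta_0)]a_sa_s^\top\Big)\sD_0^{-1}.
\]
Using the Lipschitz property of $g'=A''$ from Definition~\ref{ass:rewardExp}(ii)(b), each bracket is at most $L|a_s^\top(\theta-\theta_0)|$. Set $u=\sD_0(\theta-\theta_0)$, so $\|u\|\le\BFr$. Then
\[
|a_s^\top(\theta-\theta_0)|=|a_s^\top\sD_0^{-1}u|\le \BFr\,\sup_\gamma\frac{|a_s^\top\gamma|}{\|\sD_0\gamma\|}.
\]
The supremum equals $\sT_2^{-1/2}$ if we define $\sT_2$ without the $A''$ weight; alternatively, absorb $\min A''\ge m$ into the constant. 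Next, the quadratic form in a unit vector $v$ is bounded by $\max_s L|a_s^\top(\theta-\theta_0)|$ times $\sum_s (a_s^\top\sD_0^{-1}v)^2$. To bound the latter, use $A''\ge m$, which gives $\sum_s a_sa_s^\top\preceq m^{-1}\sD_0^2$. This yields $\delta(\BFr)=L\sT_2^{-1/2}\BFr$, with the constant $m^{-1}$ absorbed into the definition of $\sT_2$.

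\textbf{Condition $(\gL\BFr)$.} This is not named in the statement, but I would verify it as well. By strong convexity (Definition~\ref{ass:rewardExp}(iii)),
\[
-\E_a\sL(\theta,\theta_0)=\sum_s\big[A(a_s^\top\theta)-A(a_s^\top\theta_0)-A'(a_s^\top\theta_0)a_s^\top(\theta-\theta_0)\big]\ge \frac m2\sum_s(a_s^\top(\theta-\theta_0))^2 .
\]
Since $A''\le C_g$, the right side is at least $\frac{m}{2C_g}\BFr^2$. So $b(\BFr)=m/(2C_g)$ is a constant, and $\BFr b(\BFr)\to\infty$.

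\textbf{Main obstacle.} The step I expect to be most delicate is the martingale peeling in $(E\!D_0)$. The actions $a_s$ are adapted rather than fixed, but the statement is conditional on $a^{(t)}$ through $\E_a$. I would handle this by conditioning on $\gF_{s-1},a_s$ as in Lemma~\ref{lem:Aux1}, and noting that $\lambda_s$ is bounded uniformly by $|\BFm|\sT^{-1/2}$, which keeps every factor inside the admissible range $|\lambda_s|<\BFg_1$.
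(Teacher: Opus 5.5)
Your treatment of $(E\!D_0)$ and $(E\!D_1)$ is the paper's argument. You peel the product of conditional moment generating functions using Lemma~\ref{lem:Aux1} with $|\lambda_s|\le|\BFm|\sT^{-1/2}$, and you note that $\nabla^2\zeta\equiv 0$. Your identity $\sum_s\lambda_s^2=\BFm^2$, with weight $A''(a_s^\top\theta_0)$, is the correct bookkeeping; the paper's display carries $A''(a_s^\top\theta_0)^2$, which is a typo.

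For $(\gL_0)$ you follow the same skeleton but normalize differently. The paper puts the weight $1/A''(a_s^\top\theta_0)$ inside the supremum defining $\sT_2$, so that $\sum_s A''(a_s^\top\theta_0)(a_s^\top\sD_0^{-1}v)^2=1$ reappears exactly. You use the unweighted supremum and pay a factor $m^{-1}$ through $\sum_s a_sa_s^\top\preceq m^{-1}\sD_0^2$. The two agree up to constants depending on $m$, and since the statement never actually defines $\sT_2$, either is admissible.

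The genuine difference is the global condition $(\gL\BFr)$. The paper Taylor-expands, writes the gap as a quadratic form in $\sD_0^2(\theta^*)$, and then only asserts that some $b(\BFr)$ with $\BFr b(\BFr)\to\infty$ exists. You instead combine the first-order form of strong convexity from Definition~\ref{ass:rewardExp}(iii) with $A''\le C_g$, which gives the explicit constant $b\equiv m/(2C_g)$. The paper's ratio is bounded away from zero by exactly these two facts, but it never says so. Your version of this step therefore supplies the justification the paper's existence claim lacks.

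On your ``main obstacle'': the remedy you describe would not work by itself under a genuinely adaptive design. The multipliers $\lambda_s$ involve $\|\sD_0\gamma\|$ and $\sT$, which depend on the entire action sequence, including actions chosen after $r_s$ is revealed. So $\lambda_s$ is not determined by $\gF_{s-1}$ and $a_s$. A uniform bound on $|\lambda_s|$ then does not let you pull $\lambda_s$ out of the conditional moment generating function, since a bounded multiplier correlated with the sign of the noise defeats the sub-Gaussian bound. Conversely, conditioning on all of $a^{(t)}$ under the true adaptive law changes the law of $r_s$.

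What makes the peeling legitimate is the paper's convention that $\E_a$ is expectation under $P_{0,a}^{(t)}$. This is the product law of the rewards for the given action sequence, under which the factors are independent with $r_s\sim p_0(\cdot\mid a_s)$. Your main argument is correct under exactly this fixed-design reading, which is the one the paper's own proof uses.
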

\proof{Proof:}
    We derive all the conditions in seriatim. The proofs are adapted from~\cite{Panov2015}, and~\cite{Spokoiny2012}.

  \begin{enumerate}
    \item Proof for \({(E\!D_{0})} \):
    Observe using the definition of $\nabla \zeta(\theta_0)$ that 
        \begin{align}
        \nonumber
            \E_a \exp\left\{
              {\BFm} \frac{\langle \nabla \zeta(\theta_0),\gamma \rangle}
              {\| \sD_0 \gamma \|}
              \right\} &= \E_a \exp\left\{
              {\BFm} \frac{ \sum_{s=1}^{t} \left({r_s a_s^\top\gamma   -  A'( a_s^\top \theta_0)a_s^\top \gamma}\right)}
              {\| \sD_0 \gamma \|}
              \right\}
              \\
              \nonumber
              & = \E_a \prod_{s=1}^t\exp\left\{
              {\BFm} \frac{  a_s^\top \gamma\left({r_s    -  A'( a_s^\top \theta_0)}\right)}
              {\| \sD_0 \gamma \|}
              \right\}
              \\
               = \E_a \Bigg[\prod_{s=1}^{t-1}\exp\left\{
              {\BFm} \frac{  a_s^\top \gamma\left({r_s    -  A'( a_s^\top \theta_0)}\right)}
              {\| \sD_0 \gamma \|}
              \right\} &\E_a\left[\exp\left\{
              {\BFm} \frac{  a_s^\top \gamma\left({r_s    -  A'( a_s^\top \theta_0)}\right)}
              {\| \sD_0 \gamma \|}
              \right\}\right]\Bigg].
              \label{eq:ED0}
        \end{align}

    Define $\sT^{-1/2}:= \max_{s\in[t]} \sup_{\gamma \in \R^d} \frac{A''(a_s^\top \theta_0)^{1/2} |a_s^\top \gamma|}{\|\sD_0 \gamma\|} $. 
    By this definition, $\sT^{-1/2} \geq \frac{A''(a_s^\top \theta_0)^{1/2} |a_s^\top \gamma|}{\|\sD_0 \gamma\|}$ and therefore ${\BFm}\frac{A''(a_s^\top \theta_0)^{1/2} |a_s^\top \gamma|}{\|\sD_0 \gamma\|} \leq {\BFm} \sT^{-1/2} \leq \BFg_1$, for $\BFg:=\BFg_1 \sT^{1/2} $.
    Therefore, using Lemma~\ref{lem:Aux1} it follows that
    \begin{align}
        \E_a\left[\exp\left\{
              {\BFm} \frac{  a_s^\top \gamma\left({r_s    -  A'( a_s^\top \theta_0)}\right)}
              {\| \sD_0 \gamma \|}
              \right\}\right] \leq \exp\left(v_0^2{\BFm}^2 \frac{A''(a_s^\top \theta_0)^2 |a_s^\top \gamma|^2}{2\|\sD_0 \gamma\|^2}\right).
              \label{eq:ED1}
    \end{align}

    Now substituting~\eqref{eq:ED1} into~\eqref{eq:ED0}, we have for $|{\BFm}|\leq \BFg:= \BFg_1 \sT^{1/2}$,
    \begin{align}
    \nonumber
        \E_a &\exp\left\{
              {\BFm} \frac{\langle \nabla \zeta(\theta_0),\gamma \rangle}
              {\| \sD_0 \gamma \|}
              \right\} 
              \\
              \nonumber
              &\leq \E_a \Bigg[\prod_{s=1}^{t-1}\exp\left\{
              {\BFm} \frac{  a_s^\top \gamma\left({r_s    -  A'( a_s^\top \theta_0)}\right)}
              {\| \sD_0 \gamma \|}
              \right\} \exp\left(v_0^2{\BFm}^2 \frac{A''(a_t^\top \theta_0)^2 |a_t^\top \gamma|^2}{2\|\sD_0 \gamma\|^2}\right)\Bigg]
              \\
              &\leq  \exp\left(v_0^2{\BFm}^2 \frac{\sum_{s=1}^{t} A''(a_s^\top \theta_0)^2 |a_s^\top \gamma|^2}{2\|\sD_0 \gamma\|^2}\right) = \exp\left(v_0^2{\BFm}^2/2\right).
    \end{align}
    The result follows for any $\gamma$, so it must follow for the supremum over $\gamma\in \R^d$.

    \item Proof for \( {(E\!D_{1})} \):
      This assumption follows immediately for any $\omega>0$ because $\nabla^2\zeta(\theta) =0$ by definition. 
  
    \item Proof for \({(\gL_0)} \):
      For $I_d= \sD_0^{-1} \sD_0^2\sD_0^{-1}$, we have by the definition of operator norm
      \begin{align}
      \nonumber
          \|\sD_0^{-1} (\sD_0^2(\theta)-\sD_0^2)\sD_0^{-1}\|&=\sup_{\gamma\in \R^d:\|\gamma\|=1} |\gamma^\top  \sD_0^{-1} (\sD_0^2(\theta)-\sD_0^2)\sD_0^{-1} \gamma|
          \\
          \nonumber
          &= \sup_{\gamma\in \R^d:\|\gamma\|=1} \left|\sum_{s=1}^{t} ( A''( a_s^\top \theta) - A''( a_s^\top \theta_0)) \gamma^\top  \sD_0^{-1}a_s a_s^\top\sD_0^{-1} \gamma\right|
          \\
          \nonumber
          &\leq \sup_{\gamma\in \R^d:\|\gamma\|=1} \sum_{s=1}^{t} \left|A''( a_s^\top \theta) - A''( a_s^\top \theta_0))\right| \gamma^\top  \sD_0^{-1}a_s a_s^\top\sD_0^{-1} \gamma
          \\
          &\leq \sup_{\gamma\in \R^d:\|\gamma\|=1} \sum_{s=1}^{t} L \left|a_s^\top (\theta -  \theta_0)\right| \gamma^\top  \sD_0^{-1}a_s a_s^\top\sD_0^{-1} \gamma
      \end{align}
      Define $\sT_2^{-1/2} : = \max_s \sup_{\gamma \in \R^d} \frac{|a_s^\top \gamma|}{A''(a_s^\top \theta_0)\|\sD_0\gamma\|} \geq \frac{|a_s^\top (\theta-\theta_0)|}{A''(a_s^\top \theta_0)\|\sD_0(\theta-\theta_0)\|}  $ and observe that
      \begin{align}
      \nonumber
          \|\sD_0^{-1} (\sD_0^2(\theta)-\sD_0^2)\sD_0^{-1}\|
          &\leq \sup_{\gamma\in \R^d:\|\gamma\|=1} \sum_{s=1}^{t} L \left|a_s^\top (\theta -  \theta_0)\right| \gamma^\top  \sD_0^{-1}a_s a_s^\top\sD_0^{-1} \gamma
          \\
          \nonumber
          &\leq L \sT_2^{-1/2} \|\sD_0(\theta-\theta_0)\| \sup_{\gamma\in \R^d:\|\gamma\|=1}  \gamma^\top  \sD_0^{-1} \sum_{s=1}^{t} A''(a_s^\top \theta_0) a_s a_s^\top\sD_0^{-1} \gamma 
          \\
          &= L \sT_2^{-1/2} \BFr.
      \end{align}
      Consequently, $\delta(\BFr)= L \sT_2^{-1/2} \BFr . $

      \item Proof for \( {(\gL{\BFr})} \):
      For any \(\BFr>0\) there exists a value \({\BFb}(\BFr) > 0\),
      such that  $\BFr {\BFb}(\BFr) \to \infty$ as $\BFr \to \infty$ and 
      \(  -\E_a \sL(\theta,\theta_0)
         \ge
         \BFr^{2} b(\BFr) \quad \text{for all \( \theta \) with } 
         \BFr = \|\sD_0 (\theta - \theta_0)\|.\)
          Note that 
          \begin{align}
          \nonumber
            &\E_a[\sL(\theta,\theta_0)]= \E_a[\sL(\theta)-\sL(\theta_0)] = \sum_{s=1}^{t}\E_a\left[\log\left(\frac{p_{\theta}(r_s|a_s)}{p_{\theta_0}(r_s|\gF_{s-1},a_s)}\right)\right] = \sum_{s=1}^{t} \E_a\Big[ r_s a_s^\top (\theta-\theta_0) 
              \\
              \nonumber
              &  -  A( a_s^\top \theta)  +  A( a_s^\top \theta_0) \Big] 
              =  \sum_{s=1}^{t} \left[A( a_s^\top \theta_0)-  A( a_s^\top \theta) + A'(a_s^\top \theta_0) a_s^\top (\theta-\theta_0 ) \right]
              \\
              &= - \sum_{s=1}^{t} A''(a_s^\top\theta^*)(\theta-\theta_0)^\top a_s a_s^\top(\theta-\theta_0) 
              = -\|\sD_0(\theta^*)(\theta-\theta_0)\|^2,
          \end{align}
          where the penultimate equality uses second-`order Taylor's theorem for a $\theta^*$ that lies between $\theta$ and $\theta_0$. Now observe that $\frac{- \E_a[\sL(\theta,\theta_0)] }{\BFr^2}= \frac{\|\sD_0(\theta^*)(\theta-\theta_0)\|^2}{\|\sD_0(\theta-\theta_0)\|^2}  $. Note that, by definition $\theta^*$ is a mapping from $\BFr$ and so is $\frac{\|\sD_0(\theta^*)(\theta-\theta_0)\|^2}{\|\sD_0(\theta-\theta_0)\|^2} $. Therefore, we can always find a mapping ${\BFb}(\BFr)$ such that  $\BFr{\BFb}(\BFr) \to \infty$  as $\BFr \to \infty$ and $\frac{- \E_a[\sL(\theta,\theta_0)] }{\BFr^2}$. \halmos{}
  \end{enumerate}
  \endproof

Now recall the definition of $\Delta(\BFr_0,\eta)$ from~\cite[Theorem 9]{Panov2015}, that is $\Delta(\BFr_0,\eta)=\left\{\delta(\BFr_0) + 6 v_0z_{\sH}(\eta)\omega\right\}\BFr_0^2$, where $z_{\sH}(\eta):= 2\sqrt{d} + \sqrt{2\eta} + \BFg^{-1}(\BFg^{-2}\eta+1)4d$. Note that, we denote $\Delta(\BFr_0,\eta)$ as $\Delta_t(d,\eta)$ to explicitly show the dependence of $d$, $t$, and $\eta$. Since $\omega$ (from $(E\!D1)$) can be any positive number, we choose $\omega= \frac{\delta(\BFr_0)}{6 v_0z_{\sH}(\eta)}$. Therefore, $\Delta(\BFr_0,\eta)= 2\delta(\BFr_0) \BFr_0^2 \leq L \frac{\BFr_0^3}{\sT_2^{1/2}}$. Moreover, $\BFr_0^2 = C(\eta+d)$ for some known constant $C$~\cite[Section 5.2]{Spokoiny2012}. 
Note that $\|a_s\|= 1$ because $g$ is monotonic and the optimizer of $a^\top\theta$ on $a\in \R^d:\|a\|\leq 1$ is nothing but $\theta/\|\theta\|$, which lies on a d-dimensional sphere ($\gS^{d-1}$). 
 Observe that for some $\gamma^*\in \R^d \backslash\{0\}$, $\sT^{-1/2}:= \max_{s\in[t]} \sup_{\gamma \in \R^d} \frac{A''(a_s^\top \theta_0)^{1/2} |a_s^\top \gamma|}{\|\sD_0 \gamma\|} = \max_{s\in[t]}  \frac{A''(a_s^\top \theta_0)^{1/2} |a_s^\top \gamma^*|}{\|\sD_0 \gamma^*\|}  \leq \frac{1}{\sqrt{t}}\frac{\max_{s\in[t]} A''(a_s^\top \theta_0)^{1/2} |a_s^\top \gamma^*|}{\min_{s\in[t]} A''(a_s^\top \theta_0)^{1/2} |a_s^\top \gamma^*|}\leq \frac{1}{\sqrt{t}} \sqrt{\frac{C_g}{m}} \frac{\max_{s\in[t]}  |a_s^\top \gamma^*|}{\min_{s\in[t]:|a_s^\top \gamma^*|\neq 0}  |a_s^\top \gamma^*|} \leq \frac{1}{\sqrt{t}} \sqrt{\frac{C_g}{m}} \frac{  \|\gamma^*\|}{\min_{s\in[t]:|a_s^\top \gamma^*|\neq 0}  |a_s^\top \gamma^*|} $. Since, $\gamma^*\neq 0$ and $a_s\in \gS^{d-1}$, we can compute a $t$-independent lower bound on  $\min_{s\in[t]:|a_s^\top \gamma^*|\neq 0}  |a_s^\top \gamma^*| \geq {\min_{a\in \gS^{d-1}:|a^\top \gamma^*|\neq 0}  |a_s^\top \gamma^*|}$. Therefore, $\sT^{-1/2} =O(t^{-1/2})$.
By definition of $\sD_0^2$, observe that $\|\sD_0 \gamma^*\|^2= \gamma^\top \sD_0^2 \gamma $ \\ $=  \sum_{s=1}^{t}A''(a_s^\top \theta_0) |\gamma^\top a_s| |a_s^\top \gamma|\geq A''(a_s^\top \theta_0)  |a_s^\top \gamma|^2$. Choosing $t_1$ large enough and fixing $\eta=O(d)$, we have $\Delta_t(d,\eta)\leq 1$. 

%% file: SG.tex
\section{Verifying assumptions for the sub-Gaussian family}~\label{app:SG}

First, we show a general result for the sub-Gaussian family of distribution that implies Assumption~\ref{ass:Link}.

\begin{lemma}~\label{lem:SG}
Fix $\alpha\in (0,1)$. For any two sub-Gaussian measures $\mu$ and $\nu$ with sub-Gaussian parameter $\sigma_{\mu}$ and $\sigma_{\nu}$ respectively, such that $\nu$ is absolutely continuous wrt $\mu$, the $\alpha-$R\'enyi divergence can be bounded below by the absolute difference between the respective means. In particular, for any random variable $X$ having measure $\mu$ and $\nu$, we have  
   \(     |\E_{\nu}[X]-\E_{\mu}[X]  | \leq \sqrt{(\sigma_{\mu}^2\alpha+\sigma_{\nu}^2(1-\alpha))} \sqrt{\frac{2}{\alpha} D_{\alpha}(\nu\|\mu) }.\)
\end{lemma}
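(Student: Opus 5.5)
The plan is to go through the Donsker--Varadhan (Gibbs) variational formula for R\'enyi divergence rather than compute $D_\alpha$ explicitly. Recall that for $\alpha\in(0,1)$,
\[
(1-\alpha)D_{\alpha}(\nu\|\mu) \;=\; \inf_{\rho}\big\{\alpha\,\mathrm{KL}(\rho\|\nu) + (1-\alpha)\,\mathrm{KL}(\rho\|\mu)\big\},
\]
where the infimum runs over probability measures $\rho$ absolutely continuous with respect to both; this is the variational representation of van Erven and Harremo\"es \citep{van2014renyi}. Absolute continuity of $\nu$ with respect to $\mu$ guarantees that the relevant measures are well defined and that the geometric mixture $\rho^\ast\propto \nu^{\alpha}\mu^{1-\alpha}$ attains the infimum. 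Fix that $\rho^\ast$ and write $m_\nu=\E_\nu[X]$, $m_\mu=\E_\mu[X]$, $m_\rho=\E_{\rho^\ast}[X]$.

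Next we bound each KL term from below using the sub-Gaussian hypothesis. For any $s\in\R$, the Donsker--Varadhan inequality gives
\[
\mathrm{KL}(\rho\|\nu)\ge s\,\E_\rho[X-m_\nu]-\log\E_\nu e^{s(X-m_\nu)}\ge s(m_\rho-m_\nu)-\sigma_\nu^2s^2/2 .
\]
Optimizing in $s$ yields the transportation inequality $\mathrm{KL}(\rho\|\nu)\ge (m_\rho-m_\nu)^2/(2\sigma_\nu^2)$, and likewise $\mathrm{KL}(\rho\|\mu)\ge (m_\rho-m_\mu)^2/(2\sigma_\mu^2)$. Plugging these into the variational formula gives
\[
(1-\alpha)D_\alpha(\nu\|\mu)\ \ge\ \inf_{x\in\R}\Big\{\frac{\alpha (x-m_\nu)^2}{2\sigma_\nu^2}+\frac{(1-\alpha)(x-m_\mu)^2}{2\sigma_\mu^2}\Big\}.
\]

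It remains to minimize this one-dimensional quadratic. The minimum equals $\frac{(m_\nu-m_\mu)^2}{2}\big(\frac{\sigma_\nu^2}{\alpha}+\frac{\sigma_\mu^2}{1-\alpha}\big)^{-1}$, which simplifies to
\[
\frac{\alpha(1-\alpha)(m_\nu-m_\mu)^2}{2\big((1-\alpha)\sigma_\nu^2+\alpha\sigma_\mu^2\big)} .
\]
Dividing by $(1-\alpha)$ and rearranging gives $|m_\nu-m_\mu|\le\sqrt{\alpha\sigma_\mu^2+(1-\alpha)\sigma_\nu^2}\,\sqrt{\tfrac{2}{\alpha}D_\alpha(\nu\|\mu)}$, which is exactly the claim, including the precise pairing of the weights with the variance proxies.

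The step I expect to be the main obstacle is justifying the variational identity with the correct normalization and pairing. One has to check that the convention $D_\alpha(\nu\|\mu)=\frac{1}{\alpha-1}\log\int \nu^\alpha\mu^{1-\alpha}$ matches the weight $\alpha$ on $\mathrm{KL}(\cdot\|\nu)$, and that $\rho^\ast$ has finite first moment so that $m_\rho$ is defined; sub-Gaussianity of both measures plus H\"older's inequality should handle the latter. If the identity proves awkward, the fallback is to prove only the needed inequality directly. Since only the lower bound $D_\alpha\ge\cdots$ is required, it suffices to verify $\alpha\mathrm{KL}(\rho^\ast\|\nu)+(1-\alpha)\mathrm{KL}(\rho^\ast\|\mu)=(1-\alpha)D_\alpha(\nu\|\mu)$ for the single choice $\rho^\ast$. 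That is a direct computation from $\log\frac{d\rho^\ast}{d\nu}$ and $\log\frac{d\rho^\ast}{d\mu}$: the cross terms cancel, leaving $-\log\int\nu^\alpha\mu^{1-\alpha}$.
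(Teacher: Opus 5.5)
Your proof is correct and gives exactly the stated constant, but it takes a different route from the paper. The paper never uses the variational formula. It writes $\int (d\nu/d\mu)^{\alpha}\,d\mu=\int (d\nu/d\mu)^{\alpha-1}\,d\nu$ and inserts the factor $e^{-(\alpha-1)X}e^{(\alpha-1)X}$. It then applies H\"older with exponents $1/\alpha$ and $1/(1-\alpha)$ with respect to the tilted measure $e^{(\alpha-1)X}d\nu$, and replaces $X$ by $sX$. This gives $D_\alpha(\nu\|\mu)\ge \frac{\alpha}{\alpha-1}\log\mathbb{E}_\nu e^{(\alpha-1)sX}-\log\mathbb{E}_\mu e^{\alpha sX}$ for every $s$. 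The sub-Gaussian MGF bounds then yield $D_\alpha(\nu\|\mu)\ge s\alpha(m_\nu-m_\mu)-\frac{s^2\alpha}{2}\big(\alpha\sigma_\mu^2+(1-\alpha)\sigma_\nu^2\big)$, and optimizing over $s$ finishes.

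The two arguments are dual forms of one computation. The paper's H\"older step is what you get by feeding your variational formula specific Donsker--Varadhan test functions: $f_\nu=-(1-\alpha)sX$ for $\mathrm{KL}(\rho\|\nu)$ and $f_\mu=\alpha sX$ for $\mathrm{KL}(\rho\|\mu)$. Since $\alpha f_\nu+(1-\alpha)f_\mu=0$, the dependence on $\rho$ cancels and no minimization over $\rho$ is needed. You instead optimize a separate tilt for each KL term and then minimize over $m_{\rho}$. Because everything is quadratic, both orders of optimization give the same constant.

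The paper's route is shorter and needs no facts about $\rho^\ast$. In particular it avoids the finiteness of the first moment of $\rho^\ast$, and the $\rho^\ast$-integrability of $\log(d\nu/d\mu)$ that your cross-term cancellation silently uses. That integrability does hold: the negative parts of both KL integrands are always integrable, and this forces both tails of $\log(d\nu/d\mu)$ to be $\rho^\ast$-integrable.

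Your route is more conceptual. It exhibits the bound as an infimal convolution of the two sub-Gaussian transport inequalities, which explains the specific weighting $\alpha\sigma_\mu^2+(1-\alpha)\sigma_\nu^2$. It also carries over unchanged to any other transport-type lower bound on the KL divergence.
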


\proof{Proof of Lemma~\ref{lem:SG}:}
    For any $\alpha\in (0,1)$, recall the definition of $\alpha$-R\'enyi divergence  $D_{\alpha}(\nu\|\mu)=   \log \int g^{\alpha}d\mu =\frac{1}{\alpha-1} {\log \int \left(\frac{d\nu}{d\mu}\right)^{\alpha}d\mu}   $, where $g\equiv \frac{d\nu}{d\mu}$. Now observe that
    \begin{align}
    \nonumber
       (\alpha-1) D_{\alpha}(\nu\| \mu) &= \log \E_{\nu}[ g^{\alpha-1} e^{-(\alpha-1)X} e^{(\alpha-1)X}  ] = \log \int (ge^{-X})^{\alpha-1}  e^{(\alpha-1)X} d\nu 
       \\
       \nonumber
       &\leq \log \left( \int   e^{(\alpha-1)X} d\nu \right)^{\alpha} \left(\int (ge^{-X})^{-1}  e^{(\alpha-1)X} d\nu \right)^{1-\alpha}
       \\
       \nonumber
       &= \log \left( \E_{\nu}[  e^{(\alpha-1)X}]^{\alpha} \E_{\nu}[g^{-1}  e^{\alpha X} ]^{1-\alpha} \right)
       \\
       \nonumber
       &= \log \left( \E_{\nu}[  e^{(\alpha-1)X}]^{\alpha} \E_{\mu}[ e^{\alpha X} ]^{1-\alpha} \right)
       \\
       &= \alpha \log \E_{\nu}[  e^{(\alpha-1)X}] + (1-\alpha) \log  \E_{\mu}[ e^{\alpha X} ],
    \end{align}
    where the first inequality follows from the H\"older's inequality wrt measure $e^{(\alpha-1)X} d\nu$.

    Now fix $X=s X$ for any $s\in \R$ (without loss of generality). Since, $\alpha\in(0,1)$, it follows from the inequality above that
    \begin{align}  
        \nonumber
        D_{\alpha}(\nu\|\mu)  &\geq \left[ \frac{\alpha}{\alpha-1}\log \E_{\nu} [e^{(\alpha-1)sX} ] - \log \E_{\mu}[ e^{\alpha sX} ] \right] 
        \\
        \nonumber
        &\geq \left[ \frac{\alpha}{\alpha-1}[s(\alpha-1)\E_{\nu}[X]+\sigma_{\nu}^2s^2(\alpha-1)^2/2] - [s\alpha\E_{\mu}[X]+\sigma_{\mu}^2s^2\alpha^2/2]\right] 
        \\
        &= \left[ s\alpha[\E_{\nu}[X]-\E_{\mu}[X]] - \frac{s^2\alpha}{2}[\sigma_{\nu}^2(1-\alpha)+\sigma_{\mu}^2\alpha]\right] ,
    \end{align}
    where the second inequality uses the fact that $\mu$ and $\nu$ are sub-Gaussian measures.
    Recall if $X\sim\mu$, is a sub-Gaussian random variable, then $\E_{\mu}[e^{sX}] \leq e^{s\E_{\mu}[X]+\sigma_{\mu}^2s^2/2}$ for all $s\in \R$.
    Now it follows from above that  $|\E_{\nu}[X]-\E_{\mu}[X]  | \leq \inf_{|s|} \frac{D_{\alpha}(\nu\|\mu)}{|s|\alpha} + \frac{|s|}{2}(\sigma_{\mu}^2\alpha+\sigma_{\nu}^2(1-\alpha))$.
\halmos{} \endproof

Next, we provide a technical result that bounds 2-R\'enyi divergence between two random variables that are modeled with additive noise.

\begin{lemma}~\label{lem:QuadApp}
    For a given $a\in \sA$ and any $\theta \in \Theta$, let $X = g(a^\top\theta) + \eta$, where $g(\cdot)$ is a known twice differentiable mapping and $\eta$ is a random variable with density $p(\cdot)$. Then for any $\e>0$, there exist a $\theta^*(\epsilon)$ in $B(\theta_0,\e)$, a convex ball centred at $\theta_0$ with radius $\e$, such that for any $\theta\in B(\theta_0,\e)$,
       \( D_2(\theta,\theta_0) = (\theta-\theta_0)^\top a^\top \gI(\theta^*(\e))a (\theta-\theta_0),\)
    where 
   $\gI(\theta ) = \E_{\theta} \left[ ( \nabla_{u} \log p(x-g(u)) ) ( \nabla_{u}\log p(x-g(u)) )^\top \Big |_{u = a^\top \theta}   \right] $ and $\gI(\theta_0)a a^\top$ is the Fisher information about $\theta_0$ that $X$ contains. 
 \end{lemma}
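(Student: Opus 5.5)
The plan is to compute $D_2(\theta,\theta_0)$ directly from its definition as a function of the scalar index $u=a^\top\theta$, and then expand to second order in $u$ around $u_0=a^\top\theta_0$. Write $p_\theta(x)=p(x-g(a^\top\theta))$, so that
\[
D_2(\theta,\theta_0)=\log\int \frac{p(x-g(u))^2}{p(x-g(u_0))}\,dx=:h(u),
\]
which depends on $\theta$ only through the scalar $u$. The function $h$ has three useful properties. First, $h(u_0)=\log\int p=0$. Second, $h'(u_0)=0$, because $\int\nabla_u p(x-g(u))\,dx=0$ at $u=u_0$; this uses translation invariance of Lebesgue measure, since $\int p(x-g(u))\,dx\equiv 1$. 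Third, $h\ge 0$, since Rényi divergence is nonnegative. It follows that $u_0$ is a minimizer of $h$ and the first-order term vanishes.

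The main step is a second-order Taylor (mean-value) expansion of $h$ along the segment from $\theta_0$ to $\theta$, which lies in the convex ball $B(\theta_0,\e)$. This gives
\[
h(u)=\tfrac12 h''(\tilde u)\,(u-u_0)^2=\tfrac12 h''(\tilde u)\,(\theta-\theta_0)^\top aa^\top(\theta-\theta_0),
\]
with $\tilde u=a^\top\tilde\theta$ for some $\tilde\theta$ on that segment. To identify $h''$ with the Fisher-type quantity, I would differentiate $h(u)=\log\int p_u^2/p_{u_0}$ twice. At $u=u_0$ the calculation reduces to
\[
h''(u_0)=2\int (\partial_u p_u)^2/p_{u_0}+2\int\partial_u^2p_u-0=2\gI(\theta_0),
\]
using $\int\partial_u^2 p_u=0$. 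Away from $u_0$, I would write $h''(\tilde u)$ in the form $2\gI(\theta^*)$ for a suitable $\theta^*$, absorbing the factor $\tfrac12$ so as to match the stated identity. This would be done by an intermediate-value argument: the map $\theta\mapsto \gI(\theta)$ restricted to the ball is continuous, and $h''/2$ evaluated on the segment lies between its extreme values on the ball. The resulting point is then called $\theta^*(\e)$.

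Justifying differentiation under the integral sign requires twice differentiability of $g$ and of $\log p$, together with an integrable dominating function for $(\partial_u p_u)^2/p_{u_0}$ uniformly over $u$ in a neighborhood. I would state this as a mild regularity assumption on the error density, matching the "additional regularity conditions on the error density" mentioned in Corollary~\ref{corr:GRB_SG}.

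I expect the main obstacle to be the identification step. The exact second derivative $h''(\tilde u)$ away from $u_0$ involves the tilted density $p_u^2/p_{u_0}$ rather than $p_u$ itself, so it is not literally $\gI$ evaluated at a point. The intermediate-value argument only closes if $h''/2$ on the ball lies within the range of $\gI$ over the ball, and this can fail in general. If it does, my fallback is to weaken the equality to the two-sided bound $D_2(\theta,\theta_0)\le \sup_{\vartheta\in B(\theta_0,\e)}\tfrac12|h''(a^\top\vartheta)|\,|a^\top(\theta-\theta_0)|^2$, with a constant that tends to $\gI(\theta_0)$ as $\e\to0$ by continuity. That weaker bound is all that Lemma~\ref{lem:SG_prior} needs in order to verify Assumption~\ref{ass:prior}, in the same way that \eqref{eq:GLM1} is used for the exponential family.
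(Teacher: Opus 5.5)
The flagged identification step is a genuine gap, and it cannot be closed, because the stated equality is false in general. Your route is the paper's route. You express $D_2(\theta,\theta_0)=\log\int p_\theta^2/p_{\theta_0}$, differentiate twice, use $\int\partial_u p_u=\int\partial_u^2 p_u=0$ to get curvature $2\mathcal{I}(\theta_0)$ at $\theta_0$, and Taylor-expand.

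The paper's proof passes over the same point. It reads the remainder off the Hessian at $\theta_0$ as $2(\theta-\theta_0)^\top\mathcal{I}(\theta^*)aa^\top(\theta-\theta_0)$. That drops the Taylor factor $\tfrac12$, so its last display disagrees with the statement by a factor $2$. It also silently equates the Hessian at an intermediate point with $2\mathcal{I}$ there, which is exactly the tilted-density issue you identify.

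To see the failure, take the identity link, which is the case Lemma~\ref{lem:SG_prior} needs. Then $\mathcal{I}(\theta)\equiv I:=\int (p')^2/p$ is constant, so your intermediate-value argument searches a one-point range. The lemma would then force $D_2=I\delta^2$ exactly, with $\delta=a^\top(\theta-\theta_0)$. Carrying your own expansion one order further, with $\ell=\log p$ and $\eta\sim p$, gives
\[
D_2(\theta,\theta_0)=\log\int\frac{p(x-\delta)^2}{p(x)}\,dx=I\delta^2-\tfrac12\,\mathbb{E}\big[\ell'(\eta)^3\big]\,\delta^3+O(\delta^4).
\]
So any noise whose score has a nonzero third moment violates the identity for every small $\delta\neq0$; this holds for generic skewed densities, sub-Gaussian ones included.

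For Gumbel noise $p(x)=\exp(-x-e^{-x})$ everything is explicit: $I=1$, while $D_2=2\delta-\log(2e^{\delta}-1)=\delta^2-\delta^3+O(\delta^4)$. Symmetric non-Gaussian noise generically fails at order $\delta^4$. Gaussian noise, where $D_2=\delta^2/\sigma^2=I\delta^2$, is where the statement holds exactly, and it confirms the statement's constant rather than the paper's factor $2$.

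There is also a quantifier slip when you write that the intermediate point ``is then called $\theta^*(\epsilon)$''. That point depends on $\theta$, whereas the statement asks for one $\theta^*(\epsilon)$ serving the whole ball. That would force $D_2/\delta^2$ to be constant on the ball, which is false even for Gaussian noise with a non-affine link.

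The right conclusion is your last paragraph, promoted from fallback to the result. From $h(u_0)=h'(u_0)=0$ and the Lagrange form of the remainder,
\[
D_2(\theta,\theta_0)\le\tilde C\,\big(a^\top(\theta-\theta_0)\big)^2,\qquad \tilde C=\sup_{\vartheta\in B(\theta_0,\epsilon)}\tfrac12|h''(a^\top\vartheta)|.
\]
This constant is finite under your domination assumptions and tends to $\mathcal{I}(\theta_0)$ as $\epsilon\to0$. Summed over $s$ by additivity, this inequality is all that Lemma~\ref{lem:SG_prior} uses. The one change needed there is to redefine $\tilde C$ as this supremum of the tilted-density curvature, not as a bound on $\mathcal{I}(\theta)$ over $B(\theta_0,1)$.
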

 \proof{Proof:}
     Recall from the definition of the $2-$R\'enyi divergence that, for a given $a \in \sA$,
     \begin{align}
     \nonumber
         { D_2(\theta,\theta_0)} &= \log   \int p_{\theta}(x|a)^2p_{\theta_0}(x|a)^{-1}dx   
         \\ 
         &= \log \E_{\theta_0} \left[ \frac{p_{\theta}(x|a)^2}{p_{\theta_0}(x|a)^2} \right] = \log \E_{\theta_0} \left[ \frac{p(x-g(a^\top \theta))^2}{p(x-g(a^\top \theta_0))^2} \right].
     \end{align}
     For brevity, we denote $\dot{f}$ and $\ddot{f}$ as the first and second derivative of $f$ for $f=\{g,p\}$. It is straightforward to compute the gradient of $D_2$ with respect to $\theta$ as
     \begin{align}
         \nabla_{\theta} D_2 = - \left( \E_{\theta_0} \left[ \frac{p(x-g(a^\top \theta))^2}{p(x-g(a^\top \theta_0))^2} \right]\right)^{-1}   \E_{\theta_0} \left[  \frac{2 p(x-g(a^\top \theta))\dot{p}(x-g(a^\top \theta))\dot{g}(a^\top \theta) a^\top }{p(x-g(a^\top \theta_0))^2}  \right]. 
     \end{align}
     Similarly, the Hessian of $D_2$ is computed as, 
     \begin{align}
         \nonumber
         \nabla_{\theta}^2 D_2 &=  -\left( \E_{\theta_0} \left[ \frac{p(x-g(a^\top \theta))^2}{p(x-g(a^\top \theta_0))^2} \right]\right)^{-2}   \left(\E_{\theta_0} \left[  \frac{2 p(x-g(a^\top \theta))\dot{p}(x-g(a^\top \theta))\dot{g}(a^\top \theta)  }{p(x-g(a^\top \theta_0))^2}  \right]\right)^2 a a^\top 
         \\
         \nonumber
         &+ \left( \E_{\theta_0} \left[ \frac{p(x-g(a^\top \theta))^2}{p(x-g(a^\top \theta_0))^2} \right] \right)^{-1} 
         \\
         &  \E_{\theta_0} \Bigg[ \frac{
         \begin{matrix}
         2 p(x-g(a^\top \theta))\ddot{p}(x-g(a^\top \theta)) \dot{g}(a^\top \theta)^2 + 2 \left( \dot{p}(x-g(a^\top \theta)) \dot{g}(a^\top \theta) \right)^2 \\ 
         - 2 p(x-g(a^\top \theta))\dot{p}(x-g(a^\top \theta))\ddot{g}(a^\top \theta) 
         \end{matrix}}{p(x-g(a^\top \theta_0))^2}  \Bigg] a a^\top.
     \end{align}
      When we evaluate the above Hessian at $\theta=\theta_0$ using the fact that $\nabla_{\theta}\E_{\theta_0}[\log p(x-g(a^\top\theta))] \Big|_{\theta=\theta_0} = 0$ and $\E_{\theta_0} \left[ \frac{2  \nabla_{\theta}^2 {p}(x-g(a^\top \theta))  }{p(x-g(a^\top \theta_0))} \right] = \nabla_{\theta}^2 \int p_{\theta}(x|a)dx = 0$, then 
     \begin{align}
         \nabla_{\theta}^2 D_2 \Bigg| _{\theta= \theta_0} &=   \E_{\theta_0} \left[  2  ( \nabla_{u} \log p(x-g(u)) ) ( \nabla_{u}\log p(x-g(u)) )^\top \Big |_{u = a^\top \theta_0}   \right] a a^\top 
          =     2 \gI(\theta_0)a a^\top.
     \end{align}
     Now using second-order Taylor's theorem, it follows that for any $\e>0$, there exists a $\theta^*(\epsilon)$ in $B(\theta_0,\e)$ such that for any $\theta\in B(\theta_0,\e)$,
       \(  D_2(\theta,\theta_0) =  2 (\theta-\theta_0)^\top a^\top \gI(\theta^*(\e))a(\theta-\theta_0).\)
 \halmos{} \endproof
 
Our next result shows that the sub-Gaussian family satisfies Assumption~\ref{ass:prior}.
\begin{lemma}~\label{lem:SG_prior}
    Under Assumption~\ref{ass:action}, the sub-Gaussian family of distributions as defined in~Definition~\ref{ass:reward}, satisfies Assumption~\ref{ass:prior} for $\e_t^2= \frac{4d \tilde{C}\log(t)}{D\alpha  C_{\phi}t} $ and sufficiently large $t$ ($t\geq \log(t)/C_{\phi}$), where $\tilde{C} \geq \gI(\theta)$ for any $\theta\in B(\theta_0,1)$ and $C_{\phi} = \min_{i\in[d], x\in B(\theta_{0}^i,1)}\phi_i (x)$,  $B(\theta_{0}^i,u)\subset \R$ is a ball of radius $u$ centered at $\theta_{0}^i\in \Theta$ and $\phi_i(\cdot)$ is the density of $\Pi^i_t$.
\end{lemma}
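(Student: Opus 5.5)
The plan mirrors the proof of Lemma~\ref{lem:Exp_prior}, with Lemma~\ref{lem:QuadApp} playing the role of the second-order mean value bound used there. The argument has three steps: bound $D_2^{(t)}$ quadratically near $\theta_0$, lower bound the prior mass of a Euclidean ball, and match that mass to the exponential rate in Assumption~\ref{ass:prior}.

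\textbf{Step 1: quadratic bound on $D_2^{(t)}$.} By additivity,
\[
D_2^{(t)}(\theta_0,\theta)=\sum_{s=1}^t D_2^s(\theta_0,\theta).
\]
For each summand I apply Lemma~\ref{lem:QuadApp} with the identity link, $X=a_s^\top\theta+\eta_s$, on the ball $B(\theta_0,1)$. Here I use the symmetric version for $D_2(\theta_0,\theta)$; the same Taylor argument applies with the roles swapped. This gives $\theta^*\in B(\theta_0,1)$ such that
\[
D_2^s(\theta_0,\theta)\le 2\,\gI(\theta^*)\,\big(a_s^\top(\theta-\theta_0)\big)^2 \le 2\tilde C\,\|\theta-\theta_0\|^2,
\]
using $\tilde C\ge \gI(\cdot)$ on $B(\theta_0,1)$ and $\|a_s\|\le1$ (Assumption~\ref{ass:action}). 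The factor $2$ can be absorbed into $\tilde C$, or equivalently into $D$. Summing over $s$ yields
\[
D_2^{(t)}(\theta_0,\theta)\le t\,\tilde C\,\|\theta-\theta_0\|^2 \quad \text{for all } \theta\in B(\theta_0,1).
\]

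\textbf{Step 2: reduce to a product of one-dimensional prior masses.} By Step 1,
\[
\Pi\big(B(\theta_0,\e_t)\big)\ \ge\ \Pi\Big(\|\theta-\theta_0\|^2\le \tfrac{D\alpha}{4\tilde C}\e_t^2\Big).
\]
This inclusion is valid provided the radius $\tfrac{D\alpha}{4\tilde C}\e_t^2\le1$, so that we stay inside the ball where Lemma~\ref{lem:QuadApp} applies. The Euclidean ball contains the cube with coordinate half-widths $C\e_t$, where $C^2=\tfrac{D\alpha}{4d\tilde C}$. Treating the prior as a product measure $\prod_i\Pi^i$, as the paper does implicitly in Lemma~\ref{lem:Exp_prior}, the mass of this cube is at least $\prod_i \Pi^i\big(|\theta^i-\theta_0^i|\le C\e_t\big)$.

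\textbf{Step 3: choose $\e_t$ and match the exponential rate.} I take $\e_t^2=\log t/(C^2 t C_\phi)$, which is exactly the stated expression $\tfrac{4d\tilde C\log t}{D\alpha C_\phi t}$. Then $C\e_t=\sqrt{\log t/(tC_\phi)}\le1$, and this is precisely where the stated requirement $t\ge \log t/C_\phi$ enters. Since $C\e_t\le 1$, the density $\phi_i$ is at least $C_\phi$ on $B(\theta_0^i,C\e_t)$, so
\[
\Pi^i\big(|\theta^i-\theta_0^i|\le C\e_t\big)\ \ge\ 2C_\phi C\e_t\ \gtrsim\ \sqrt{\log t/t}\ \ge\ t^{-1}.
\]
Taking the product over the $d$ coordinates gives
\[
\Pi\big(B(\theta_0,\e_t)\big)\ \ge\ t^{-d}=\exp\Big(-\tfrac{D\alpha C_\phi}{4\tilde C}\,t\e_t^2\Big),
\]
which is at least $e^{-D\alpha t\e_t^2/4}$ provided $C_\phi\le\tilde C$. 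If instead $C_\phi>\tilde C$, the ratio appears as a constant, handled by the same caveat as in Lemma~\ref{lem:Exp_prior}. Finally, $t\e_t^2=\tfrac{4d\tilde C\log t}{D\alpha C_\phi}\ge 2.4/\alpha$ for $t$ large enough, which fixes $t_0$.

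\textbf{Main obstacle.} The delicate point is Step 1. Lemma~\ref{lem:QuadApp} is an exact Taylor identity with $\theta^*$ depending on $\theta$, and its orientation is $D_2(\theta,\theta_0)$ rather than $D_2(\theta_0,\theta)$. I will need to argue the uniform upper bound $\sup_{B(\theta_0,1)}\gI\le\tilde C$ for the required orientation, which needs enough smoothness of the error density. This is the ``additional regularity condition on the error density'' mentioned in Corollary~\ref{corr:GRB_SG}, and I would state it explicitly there.
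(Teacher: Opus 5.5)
Your proposal is correct and follows essentially the same route as the paper's proof. It uses the quadratic bound on $D_2^{(t)}$ from Lemma~\ref{lem:QuadApp}, reduces to a product of one-dimensional prior masses on a cube inside the Euclidean ball, and then makes the same choice of $\epsilon_t$ and rate-matching (including the $C_\phi\le\tilde{C}$ caveat) as in Lemma~\ref{lem:Exp_prior}. If anything, you are more careful than the paper's write-up on three points: the orientation $D_2^{(t)}(\theta_0,\theta)$, the factor $2$, and the containment condition $\frac{D\alpha}{4\tilde{C}}\epsilon_t^2\le 1$, which in fact needs $t\ge d\log(t)/C_\phi$, slightly more than the stated threshold.
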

\proof{Proof of Lemma~\ref{lem:SG_prior}:}
        Using Lemma~\ref{lem:QuadApp}, there exist a $\theta^*(1) \in B(\theta_0,1)$ (defined later), a convex ball centred at $\theta_0$ with radius $1$, such that for any $\theta\in B( \theta_0,1)$, 
    \begin{align}
        \nonumber
        D^{(t)}_{2}(\theta, \theta_0) &=  \log \int p_{\theta}^{(t)}(r^{(t)}|a^{(t)})^{2} p_{0}^{(t)}(r^{(t)}|a^{(t)})^{-1} d\mu^{(t)}
        \\
        \nonumber
        &= \sum_{s=1}^{t}  \log \int p_{\theta}(r_s| \gF_{t-1},a_s)^{2} p_{0}(r_s| \gF_{s-1},a_s)^{-1} d\mu
        \\
        &\leq  \tilde C(\theta-\theta_0)^\top \left(  \sum_{s=1}^{t}  a_s a_s^\top \right) (\theta-\theta_0)
        \leq t  \tilde C  \|\theta-\theta_0\|^2.
        \label{eq:SG1}
    \end{align}
    where in the first inequality $\tilde C$ is the bound on $\gI(\theta)$ for any $\theta\in B(\theta_0,1)$ and the last inequality follows due to CS and~Assumption~\ref{ass:action}. Now observe for $B(\theta_0,1)= \{\theta \in \Theta : \|\theta-\theta_0)\| \leq 1 \}$ that
\begin{align}
    \nonumber
    \Pi \left(D^{(t)}_{2}(\pmb \theta, \pmb \theta_0) \leq \frac{D\alpha}{4}t\e_t^2  \right) &\geq \Pi \left( B(\theta_0,1) ,D^{(t)}_{2}(\theta, \theta_0) \leq \frac{D\alpha}{4}t\e_t^2  \right)
    \\
    \nonumber
    &\geq \Pi \left(  B(\theta_0,1), \|\theta-\theta_0\|^2  \leq \frac{D\alpha}{4\tilde {C}}\e_t^2  \right)
    \\
    \nonumber
    &=\Pi \left( (\theta-\theta_0)^\top (\theta-\theta_0) \leq \frac{D\alpha }{4 \tilde {C}}\e_t^2  \right)
    \\
    &\geq \prod_{i=1}^{d} \Pi^i\left( (\theta^i-\theta_0^i)^2 \leq \frac{D \alpha}{4d \tilde {C} }\e_T^2  \right),
    \label{eq:LB0}
\end{align}
where the second inequality follows from~\eqref{eq:SG1} and the first equality is due to the assumption that $\frac{D\alpha } {4 \tilde {C}d } \e_t^2<1$ (for sufficiently large $t$). Now the result follows for $\e_t^2 = \frac{4d\tilde{C} \log(t)}{D\alpha t C_{\phi}}$.

\halmos{} \endproof

To show that the sub-Gaussian family satisfies Assumption~\ref{ass:Spokoiny}, we first write down various expressions used in their definition for this family.
The conditional log-likelihood of generating $X_t|a^{(t)}$ as $\sL(\theta) = \log \prod_{s=1}^{t} \left[p_{\eta}(r_s-a_s^\top\theta)\right] $, where $p_{\eta}$ is the density of the sub-Gaussian error with sub-Gaussian parameter $1$. Also, denote $\log p_{\eta} := h_{\eta}$.
The stochastic part of the conditional log-likelihood is denoted as $\zeta (\theta):= \sL(\theta) - \E[\sL(\theta)|a^{(t)}]$ and $\nabla \zeta (\theta) = \nabla \sL(\theta) - \E[\nabla \sL(\theta)|a^{(t)}]= \sum_{s=1}^{t} -h_{\eta}'(r_s-a_s^\top\theta)a_s^\top +\E[ h_{\eta}'(r_s-a_s^\top\theta)a_s^\top|a^{(t)}] $ and therefore $\E[\nabla \zeta(\theta_0)|a^{(t)}]=- \sum_{s=1}^{t} h_{\eta}'(r_s-a_s^\top\theta)a_s^\top$ (since $\E[\nabla \sL(\theta_0)|a^{(t)}]=0$). 
We assume that $h_\eta$ is twice continuously differentiable and let
${\BFh}^2:= -\int h_\eta''(z)p_{\eta}(z)dz <\infty$.
Also, 
\begin{align}
\sD_0^2 = \sD_0^{2}(\theta_0)
     = 
    - \nabla^{2} \E[ \sL(\theta_0)|a^{(t)}] = -\sum_{s=1}^{t} \E[h_{\eta}''(r_s-a_s^\top\theta_0)|a^{(t)}]a_s a_s^\top = {\BFh}^2 \sum_{s=1}^{t} a_s a_s^\top.
    \end{align}
Similarly, $\sD_0^{2}(\theta) = -\sum_{s=1}^{t} \E[h_{\eta}''(r_s-a_s^\top\theta)|a^{(t)}]a_s a_s^\top = -\sum_{s=1}^{t} \E[h_{\eta}''(\eta + a_s^\top(\theta_0-\theta))|a^{(t)}]a_s a_s^\top $.
 Henceforth, use $\E_a$ in place of the conditional expectation $\E[\cdot|a^{(t)}]$.   

Next, we assume two other conditions on the error density $p_{\eta}$.
\begin{assumption}~\label{ass:AddSG}
\begin{enumerate}
    \item There exists some constant $v_0$ and ${\BFg}_1>0$, such that a random variable $\eta \sim p_\eta$, it holds that
    \(    \log \E \exp(\mu h'_{\eta}(\eta)/ {\BFh}) \leq v_0^2\mu^2/2, \quad  |\mu|<{\BFg}_1.\)
    \item There exists some constant $v_0$ and for every ${\BFr}>0$, there exists ${\BFg}_1({\BFr})>0$, such that for all $\delta$ with $|\delta|<\sT_2^{-1/2}{\BFr}/\vartheta_s$ it holds that
    \(   \log \E \left[\exp
    \left(\frac{\mu}{\vartheta_s^2} \left\{h''_{\eta}(\eta_s+\delta) -\E[h''_{\eta}(\eta_s+\delta)] \right\}\right) \right] \leq v_0^2\mu^2/2, \quad  |\mu|<{\BFg}_1({\BFr}),
    \)
    where $\vartheta_s$ is known value (given $a^{(t)}$) and $\sT_2^{-1/2}:= \max_s \sup_{\gamma \in \R^d} \frac{\vartheta_s |a_s^\top \gamma|}{\|\sD_0\gamma\|}$.
\end{enumerate}
    
\end{assumption}

The above assumption essentially requires that the error distribution has an exponentially decaying tail. Since we assume that the error distribution is sub-Gaussian due to Assumption~\ref{ass:reward}, the above condition is automatically satisfied.

\begin{lemma}~\label{lem:SpokSG}
    The sub-Gaussian family of distributions (Definition~\ref{ass:reward}) satisfies Assumption~\ref{ass:Spokoiny}. The condition $(ED_0)$ is satisfied with ${\BFg}:= {\BFg}_1 \sT^{1/2}$, where $\sT^{-1/2}:= \max_{s\in[t]} \sup_{\gamma \in \R^d} \frac{A''(a_s^\top \theta_0)^{1/2} |a_s^\top \gamma|}{\|\sD_0 \gamma\|} $ and condition $(ED_1)$ can be satisfied for any $\omega>0$ and ${\BFg}>0$. Condition $\gL_0$ follows for $\delta({\BFr})= L \sT_2^{-1/2} {\BFr}$, where $\sT^{-1/2}:= \max_{s\in[t]} \sup_{\gamma \in \R^d} \frac{A''(a_s^\top \theta_0)^{1/2} |a_s^\top \gamma|}{\|\sD_0 \gamma\|} $. 
\end{lemma}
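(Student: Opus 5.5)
The plan is to mirror the proof of Lemma~\ref{lem:SpokExp} line by line. The role of the log-partition curvature $A''(a_s^\top\theta_0)$ is played by the per-observation Fisher weight of the additive-noise model, which is ${\BFh}^2$. With this identification $\sD_0^2={\BFh}^2\sum_{s=1}^t a_sa_s^\top$ has exactly the form $\sum_s A''(a_s^\top\theta_0)a_sa_s^\top$ used there. Consequently the quantity $\sT^{-1/2}=\max_{s\in[t]}\sup_{\gamma}A''(a_s^\top\theta_0)^{1/2}|a_s^\top\gamma|/\|\sD_0\gamma\|$ in the statement is the same object as in the exponential-family case. I will then verify $(E\!D_0)$, $(E\!D_1)$, $(\gL_0)$ and $(\gL\BFr)$ in that order.

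\emph{Condition $(E\!D_0)$.} I write $\langle\nabla\zeta(\theta_0),\gamma\rangle=-\sum_s h_\eta'(\eta_s)\,a_s^\top\gamma$ and peel off the last factor by the tower property over $\gF_{s-1}$, exactly as in \eqref{eq:ED0}. For a single factor I apply part~1 of Assumption~\ref{ass:AddSG} with $\mu=\BFm\,{\BFh}\,a_s^\top\gamma/\|\sD_0\gamma\|$. The definition of $\sT$ gives $|\mu|\le|\BFm|\sT^{-1/2}$, so the choice ${\BFg}={\BFg}_1\sT^{1/2}$ keeps $|\mu|<{\BFg}_1$. Multiplying the resulting bounds and using $\sum_s{\BFh}^2|a_s^\top\gamma|^2=\|\sD_0\gamma\|^2$ yields the bound $\nu_0^2\BFm^2/2$ with $\nu_0=v_0$, uniformly in $\gamma$.

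\emph{Condition $(E\!D_1)$.} Here $\nabla^2\zeta(\theta)=\sum_s\{h''_\eta(\eta_s+a_s^\top(\theta_0-\theta))-\E h''_\eta(\cdots)\}a_sa_s^\top$. For the claim ``any $\omega>0$'' as stated I would use that, for the sub-Gaussian family of Definition~\ref{ass:reward} with the Gaussian working likelihood, $h''_\eta$ is constant. Then $\nabla^2\zeta\equiv0$ and the condition is trivial for every $\omega,{\BFg}$, as in the exponential case. This is the step I expect to be the main obstacle. For a genuinely non-quadratic $h_\eta$ the stochastic Hessian does not vanish. In that case the fallback is part~2 of Assumption~\ref{ass:AddSG} with $\vartheta_s$ chosen as above, which only gives $(E\!D_1)$ for $\omega\gtrsim\sT_2^{-1/2}$. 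I would therefore state explicitly that the ``any $\omega$'' version relies on the quadratic working likelihood.

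\emph{Condition $(\gL_0)$.} Write $\sD_0^{-1}(\sD_0^2(\theta)-\sD_0^2)\sD_0^{-1}=\sum_s\big(\E h''_\eta(\eta)-\E h''_\eta(\eta+a_s^\top(\theta_0-\theta))\big)\sD_0^{-1}a_sa_s^\top\sD_0^{-1}$. Assuming $u\mapsto\E h''_\eta(\eta+u)$ is $L$-Lipschitz (the analogue of condition~(ii)(b) in Definition~\ref{ass:rewardExp}), each coefficient is bounded by $L|a_s^\top(\theta-\theta_0)|\le L\,\sT_2^{-1/2}{\BFh}^2\BFr$. The remaining sum then collapses to $\sD_0^{-1}\sD_0^2\sD_0^{-1}=I_d$, giving $\delta(\BFr)=L\sT_2^{-1/2}\BFr$.

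\emph{Condition $(\gL\BFr)$.} I would Taylor-expand $-\E_a\sL(\theta,\theta_0)=\|\sD_0(\theta^*)(\theta-\theta_0)\|^2$ around $\theta_0$, using that the score has mean zero. I then argue as at the end of Lemma~\ref{lem:SpokExp} that the ratio to $\BFr^2$ defines an admissible ${\BFb}(\BFr)$; strict log-concavity of $p_\eta$ keeps this ratio bounded away from zero.
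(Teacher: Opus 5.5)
Your treatment of $(E\!D_0)$, $(\gL_0)$ and $(\gL\BFr)$ is the paper's argument. The paper also lets $\BFh$ play the role of $A''(a_s^\top\theta_0)^{1/2}$, so that $\sD_0^2=\BFh^2\sum_s a_sa_s^\top$, and reruns the exponential-family computations. Your normalization of $\sT_2$ in $(\gL_0)$ is, if anything, the more consistent one.

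The one real divergence is $(E\!D_1)$, and there the paper does \emph{not} use $\nabla^2\zeta\equiv 0$. It applies part~2 of Assumption~\ref{ass:AddSG}, bounds the log-moment by $v_0^2\BFm^2 t/(2\omega^2\sT_2^2)$, and then fixes $\omega=\sqrt{t}/\sT_2$. That is essentially your fallback. So the ``any $\omega>0$'' clause, inherited verbatim from Lemma~\ref{lem:SpokExp}, is not established by the paper's proof either. Your diagnosis is correct. The normalized entry $X=\gamma_1^\top\nabla^2\zeta(\theta)\gamma_2/(\|\sD_0\gamma_1\|\|\sD_0\gamma_2\|)$ is centred, so letting $\BFm\to0$ in $(E\!D_1)$ forces $\mathrm{Var}(X)\le\nu_0^2\omega^2$. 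Since only the product $\nu_0\omega$ enters $\Delta(\BFr_0,\eta)$, making it arbitrarily small requires $X\equiv 0$.

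What you should tighten is that your write-up mixes two likelihoods.

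\emph{Which likelihood.} In the paper, $\sL(\theta)=\sum_s\log p_\eta(r_s-a_s^\top\theta)$ is built from the true error density. So ``$h_\eta''$ constant'' means Gaussian noise, not the whole family of Definition~\ref{ass:reward}. For that model, what you (and the paper) actually prove is $(E\!D_1)$ only for $\omega$ of order $\sT_2^{-1/2}$. Consequently, the later step that chooses $\omega=\delta(\BFr_0)/(6v_0z_{\sH}(\eta))$ to get $\Delta_t(d,\eta)\le 1$ is unavailable. The $\omega$-term must instead be absorbed by taking $t_1$ large, using $\omega\to0$ as $\sT_2$ grows.

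\emph{The quasi-likelihood alternative.} You could instead commit to a Gaussian quasi-likelihood for all sub-Gaussian noise, which also matches the LinTS reading of $\alpha$-TS. Then every condition is immediate and $(E\!D_1)$ genuinely holds for every $\omega$:
$(E\!D_0)$ holds with $\nu_0=1$ directly from Definition~\ref{ass:reward}(ii), $\delta\equiv 0$, and ${\BFb}\equiv 1/2$.
But the posterior is then no longer the paper's. The identities $\E[e^{-\alpha\ell_t}\mid a^{(t)}]=e^{-(1-\alpha)D_\alpha^{(t)}(\theta,\theta_0)}$ and $\E[e^{\ell_t}\mid a^{(t)}]=e^{D_2^{(t)}(\theta_0,\theta)}$ used in Lemma~\ref{lem:post} would have to be replaced by the corresponding sub-Gaussian moment-generating-function inequalities. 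These do go through with variance proxy at most $1$.

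\emph{The $(\gL\BFr)$ step.} Log-concavity of $p_\eta$ is not among the hypotheses. Moreover, strict log-concavity only gives $\E[-h_\eta''(\eta+u)]>0$ pointwise, not a lower bound uniform in $u$. So ``ratio bounded away from zero'' needs strong log-concavity, or a direct lower bound on $\E[h_\eta(\eta)-h_\eta(\eta-u)]$ from the sub-Gaussian tails. The paper itself merely asserts that a suitable ${\BFb}(\BFr)$ exists.
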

\proof{Proof:}
    We derive all the conditions in seriatim. The proofs are adapted from~\cite{Panov2015}.
  \begin{enumerate}
    \item Proof for \({(E\!D_{0})} \):
        Using the definition of $\nabla \zeta(\theta_0)$ and $\sD_0$, observe that
        \begin{align}
            \E_a \exp\left\{
              {\BFm} \frac{\langle \nabla \zeta(\theta_0),\gamma \rangle}
              {\| \sD_0 \gamma \|}
              \right\} &= \E_a \exp\left\{-
              {\BFm} \frac{ \sum_{s=1}^{t} h_{\eta}'(r_s-a_s^\top\theta) a_s^\top\gamma}
              {\| \sD_0 \gamma \|}
              \right\}
              \\
              &= \E_a \exp\left\{ \frac{ -
              {\BFm} {\BFh}\sum_{s=1}^{t}  a_s^\top\gamma}
              {\| \sD_0 \gamma \|} \frac{h_{\eta}'(\eta)}{{\BFh}}
              \right\}.
        \end{align}
        
        Define $\sT^{-1/2}:= \max_s \sup_{\gamma\in \R^d} \frac{{\BFh}|a_s^\top \gamma|}{\| \sD_0 \gamma \|} \geq \frac{{\BFh}|a_s^\top \gamma|}{\| \sD_0 \gamma \|}  $ and ${\BFg}= {\BFg}_1\sT^{1/2}$. Consequently, for $\mu=\frac{-
              {\BFm} {\BFh}  a_s^\top\gamma}
              {\| \sD_0 \gamma \|}$, $|\mu|\leq |{\BFm}|\frac{
               {\BFh}  |a_s^\top\gamma|}
              {\| \sD_0 \gamma \|}<{\BFg} \sT^{1/2} ={\BFg_1}$. Therefore, using Assumption~\ref{ass:AddSG}, the result follows as
              \begin{align}
                  \E_a \exp\left\{ \frac{ -
              {\BFm} {\BFh}\sum_{s=1}^{t}  a_s^\top\gamma}
              {\| \sD_0 \gamma \|} \frac{h_{\eta}'(\eta)}{{\BFh}}.
              \right\} \leq \exp\left(\frac{v_0^2 {\BFm}^2}{2}\frac{
               {\BFh}^2  \sum_{s=1}^t |a_s^\top\gamma|^2}
              {\| \sD_0 \gamma \|^2}\right) = \exp\left(\frac{v_0^2 {\BFm}^2}{2}\right).
              \end{align}
    \item Proof for \( {(E\!D_{1})} \):
        Using the definition of $\zeta(\theta)$, observe that
        \begin{align}
        \nonumber
            \E_a &\exp\left\{
        \frac{{\BFm}}{\omega} \frac{\gamma_1^{\top}\nabla^2 \zeta(\theta)\gamma_2 }{ \|\sD_0 \gamma_1\|\|\sD_0 \gamma_2\|}
        \right\} 
        \\
        \nonumber
        &=\E_a \exp\left\{
        \frac{{\BFm}}{\omega} \frac{ \sum_{s=1}^{t} \left( h_{\eta}''(r_s-a_s^\top\theta) -\E_a[ h_{\eta}''(r_s-a_s^\top\theta)] \right)\gamma_1^\top a_s a_s^\top \gamma_2 }{ \|\sD_0 \gamma_1\|\|\sD_0 \gamma_2\|}
        \right\} 
        \\
        &=\E_a \exp\left\{
        \frac{{\BFm}}{\omega} \frac{ \sum_{s=1}^{t} \left( h_{\eta}''(\eta_s-a_s^\top(\theta-\theta_0)) -\E_a[ h_{\eta}''(\eta_s-a_s^\top(\theta-\theta_0))] \right)\gamma_1^\top a_s a_s^\top \gamma_2 }{ \|\sD_0 \gamma_1\|\|\sD_0 \gamma_2\|}
        \right\}
        .
        \end{align}
        Using the definition of $\sT_2$ for $\theta\in \Theta_0({\BFr})$, observe that $\sT_2^{-1/2} \geq \frac{\vartheta_s|a_s^\top(\theta-\theta_0)|}{\|\sD_0(\theta-\theta_0)\|}$. Therefore, $|a_s^\top(\theta-\theta_0)|\leq \sT_2^{-1/2}\frac{\|\sD_0(\theta-\theta_0)\|}{\vartheta_s} \leq \sT_2^{-1/2}\frac{{\BFr}}{\vartheta_s}$. For $\mu = \frac{{\BFm}\vartheta_s^2 \gamma_1^\top a_s a_s^\top \gamma_2 }{\omega \|\sD_0 \gamma_1\|\|\sD_0 \gamma_2\| }$, observe that $|\mu| \leq \frac{|m|}{\omega}\frac{\vartheta_s|a_s^\top\gamma_1|}{\|\sD_0 \gamma_1\|}\frac{\vartheta_s|a_s^\top\gamma_2|}{\|\sD_0 \gamma_2\|} \leq \frac{{\BFg}(r)}{\omega}\sT^{-1}:= {\BFg}_1({\BFr})$. It follows from Assumption~\ref{ass:AddSG}(2), that
        \begin{align}
            \E_a \exp\left\{
        \frac{{\BFm}}{\omega} \frac{\gamma_1^{\top}\nabla^2 \zeta(\theta)\gamma_2 }{ \|\sD_0 \gamma_1\|\|\sD_0 \gamma_2\|}
        \right\}  \leq \exp\left(\sum_{s=1}^t v_0^2 \frac{{\BFm}^2\vartheta_s^4 |\gamma_1^\top a_s|^2 |a_s^\top \gamma_2|^2 }{2\omega^2 \|\sD_0 \gamma_1\|^2\|\sD_0 \gamma_2\|^2 } \right) \leq \exp\left( \frac{v_0^2{\BFm}^2}{2\omega^2} \frac{t}{\sT_2^2} \right).
        \end{align}
        By fixing $\omega=\frac{\sqrt{t}}{\sT_2}$, the result follows.
    \item Proof for \({(\gL_0)} \):
      For $I_d= \sD_0^{-1} \sD_0^2\sD_0^{-1}$, we have by the definition of the operator norm
      \begin{align}
      \nonumber
          &\|\sD_0^{-1} (\sD_0^2(\theta)-\sD_0^2)\sD_0^{-1}\|
          \\
          \nonumber
          &=\sup_{\gamma\in \R^d:\|\gamma\|=1} |\gamma^\top  \sD_0^{-1} (\sD_0^2(\theta)-\sD_0^2)\sD_0^{-1} \gamma|
          \\
          \nonumber
          &= \sup_{\gamma\in \R^d:\|\gamma\|=1} \left|\sum_{s=1}^{t} ( -\E[h_{\eta}''(\eta + a_s^\top(\theta_0-\theta))|a^{(t)}] + \E[h_{\eta}''(\eta)|a^{(t)}]) \gamma^\top  \sD_0^{-1}a_s a_s^\top\sD_0^{-1} \gamma\right|
          \end{align}
          \begin{align}
          \nonumber
          &\leq \sup_{\gamma\in \R^d:\|\gamma\|=1} \sum_{s=1}^{t} \E\left[\left|h_{\eta}''(\eta) -h_{\eta}''(\eta + a_s^\top(\theta_0-\theta))\right| |a^{(t)}\right]  \gamma^\top  \sD_0^{-1}a_s a_s^\top\sD_0^{-1} \gamma
          \\
          &\leq \sup_{\gamma\in \R^d:\|\gamma\|=1} \sum_{s=1}^{t} L \left|a_s^\top (\theta -  \theta_0)\right| \gamma^\top  \sD_0^{-1}a_s a_s^\top\sD_0^{-1} \gamma.
      \end{align}
      Define $\sT_1^{-1/2} : = \max_s \sup_{\gamma \in \R^d} \frac{{\BFh}|a_s^\top \gamma|}{\|\sD_0\gamma\|} \geq \frac{{\BFh}|a_s^\top (\theta-\theta_0)|}{\|\sD_0(\theta-\theta_0)\|}  $ and observe that
      \begin{align}
      \nonumber
          \|\sD_0^{-1} (\sD_0^2(\theta)-\sD_0^2)\sD_0^{-1}\|
          &\leq \sup_{\gamma\in \R^d:\|\gamma\|=1} \sum_{s=1}^{t} L \left|a_s^\top (\theta -  \theta_0)\right| \gamma^\top  \sD_0^{-1}a_s a_s^\top\sD_0^{-1} \gamma
          \\
          \nonumber
          &\leq 
          \frac{L \sT_1^{-1/2}}{{\BFh}} \|\sD_0(\theta-\theta_0)\| \sup_{\gamma\in \R^d:\|\gamma\|=1}  \gamma^\top  \sD_0^{-1} \sum_{s=1}^{t} {\BFh}^2 a_s a_s^\top\sD_0^{-1} \gamma 
          \\
          &= \frac{L}{\sT_1^{1/2}{\BFh}} {\BFr}.
      \end{align}
      Consequently, $\delta({\BFr})= L {\BFh}^{-1} \sT_2^{-1/2} {\BFr}. $

    \item Proof for \( {(\gL{{\BFr}})} \):
          Note that 
          \begin{align}
          \nonumber
              \E_a[\sL(\theta,\theta_0)]&= \E_a[\sL(\theta)-\sL(\theta_0)]  
              \\
              \nonumber
              =\sum_{s=1}^{t}\E_a&\left[\log\left(\frac{p_{\theta}(r_s|a_s)}{p_{\theta_0}(r_s|\gF_{s-1},a_s)}\right)\right]
              \\
              \nonumber
              = \sum_{s=1}^{t} \E_a&\left[ h_{\eta}(r_s-a_s^\top \theta)- h_{\eta}(r_s-a_s^\top \theta_0)\right] 
              \\
              \nonumber
              =  \sum_{s=1}^{t} \E_a&\left[ -h_{\eta}'(r_s-a_s^\top \theta_0)a_s^\top (\theta-\theta_0) + h_{\eta}''(r_s-a_s^\top \theta^*)(\theta-\theta_0)^\top a_s a_s^\top(\theta-\theta_0) \right]
              \\
              = \sum_{s=1}^{t} \E_a&[h_{\eta}''(r_s-a_s^\top \theta^*)](\theta-\theta_0)a_s a_s^\top(\theta-\theta_0) = -\|\sD_0(\theta^*)(\theta-\theta_0)\|^2,
          \end{align}
          where the third equality uses second-order Taylor's theorem for a $\theta^*$ that lies between $\theta$ and $\theta_0$ and the penultimate inequality uses the fact that $\sum_{s=1}^{t} \E_a\left[ -h_{\eta}'(r_s-a_s^\top \theta_0)a_s^\top\right]= \nabla\E_a[\sL(\theta_0)]=0$.

          Now observe that $\frac{- \E_a[\sL(\theta,\theta_0)] }{{\BFr}^2}= \frac{\|\sD_0(\theta^*)(\theta-\theta_0)\|^2}{\|\sD_0(\theta-\theta_0)\|^2}  $. Note that, by definition $\theta^*$ is a mapping from ${\BFr}$ and so is $\frac{\|\sD_0(\theta^*)(\theta-\theta_0)\|^2}{\|\sD_0(\theta-\theta_0)\|^2}$. Therefore, we can always find a mapping ${\BFb}({\BFr})$ such that  ${\BFr}{\BFb}({\BFr}) \to \infty$  as ${\BFr} \to \infty$ and $\frac{- \E_a[\sL(\theta,\theta_0)] }{{\BFr}^2}> {\BFr}^2 {\BFb}({\BFr})$.  \halmos{} 
  \end{enumerate} 
\endproof
  $\Delta_t(d,\eta)<1$ can be established using similar steps as used for exponential family models for ${\BFr}_0^2=C(d+\eta) $~\cite[Theorem 6]{Panov2015} for sufficiently large $t_1\geq1$ and $\eta=O(d)$. 

%% file: refs.bib
@inproceedings{kim2023double,
  title={Double doubly robust thompson sampling for generalized linear contextual bandits},
  author={Kim, Wonyoung and Lee, Kyungbok and Paik, Myunghee Cho},
  booktitle={Proceedings of the AAAI Conference on Artificial Intelligence},
  volume={37-7},
  pages={8300--8307},
  year={2023}
}

@misc{agrawal2014thompsonsamplingcontextualbandits,
      title={Thompson Sampling for Contextual Bandits with Linear Payoffs}, 
      author={Shipra Agrawal and Navin Goyal},
      year={2014},
      eprint={1209.3352},
      archivePrefix={arXiv},
      primaryClass={cs.LG},
      url={https://arxiv.org/abs/1209.3352}, 
}

@inproceedings{chakraborty2023thompson,
  title={Thompson sampling for high-dimensional sparse linear contextual bandits},
  author={Chakraborty, Sunrit and Roy, Saptarshi and Tewari, Ambuj},
  booktitle={International Conference on Machine Learning},
  pages={3979--4008},
  year={2023},
  organization={PMLR}
}

@article{rigollet2010nonparametric,
  title={Nonparametric Bandits with Covariates},
  author={Rigollet, Philippe and Zeevi, Assaf},
  journal={COLT 2010},
  pages={54},
  year={2010},
  publisher={Citeseer}
}

@article{greenewald2017action,
  title={Action centered contextual bandits},
  author={Greenewald, Kristjan and Tewari, Ambuj and Murphy, Susan and Klasnja, Predag},
  journal={Advances in neural information processing systems},
  volume={30},
  year={2017}
}

@article{kim2021multi,
  title={MULTI-ARMED BANDITS WITH COVARIATES},
  author={Kim, Dong Woo and Lai, Tze Leung and Xu, Huanzhong},
  journal={Statistica Sinica},
  volume={31},
  pages={2275--2287},
  year={2021},
  publisher={JSTOR}
}

@article{bastani2020online,
  title={Online decision making with high-dimensional covariates},
  author={Bastani, Hamsa and Bayati, Mohsen},
  journal={Operations Research},
  volume={68},
  number={1},
  pages={276--294},
  year={2020},
  publisher={INFORMS}
}

@article{Nelder1972,
  title = {Generalized Linear Models},
  volume = {135},
  ISSN = {0035-9238},
  url = {http://dx.doi.org/10.2307/2344614},
  DOI = {10.2307/2344614},
  number = {3},
  journal = {Journal of the Royal Statistical Society. Series A (General)},
  publisher = {JSTOR},
  author = {Nelder,  J. A. and Wedderburn,  R. W. M.},
  year = {1972},
  pages = {370}
}

@book{van2000asymptotic,
	author = {Van der Vaart, Aad W},
	publisher = {Cambridge university press},
	title = {Asymptotic statistics},
	volume = {3},
	year = {2000}}

@article{urteaga2018nonparametric,
  title={Nonparametric gaussian mixture models for the multi-armed contextual bandit},
  author={Urteaga, I{\~n}igo and Wiggins, Chris H},
  journal={stat},
  volume={1050},
  pages={8},
  year={2018}
}

@InProceedings{hong22b,
  title = 	 { Thompson Sampling with a Mixture Prior },
  author =       {Hong, Joey and Kveton, Branislav and Zaheer, Manzil and Ghavamzadeh, Mohammad and Boutilier, Craig},
  booktitle = 	 {Proceedings of The 25th International Conference on Artificial Intelligence and Statistics},
  pages = 	 {7565--7586},
  year = 	 {2022},
  editor = 	 {Camps-Valls, Gustau and Ruiz, Francisco J. R. and Valera, Isabel},
  volume = 	 {151},
  series = 	 {Proceedings of Machine Learning Research},
  month = 	 {28--30 Mar},
  publisher =    {PMLR},
  url = 	 {https://proceedings.mlr.press/v151/hong22b.html}
}

@article{abbasi2011improved,
  title={Improved algorithms for linear stochastic bandits},
  author={Abbasi-Yadkori, Yasin and P{\'a}l, D{\'a}vid and Szepesv{\'a}ri, Csaba},
  journal={Advances in neural information processing systems},
  volume={24},
  year={2011}
}

@misc{hamidi_frequentist_2023,
	title = {On {Frequentist} {Regret} of {Linear} {Thompson} {Sampling}},
	url = {http://arxiv.org/abs/2006.06790},
	language = {en},
	urldate = {2024-01-10},
	publisher = {arXiv},
	author = {Hamidi, Nima and Bayati, Mohsen},
	month = apr,
	year = {2023},
	note = {arXiv:2006.06790 [cs, stat]},
}

@misc{luo_geometry-aware_2023,
	title = {Geometry-{Aware} {Approaches} for {Balancing} {Performance} and {Theoretical} {Guarantees} in {Linear} {Bandits}},
	url = {http://arxiv.org/abs/2306.14872},
	language = {en},
	urldate = {2024-01-10},
	publisher = {arXiv},
	author = {Luo, Yuwei and Bayati, Mohsen},
	month = dec,
	year = {2023},
	note = {arXiv:2306.14872 [cs, stat]},
}

@inproceedings{abeille2017linear,
  title={Linear thompson sampling revisited},
  author={Abeille, Marc and Lazaric, Alessandro},
  booktitle={Artificial Intelligence and Statistics},
  pages={176--184},
  year={2017},
  organization={PMLR}
}

@article{Panov2015,
  doi = {10.1214/14-ba926},
  url = {https://doi.org/10.1214/14-ba926},
  year = {2015},
  month = sep,
  publisher = {Institute of Mathematical Statistics},
  volume = {10},
  number = {3},
  author = {Maxim Panov and Vladimir Spokoiny},
  title = {Finite Sample Bernstein {\textendash} von Mises Theorem for Semiparametric Problems},
  journal = {Bayesian Analysis}
}

@article{Spokoiny2012,
  doi = {10.1214/12-aos1054},
  url = {https://doi.org/10.1214/12-aos1054},
  year = {2012},
  month = dec,
  publisher = {Institute of Mathematical Statistics},
  volume = {40},
  number = {6},
  author = {Vladimir Spokoiny},
  title = {Parametric estimation. Finite sample theory},
  journal = {The Annals of Statistics}
}

@misc{carpentier_elliptical_2020,
	title = {The {Elliptical} {Potential} {Lemma} {Revisited}},
	url = {http://arxiv.org/abs/2010.10182},
	urldate = {2024-01-10},
	publisher = {arXiv},
	author = {Carpentier, Alexandra and Vernade, Claire and Abbasi-Yadkori, Yasin},
	month = oct,
	year = {2020},
	note = {arXiv:2010.10182 [cs, stat]},
}

@misc{jaiswal2022,
	doi = {10.XXXXX/ARXIV.1705.XXXXX},
	
	url = {https://arxiv.org/abs/1705.XXXXX},
	
	author = {Jaiswal, Prateek and Pati, Debdeep and  Bhattacharya, Anirban and Mallick, Bani },
	
	title = {Generalized Regret Analysis of Thomson Sampling with Fractional Posteriors},
	
	publisher = {arXiv},
	
	year = {2022},
	
	copyright = {arXiv.org perpetual, non-exclusive license}
}

@article{Zhang2021feel,
	title={Feel-good thompson sampling for contextual bandits and reinforcement learning},
	author={Zhang, Tong},
	journal={arXiv preprint arXiv:2110.00871},
	year={2021}
}

@article{Robins1952,
author = {Herbert Robbins},
title = {{Some aspects of the sequential design of experiments}},
volume = {58},
journal = {Bulletin of the American Mathematical Society},
number = {5},
publisher = {American Mathematical Society},
pages = {527 -- 535},
year = {1952},
doi = {bams/1183517370},
URL = {https://doi.org/}
}

@inproceedings{Dani2008,
  title={Stochastic Linear Optimization under Bandit Feedback},
  author={Varsha Dani and Thomas P. Hayes and Sham M. Kakade},
  booktitle={COLT},
  year={2008}
}

@article{ZG,
	doi = {10.1214/19-aos1883},
	url = {https://doi.org/10.1214/19-aos1883},
	year = {2020},
	month = aug,
	publisher = {Institute of Mathematical Statistics},
	volume = {48},
	number = {4},
	author = {Fengshuo Zhang and Chao Gao},
	title = {Convergence rates of variational posterior distributions},
	journal = {The Annals of Statistics}
}

@article{GGV,
	Author = {Subhashis Ghosal and Jayanta K. Ghosh and Aad W. van der Vaart},
	Issn = {00905364},
	Journal = {Ann. Statist.},
	Number = {2},
	Pages = {500--531},
	Publisher = {Institute of Mathematical Statistics},
	Title = {Convergence Rates of Posterior Distributions},
	Url = {http://www.jstor.org/stable/2674039},
	Volume = {28},
	Year = {2000}}

@inproceedings{li2012open,
	title={Open problem: Regret bounds for thompson sampling},
	author={Li, Lihong and Chapelle, Olivier},
	booktitle={Conference on Learning Theory},
	pages={43--1},
	year={2012},
	organization={JMLR Workshop and Conference Proceedings}
}

@article{chapelle2011empirical,
	title={An empirical evaluation of thompson sampling},
	author={Chapelle, Olivier and Li, Lihong},
	journal={Advances in neural information processing systems},
	volume={24},
	year={2011}
}

@article{Auer2002,
  title={Finite-time analysis of the multiarmed bandit problem},
  author={Auer, Peter and Cesa-Bianchi, Nicolo and Fischer, Paul},
  journal={Machine learning},
  volume={47},
  number={2},
  pages={235--256},
  year={2002},
  publisher={Springer}
}

@article{lai1985asymptotically,
	title={Asymptotically efficient adaptive allocation rules},
	author={Lai, Tze Leung and Robbins, Herbert and others},
	journal={Advances in applied mathematics},
	volume={6},
	number={1},
	pages={4--22},
	year={1985}
}

@inproceedings{agrawal2013thompsonLIN,
	title={Thompson sampling for contextual bandits with linear payoffs},
	author={Agrawal, Shipra and Goyal, Navin},
	booktitle={International conference on machine learning},
	pages={127--135},
	year={2013},
	organization={PMLR}
}

@article{THOMPSON1933,
	doi = {10.2307/2332286},
	url = {https://doi.org/10.2307/2332286},
	year = {1933},
	month = dec,
	publisher = {{JSTOR}},
	volume = {25},
	number = {3/4},
	pages = {285},
	author = {William R. Thompson},
	title = {On the Likelihood that One Unknown Probability Exceeds Another in View of the Evidence of Two Samples},
	journal = {Biometrika}
}

@article{bhattacharya2019bayesian,
	title={Bayesian fractional posteriors},
	author={Bhattacharya, Anirban and Pati, Debdeep and Yang, Yun},
	journal={The Annals of Statistics},
	volume={47},
	number={1},
	pages={39--66},
	year={2019},
	publisher={Institute of Mathematical Statistics}
}

@article{van2014renyi,
	title={R{\'e}nyi divergence and Kullback-Leibler divergence},
	author={Van Erven, Tim and Harremos, Peter},
	journal={IEEE Transactions on Information Theory},
	volume={60},
	number={7},
	pages={3797--3820},
	year={2014},
	publisher={IEEE}
}
